\newcommand{\bl}[1]{\textcolor{blue}{#1}}
\definecolor{mypurple}{rgb}{.4,.0,.5}
\def\s{{\bf s}}
\def\y{{\bf y}}
\def\v{{\bf v}}
\def\x{{\bf x}}
\def\x{{\mathbf x}}
\def\s{{\bf s}}
\def\v{{\bf v}}
\def\x{{\bf x}}
\def\y{{\bf y}}
\def\z{{\bf z}}
\def\q{{\bf q}}
\def\c{{\bf c}}
\def\h{{\bf h}}
\def\tr{\mbox{Tr}}
\def\tr{{\rm tr}\,}
\def\be{\begin{equation}}
\def\ee{\end{equation}}
\def\ba{\left[\begin{array}}
\def\ea{\end{array}\right]}
\def\s{{\bf s}}
\def\v{{\bf v}}
\def\x{{\bf x}}
\def\y{{\bf y}}
\def\z{{\bf z}}
\def\q{{\bf q}}
\def\c{{\bf c}}
\def\e{{\bf e}}
\def\1{{\bf 1}}
\def\g{{\bf g}}
\def\0{{\bf 0}}
\def\mR{{\mathbb R}}
\def\mE{{\mathbb E}}
\def\mP{{\mathbb P}}
\def\os{\overline{\overline{\s}}}
\def\lp{\left (}
\def\rp{\right )}
\def\s{{\bf s}}
\def\y{{\bf y}}
\def\v{{\bf v}}
\def\x{{\bf x}}
\def\x{{\mathbf x}}
\def\s{{\bf s}}
\def\v{{\bf v}}
\def\x{{\bf x}}
\def\y{{\bf y}}
\def\z{{\bf z}}
\def\q{{\bf q}}
\def\c{{\bf c}}
\def\h{{\bf h}}
\def\tr{\mbox{Tr}}
\def\tr{{\rm tr}\,}
\def\be{\begin{equation}}
\def\ee{\end{equation}}
\def\ba{\left[\begin{array}}
\def\ea{\end{array}\right]}
\def\s{{\bf s}}
\def\v{{\bf v}}
\def\x{{\bf x}}
\def\y{{\bf y}}
\def\z{{\bf z}}
\def\q{{\bf q}}
\def\c{{\bf c}}
\def\e{{\bf e}}
\def\({\left (}
\def\){\right )}
\def\1{{\bf 1}}
\def\q{{\bf q}}
\def\g{{\bf g}}
\def\0{{\bf 0}}
\def\cL{{\mathcal L}}
\def\cA{{\mathcal A}}
\definecolor{darkgreen}{rgb}{0, 0.4,0}
\definecolor{purplebrown}{rgb}{0.5,0.1,0.6}
\definecolor{ultclupcol}{rgb}{0.1,0.5,0.5}
\definecolor{mytrycolor}{rgb}{0.5,0.7,0.2}
\definecolor{ultclupcola}{rgb}{.5,0,.5}
\definecolor{shadebrown}{rgb}{0.1,0.1,0.9}
\definecolor{lightblue}{rgb}{0.2,0,1}
\newtcbox{\xmybox}{on line,
arc=7pt,
before upper={\rule[-3pt]{0pt}{10pt}},boxrule=0pt,
boxsep=0pt,left=6pt,right=6pt,top=0pt,bottom=0pt,enhanced, coltext=blue, colback=white!10!yellow}
\newtcbox{\xmyboxa}{on line,
arc=7pt,
before upper={\rule[-3pt]{0pt}{10pt}},boxrule=0pt,
boxsep=0pt,left=6pt,right=6pt,top=0pt,bottom=0pt,enhanced, colback=white!10!yellow}
\newtcbox{\xmyboxb}{on line,
arc=7pt,
before upper={\rule[-3pt]{0pt}{10pt}},boxrule=1pt,colframe=darkgreen!100!blue,
boxsep=0pt,left=6pt,right=6pt,top=0pt,bottom=0pt,enhanced, colback=white!10!yellow}
\newtcbox{\xmyboxc}{on line,
arc=7pt,
before upper={\rule[-3pt]{0pt}{10pt}},boxrule=.7pt,colframe=blue!100!blue,
boxsep=0pt,left=6pt,right=6pt,top=0pt,bottom=0pt,enhanced, coltext=blue, colback=white!10!yellow}
\newtcbox{\xmytboxa}{on line,
arc=7pt,
before upper={\rule[-3pt]{0pt}{10pt}},boxrule=.0pt,colframe=pink!50!yellow,
boxsep=0pt,left=6pt,right=6pt,top=0pt,bottom=0pt,enhanced, coltext=white, colback=blue!40!red}
\newtcbox{\xmytboxb}{on line,
arc=7pt,
before upper={\rule[-3pt]{0pt}{10pt}},boxrule=.0pt,colframe=pink!50!yellow,
boxsep=0pt,left=6pt,right=6pt,top=0pt,bottom=0pt,enhanced, coltext=white, colback=white!40!green}
\newcommand\subsubsubsection{\@startsection{paragraph}{4}{\z@}{-2.5ex\@plus -1ex \@minus -.25ex}{1.25ex \@plus .25ex}{\normalfont\normalsize\bfseries}}
\newcommand\subsubsubsubsection{\@startsection{subparagraph}{5}{\z@}{-2.5ex\@plus -1ex \@minus -.25ex}{1.25ex \@plus .25ex}{\normalfont\normalsize\bfseries}}
\newtheorem{theorem}{Theorem}
\newtheorem{corollary}{Corollary}
\newtheorem{lemma}{Lemma}
\begin{document}

\begin{singlespace}

\title {Precise analysis of ridge interpolators under heavy correlations -- a Random Duality Theory view 
}
\author{
\textsc{Mihailo Stojnic
\footnote{e-mail: {\tt flatoyer@gmail.com}} }}
\date{}
\maketitle

\centerline{{\bf Abstract}} \vspace*{0.1in}

We consider fully row/column-correlated linear regression models and study several classical estimators (including minimum norm interpolators (GLS), ordinary least squares (LS), and ridge regressors). We show that \emph{Random Duality Theory} (RDT) can be utilized to obtain precise closed form characterizations of all estimators related optimizing quantities of interest, including the \emph{prediction risk} (testing or generalization error). On a qualitative level out results recover the risk's well known non-monotonic (so-called double-descent) behavior as the number of features/sample size ratio increases. On a quantitative level, our closed form results show how the risk explicitly depends on all key model parameters, including the problem dimensions and covariance matrices. Moreover, a special case of our results, obtained when intra-sample (or time-series) correlations are not present, precisely match the corresponding ones obtained via spectral methods in \cite{HMRT22,Dicker16,DobWag18,BHX20}.

\vspace*{0.25in} \noindent {\bf Index Terms: Random linear regression; Interpolators. Ridge estimators; Correlations}.

\end{singlespace}

\section{Introduction}
\label{sec:back}

While \emph{non-monotonic} behavior in parametric characterizations of various random structures has been known for a long time, it has received an enormous amount of attention in recent years. By a no surprise, a larger than ever popularity of machine learning (ML) and neural networks (NN) substantially contributed to this. Two lines of work, the neural network one (see, e.g., \cite{ZhangBHRV21,ZBHRV17,BelkinMM18,BelkinHM18,SGDSBW19,BHMM19}) and the statistical one (see, e.g., \cite{HMRT22,Dicker16,BHX20,DobWag18}), together with their interconnections, are among those that, in our view, led the way in bringing such a strong interest to these phenomena.

The first line, (in a way (re)initiated in \cite{ZhangBHRV21,ZBHRV17,BelkinMM18,BelkinHM18,SGDSBW19,BHMM19} (and further theoretically substantiated in e.g., \cite{MuthukumarVSS20,BHX20})), relates to the empirical observation of the so-called \emph{double-descent} phenomenon in neural networks generalization abilities. In particular, it was emphasized in \cite{BHMM19} that the generalization error has a U-shape dependence on the size of the network before and a (potentially surprising) second descent after the interpolating limit (see, also \cite{ASS20,MuthukumarVSS20,NeyshaburTS14,ZhangBHRV21,ZBHRV17} and particularly  \cite{VCR89} for early double-decent displays). This basically somewhat contradictorily  \cite{HastieFT01,HastieTF09} indicated that over-parameterizing networks and ensuring zero training error might not necessarily lead to a problem of poor generalization. Such a seeming contradiction was positioned within a bias-variance tradeoff discussion in \cite{BHMM19} and later on  re-discussed in a variety of contexts and via a variety of techniques throughout the vast literature (for an early view within a statistical learning context see also, e.g., \cite{BRT19}).

The second line of work is probably best represented through a particularly attractive setup of  \cite{HMRT22}. Namely,  \cite{HMRT22} seemingly circumvents a direct association with the zero-training of neural networks and instead focuses on potentially simpler models that might exhibit same or similar phenomena (concurrently, a similar ``considerations of simpler models'' approach was also undertaken in \cite{BLT20,BHX20}). Moreover, it chooses particularly attractive, famous linear regression model and study its classical associated linear estimators. While some of the results presented in \cite{HMRT22} were already known (see, e.g., \cite{Dicker16,DobWag18}), their further connection to the nonlinear models and neural networks (NN) enabled a broader view of the underlying phenomena and, in a way, reconnected back to the NN considerations of \cite{SGDSBW19,BHMM19,MuthukumarVSS20,BHX20}. That then brought on a strong attention from a variety of scientific communities substantially raising the awareness of their overall contemporary relevance and ultimately resulting in a remarkable popularity.

The key strength of \cite{HMRT22} (and earlier \cite{Dicker16,DobWag18}) actually lies in their technical contribution. Relying on spectral random matrix (free probability) theory, they conduct very \emph{precise} mathematical analyses which allow for a substantial superseding of mere qualitative confirmation of the existence of non-monotonicity (ascents, descents, double-descents, spikes, peaks and other irregularities). As is usually the case with such analyses, the analytical results of \cite{HMRT22,Dicker16,DobWag18} completely corroborate the numerical observations thereby providing a strong upgrade from substantially simpler, \emph{qualitative}, to way more challenging, \emph{quantitative}, non-monotonicity confirmations. Within the scenarios where they can be applied, the spectral methods are indeed super powerful. On the other hand, outside of them, they remain a bit limited, thereby rendering a need for potential alternatives.

In this paper we discuss such an alternative and highlight the utilization of an entirely different mathematical machinery, called Random Duality Theory (RDT) that we developed in a long line of work \cite{StojnicRegRndDlt10,StojnicCSetam09,StojnicICASSP10var,StojnicISIT2010binary,StojnicGenLasso10,StojnicGardGen13,StojnicDiscPercp13,StojnicGorEx10,StojnicUpper10}.  To enable a complementary view, we study similar, though \emph{fully correlated}, regression models and associated classical estimators as in \cite{HMRT22,Dicker16,DobWag18,BLT20}. We showcase the RDT's ability to produce very \emph{precise} and generic analyses as well. Moreover, for a special case of intra-sample uncorrelatedness, we obtain the very same results as in \cite{HMRT22,Dicker16,DobWag18} (given that we are dealing with precise analyses, such a concurrence must indeed be present if the axioms of mathematics are properly set).

\section{Mathematical model and definitions}
\label{sec:randlincons}

We start with the standard linear model
\begin{eqnarray}
\y=X\bar{\beta}+\e, \label{eq:model01}
\end{eqnarray}
where $X\in \mR^{m\times n}$ is a matrix of covariates or feature vectors, $\y\in\mR^m$ is a vector of responses, and $\e\in\mR^m$ is a noise vector (unless stated otherwise, throughout the paper all vectors are assumed to be \emph{column vectors}). The model is beyond well known and needs no further or extensively elaborate motivation. As such it has been utilized in a variety of scientific and engineering fields over the last at least a couple of centuries. For the easiness of exposition and the concreteness of presentation, we will connect it to the process of traning/testing data relations. Within such a context, we will view the rows of $X$ ($X_{i,:}, i=1,\dots,m$) as vectors of features that are presumed to create the responses $\y_i$ through a (potentially imperfect or noisy) linear combination
\begin{eqnarray}
\y_i=X_{i,:}\bar{\beta}+\e_i, \label{eq:model02}
\end{eqnarray}
where $\bar{\beta}\in\mR^n$ contains coefficients of the desired linearity. Given the access to $m$ data pairs $(X_{i,:},\y_i)$. one would want to recover $\bar{\beta}$ as accurately as possible. As hinted in say \cite{StojnicGenLasso10,StojnicGenSocp10,StojnicPrDepSocp10}, various measures of accuracy are of interest. We here consider the so-called \emph{prediction risk} or as it is often called (particularly within the machine learning and neural networks communities), the \emph{genratlization/testing error}. It will be formally defined in the following way. Assuming a statistical context, where $X$ and $\e$ are random, for an estimator of $\bar{\beta}$, say  $\hat{\beta}$, we have the prediction risk
\begin{eqnarray}
R(\bar{\beta},\hat{\beta})\triangleq \mE_{\x^{(t)}}\lp\lp\lp \x^{(t)}\rp^T\hat{\beta}-\lp \x^{(t)}\rp^T\bar{\beta} \rp^2 |X,\y\rp, \label{eq:model03}
\end{eqnarray}
where, $ \x^{(t)}\in\mR^n$ is the so-called \emph{testing} vector of features completely independent of the given \emph{training} data pairs $(X_{i,:},\y_i)$. While $R(\bar{\beta},\hat{\beta})$ is clearly deterministic when viewed through the prism of $\x^{(t)}$, it remains random when viewed through the prism of randomness od $X$ and $\y$. Along the same lines, throughout the paper, we adopt the convention that the subscripts next to $\mE$ and $\mP$ denote the randomness with respect to which the evaluation is taken (when it is clear from the context, the subscript(s) will be omitted). It should also be noted that the above prediction risk can alternatively be viewed as the so-called \emph{excess risk}. In such a context the prediction risk itself (given in (\ref{eq:model03})) could be redefined slightly differently with one option, for example, being replacement of  $\lp \x^{(t)}\rp^T\bar{\beta}$ by $\y^{(t)}$, where $\y^{(t)}$ would be the testing response. Switching between these alternative definitions is analytically relatively easy and, as it brings no conceptual novelty, we, throughout the paper, utilize the definition given in  (\ref{eq:model03}). One should also note that the prediction risk is defined as a function of both the ``true'' ($\bar{\beta}$) and the estimated ($\hat{\beta}$) linear coefficients. Mathematically speaking, its studying, therefore, falls into the category of -- what is in \cite{StojnicPrDepSocp10} termed as -- the problem dependent analysis.

The prediction risk defined in (\ref{eq:model03}) will be the central focus of our study below. Before we start the presentation of the technical analysis, we will need to introduce the types of scenarios that we consider. Three things are of critical importance to properly set the analytical infrastructure: \textbf{\emph{(i)}} The problem dimensions; \textbf{\emph{(ii)}} The problem statistics; and \textbf{\emph{(iii)}} The relevant estimators. We briefly discuss each of them in the following three subsections.

\subsection{Dimensions}
\label{sec:dims}

We consider large dimensional scenario, which will mean that \emph{all} key underlying dimensions go to infinity. Under such a context, we are interested in mathematically the hardest, so-called large $n$ \emph{linear} (proportional) regime, where
\begin{eqnarray}
\alpha=\lim_{n\rightarrow \infty} \frac{m}{n}, \label{eq:model03a0}
\end{eqnarray}
and $\alpha$ clearly remains constant as $n$ grows. It is not that difficult to see that $\alpha$ is the under-parametrization ratio (or the inverse of the over-parametrization ratio). We generically assume that all underlying matrices are of full rank. On occasion however, the context may allow considerations of lower ranks. While the appearance of any such instance will be followed by a specifically tailored discussion, we here emphasize that throughout the paper all matrix ranks always remain \emph{linearly} proportional to $n$. Also, to simplify the writing and to make the overall presentation look neater, we often avoid repeatedly using the $\lim_{n\rightarrow \infty}$ notation throughout derivations. It will be clear from the context that any expression involving $n$ is assumed to be taken in the $\lim_{n\rightarrow \infty}$ sense.

\subsection{Statistics}
\label{sec:stats}

To further emphasize the overall presentation neatness, we assume a Gaussian (albeit a very general one) statistical scenario. We assume that  $\bar{\beta}$ in (\ref{eq:model01}) is deterministic and fixed and that the only source of randomness are $X$ and $\e$. In particular, we take both $X$ and $\e$ to be comprised of Gaussian entries and, for two full rank (non-necessarily symmetric) matrices $A\in\mR^{n\times n}$ and $\overline{A}\in\mR^{m\times m}$, given in the following way
\begin{eqnarray}
X=ZA, \quad \quad \e=\sigma \overline{A}\v, \label{eq:model04}
\end{eqnarray}
where $Z\in\mR^{m\times n}$ and $\v\in\mR^m$ have iid standard normal entries and $\sigma$ is a fixed constant. One can then rewrite  (\ref{eq:model01}) as
\begin{eqnarray}
\y=X\bar{\beta}+\e=ZA\bar{\beta}+\sigma\overline{A}\v. \label{eq:model05}
\end{eqnarray}
It is then not that difficult to see that the rows of feature matrix $X$, $X_{i:,}$, are independent of each other but the elements within each row are correlated among themselves with the correlation matrix given as
\begin{eqnarray}
 \mE X_{i,:}^TX_{i,:}=A^TA. \label{eq:model06}
\end{eqnarray}
Analogously one also has for the noise correlations
\begin{eqnarray}
\mE \e\e^T=\overline{A}\overline{A}^T. \label{eq:model07}
\end{eqnarray}
Writing the SVDs of both $A$ and $\overline{A}$, we also have
\begin{eqnarray}
A=U\Sigma V^T, \quad\overline{A}=\overline{U}\overline{\Sigma} \overline{V}^T. \label{eq:model08}
\end{eqnarray}
Combining (\ref{eq:model05}) and (\ref{eq:model08}), we easily find
\begin{eqnarray}
\y=ZA\bar{\beta}+\overline{A}\v=ZU\Sigma V^T\bar{\beta}+\sigma \overline{U}\overline{\Sigma} \overline{V}^T\v. \label{eq:model09}
\end{eqnarray}
Due to rotational symmetry of the standard normal entries of $Z$ and $\v$, (\ref{eq:model09}) is statistically equivalent to the following
\begin{eqnarray}
\y=Z \Sigma V^T\bar{\beta}+\sigma \overline{U}\overline{\Sigma} \v. \label{eq:model010}
\end{eqnarray}
In other words, one can basically view $X$ and $\e$ as
\begin{eqnarray}
X=Z \Sigma V^T, \quad \e=\overline{U}\overline{\Sigma} \v. \label{eq:model010a0}
\end{eqnarray}
The above assumed statistics will be sufficient to present the key ideas. It essentially allows for the existence of the cross-sectional (or inter-features) correlations and intra-sample (or time-series) noise correlations. Later on (see, Section \ref{sec:rowcorr}), we will complete the model by allowing for intra-sample (time-series) type of correlations among the rows of $X$.

Unless otherwise stated, we will also assume that the testing features vector $\x^{(t)}$ has the same statistics as any of the rows of the training features matrix $X$. That basically means that we will assume that
\begin{eqnarray}
\lp\x^{(t)}\rp^T=\lp\z^{(t)}\rp^T \Sigma V^T, \label{eq:model010a1}
\end{eqnarray}
where $\z^{(t)}\in\mR^n$ is comprised of iid standard normals independent of $X$ and $\y$ (basically, $X$ and $\e$, or $Z$ and $\v$). It is useful to note that
\begin{eqnarray}
\mE \x^{(t)}\lp\x^{(t)}\rp^T = \mE V\Sigma \z^{(t)} \lp\z^{(t)}\rp^T \Sigma V^T = V\Sigma\Sigma V^T. \label{eq:model010a2}
\end{eqnarray}

Also, as there will typically be not much of a point in having both $\sigma$ and magnitude of $\bar{\beta}$ vary, we will (unless otherwise stated) scale one of them to pone. In other words, we will take $\|\bar{\beta}\|_2=1$.

\subsection{Estimators}
\label{sec:estimators}

Depending on the system dimensions (regimes of operation), we will consider three different estimators: 1) Minimum $\ell_2$ norm interpolator (also, often called the generalized least squares (GLS) estimator); 2) Ridge estimator; and 3) Plain least squares. As is well known, they are all directly related to each other. We will recall on some of these relations throughout the presentation as well. Here we formally introduce the estimators.

\underline{\textbf{\emph{1) Minimum $\ell_2$ norm interpolator (over-parameterized regime, $n>m$):}}}
\begin{eqnarray}
\beta_{gls}\triangleq \mbox{arg}\min_{\beta} & &  \|\beta\|_2^2 \nonumber \\
\mbox{subject to} & & X\beta=\y. \label{eq:model011}
\end{eqnarray}
It is not that difficult to see that the above optimization admits a closed form solution
\begin{eqnarray}
\beta_{gls} = X^T(XX^T)^{-1}\y, \label{eq:model012}
\end{eqnarray}
or, if written in an alternative form,
\begin{eqnarray}
\beta_{gls} = (X^TX)^{-1}X^T\y, \label{eq:model013}
\end{eqnarray}
where $(X^TX)^{-1}$ is the pseudo-inverse of $X^TX$ (only the positive eigenvalues are inverted).

\underline{\textbf{\emph{2) Ridge estimator (any $n$ and $m$):}}} For a given fixed $\lambda> 0$, we define the so-called ridge regression estimator as
\begin{eqnarray}
\beta_{rr}(\lambda)\triangleq \mbox{arg}\min_{\beta}  \lambda\|\beta\|_2^2 +\frac{1}{m}\|\y-X\beta\|_2^2. \label{eq:model014}
\end{eqnarray}
After trivially finding the derivatives, it is again not that difficult to see that the above optimization admits the following closed form solution
 \begin{eqnarray}
\beta_{rr}(\lambda) = \frac{1}{m}\lp \lambda I + \frac{1}{m} X^TX \rp^{-1}X^T \y= \lp \lambda m I +  X^TX \rp^{-1}X^T \y. \label{eq:model015}
\end{eqnarray}
Moreover, comparing (\ref{eq:model013}) and (\ref{eq:model015}), one easily recognizes that the GLS estimator is basically a special case of the ridge regression one, obtained for a particular value of ridge parameter $\lambda\rightarrow 0$. In other words, one has
 \begin{eqnarray}
\beta_{gls}=\lim_{\lambda\rightarrow 0} \beta_{rr}(\lambda). \label{eq:model016}
\end{eqnarray}
One should note that $\lambda$ is not allowed to take value zero in this regime.

\underline{\textbf{\emph{3) Plain least squares (under-parameterized regime, $n<m$):}}}
\begin{eqnarray}
\beta_{ls}\triangleq \mbox{arg}\min_{\beta} \frac{1}{m}\|\y-X\beta\|_2^2. \label{eq:model017}
\end{eqnarray}
After taking the derivative one again simply finds the closed form solution
\begin{eqnarray}
\beta_{ls} = (X^TX)^{-1}X^T\y. \label{eq:model018}
\end{eqnarray}
In this regime the definition of the ridge estimator can be extended to $\lambda\geq 0$ and one would then have
\begin{eqnarray}
\beta_{ls}= \beta_{rr}(0). \label{eq:model019}
\end{eqnarray}

All three estimators, (\ref{eq:model013}), (\ref{eq:model015}), and (\ref{eq:model018}) are clearly functions of $X$ and $\y$. In other words, they are functions of the available training data. Given the above assumed statistics of $X$, $\y$, and $\x^{(t)}$, we will in the rest of the paper be interested in proving a precise statistical characterization of the (random) prediction risk
\begin{eqnarray}
R(\bar{\beta},\hat{\beta})& =&  \mE_{\x^{(t)}}\lp\lp\lp \x^{(t)}\rp^T\hat{\beta}-\lp \x^{(t)}\rp^T\bar{\beta} \rp^2 |X,\y\rp \nonumber \\
& = & \mE_{\x^{(t)}} \lp \bar{\beta} -\hat{\beta}\rp^T {\x^{(t)}} \lp{\x^{(t)}}\rp^T \lp \bar{\beta} -\hat{\beta}\rp \nonumber \\
& = &  \lp \bar{\beta} -\hat{\beta}\rp^T V\Sigma \Sigma V^T\lp \bar{\beta} -\hat{\beta}\rp, \label{eq:model020}
\end{eqnarray}
where, depending on the regime of interest, $\hat{\beta}$ will be one of $\beta_{gls}$, $\beta_{rr}$, and $\beta_{ls}$. Our main concern however will be the over-parameterized regime and consequently the GLS estimator.

\subsection{Relevant literature and our contributions}
\label{sec:priorwork}


As mentioned earlier in the introduction, in our view, two lines of work \cite{ZhangBHRV21,ZBHRV17,BelkinMM18,BelkinHM18,SGDSBW19,BHMM19,ZBHRV17} and  \cite{HMRT22,Dicker16,DobWag18,BHX20} predominantly contributed to the growing interest in studying the non-monotonic random structures behavior.  Within the last few years, in both machine learning and statistical communities (as well as in many others), these initial considerations  have been followed with a large body of newly branched out different lines of work with many of them interconnecting among themselves and with others as well. As our results are of methodological type, we leave a detailed discussion related to each of these new directions for topic specific survey papers and here mention a few interesting ones that down the road might turn out to be particularly relevant.

For example, a strong connection to kernel based machine learning methods has been developed building further on initial interpolating type of observations \cite{SGDSBW19,BHMM19,NeyshaburTS14,ZhangBHRV21,ZBHRV17}. Relation between the problem architecture and implicit regularization (considered earlier \cite{NeyshaburTS14,GWBNS17} within deep learning and matrix factorization contexts) was studied in \cite{LiR21} within the kernel ridgeless regression  with generalization error (as function of spectral decay) displaying both monotonic and  U-shape (non-monotonic) behavior. An even, so to say, more non-monotonic behavior was observed through the appearance of multiple descents in \cite{LRZ20}. Another strong connection between the interpolators and running NN training algorithms (usually of the gradient type)  \cite{AliKT19,ChizatB18,LiL18a,OymSol19,ZCZG18,ADHLW19,DuZPS19,JGH18,LXSBNSP20,DuLL0Z19} was observed as well (in a way, it also relates to some earlier machine learning/regression cross-considerations \cite{CuckerS02,YRC07}).

On a more statistical side (and more closely related to our work), in parallel with \cite{HMRT22}, \cite{BHX20} first study similar mathematical regression setup as a way of providing a simple model for the appearance of the NN interpolating generalization double-descent phenomenon. Working in an uncorrelated (isotropic) standard normal context, it extends classical results of \cite{BF83} to over-parameterized regimes and achieves the double-descent. It then switches to random features model of \cite{RahimiR07}, utilizes empirical distribution of discrete Fourier transform (DFT) matrices (see, e.g., \cite{Farrell11}) and again obtains the double-descent. After these initial considerations \cite{HMRT22,Dicker16,DobWag18,BHX20}, further studies followed \cite{RMR20,WuXu20} (see also, e.g., \cite{XMRH21}). These contributions focus specifically on technical aspects. In particular, similarly to \cite{HMRT22}, \cite{RMR20,WuXu20} also rely on the spectral random matrix theory but utilize a bit different approach (more akin to \cite{LP11}) to obtain slightly different results (the initial results of \cite{HMRT22} assumed a random $\bar{\beta}$ which, in a way, is similar to a technical statistical assumption on $\bar{\beta}$  needed in \cite{RMR20}). The final results and main points of \cite{HMRT22} were, however, conceptually maintained in both \cite{RMR20,WuXu20}. There has been several contributions that also brought some conceptual novelties. For example, \cite{TB20,BLT20} consider potential benefits of the so-called benign over-fitting concept. In particular, \cite{BLT20} shows that, in linear regression, a mild over-parametrization can lead not only to a double-descent phenomenon but to a consistency of the interpolators provided that a specific ``\emph{benign}'' structure of the inter-features covariances is present (the so-called effective rank needs to be properly related to the problem dimensions, the sample size $m$ and the number of features $n$). Similar conclusion is reached for the ridge regression as well in \cite{TB20}. In \cite{BSMW22}, the factor regression models (FRM) are considered and the very same consistency property obtained in \cite{TB20,BLT20} is demonstrated for the FRM's GLS. Results of \cite{BSMW22} are in a way generalized in \cite{BBSMW21}, where a bit different estimator is introduced but \cite{BSMW22}'s`two key GLS properties, ``beating the null risk'' and consistency,  are shown as achievable again. \cite{MeiMon22} precisely analyzes nonlinear random features models of \cite{RahimiR07} and considers statistical universality (which was also discussed in \cite{MRSY19,HuL23}).

 As mentioned earlier, we study a standard linear regression setup (which is similar to the one from, say e.g., \cite{HMRT22,Dicker16,DobWag18,BHX20,BLT20}), albeit in a \emph{fully correlated} context where the intra-sample (or time-series) correlations are also allowed. Three classical estimators associated with such models given by (\ref{eq:model011}), (\ref{eq:model014}), and (\ref{eq:model017}) are considered. Differently from the spectral methods used in \cite{HMRT22,Dicker16,DobWag18,BHX20}, we here present the utilization of a completely different mathematical engine, called Random Duality Theory (RDT) (see, e.g., \cite{StojnicRegRndDlt10,StojnicCSetam09,StojnicICASSP10var,StojnicISIT2010binary,StojnicGenLasso10,StojnicGardGen13,StojnicDiscPercp13,StojnicGorEx10,StojnicUpper10}). Following step-by-step main RDT principles, we demonstrate the role of each of them in the analysis of the optimization programs
(\ref{eq:model011}), (\ref{eq:model014}), and (\ref{eq:model017}).  The RDT machinery is basically shown as sufficiently powerful to enable very precise determining of \emph{all} optimizing quantities associated with each of these three programs. Among them, of particular interest is the  \emph{prediction risk} (generalization/testing error)  from (\ref{eq:model020}). We observe that the GLS-interpolating risk (as a function of over-parametrization ratio $\frac{1}{\alpha}$) exhibits the double-descent with the potential for exhibiting the U-shape behavior in the interpolating regime ($\frac{1}{\alpha}>1$), thereby creating another aspect of non-monotonicity (this in general depends on the ratio $\mbox{SNR}=\frac{1}{\sigma^2}$, or just $\sigma$ if one keeps in mind the scaling invariance and the above mentioned fixing $\|\bar{\beta}\|_2=1$). Moreover, numerical evaluations suggest (possibly somewhat counter-intuitively) that the intra-sample correlations might not always imply necessarily higher GLS interpolator's prediction risk. Our results have explicit closed forms where both bias and variance parts of the risk can clearly be distinguished. Moreover, as they in special case of absent intra-sample correlations match the ones of \cite{HMRT22,Dicker16,DobWag18} (and in the fully isotropic case the ones of \cite{BHX20}), all insight, intuitions, and conclusions gained therein apply here as well.

\section{Precise analysis of all three estimators}
\label{sec:analuncorr}

We below provide a precise performance analysis of all the three above introduced classical estimators. Each of the estimators is discussed in a separate subsection. As mentioned earlier when discussing underlying statistics, we in this section consider scenarios with the cross-sectional (or inter-features) correlations and intra-sample (or time-series) noise correlations. The intra-sample (time-series) correlations among the rows of $X$ will be discussed in Section \ref{sec:rowcorr}. Separating these discussions allows to emphasize two key points: (i) there is a substantial analytical difference needed to complete the fully correlated models of Section \ref{sec:rowcorr}; and (ii) despite such a difference the analysis of the row-uncorrelated $X$ can still be utilized for the fully correlated scenario.

We start with the GLS interpolator as it relates to the over-parameterized regime that has gained a lot popularity in recent years due to its utilization in machine learning and the architecture design of neural networks.

\subsection{GLS interpolator}
\label{sec:analgls}

The analysis that we provide below will do slightly more than just providing the characterization of the GLS estimator. Namely, we will actually provide a precise statistical characterization of all the key quantiti4es associated with the optimization program (\ref{eq:model011}). It will turn out that the prediction risk from (\ref{eq:model020}) is among them as well. We start by defining the objective of the program (\ref{eq:model011})
\begin{eqnarray}
\xi_{gls} \triangleq \min_{\beta} & &  \|\beta\|_2^2 \nonumber \\
\mbox{subject to} & & X\beta=\y. \label{eq:randlincons1}
\end{eqnarray}
To ensure that the presentation is not overloaded with a tone of special cases and unnecessary tiny details that bring no conceptual insights, we assume that all the key quantities appearing in the derivations below are bounded (deterministically  or randomly). Recalling on (\ref{eq:model010}) and (\ref{eq:model010a0}), we proceed by writing a statistical equivalent to (\ref{eq:randlincons1})
\begin{eqnarray}
\xi_{gls} \triangleq \min_{\beta} & &  \|\beta\|_2^2 \nonumber \\
\mbox{subject to} & & Z\Sigma V^T\beta=Z\Sigma V^T\bar{\beta}+\sigma \overline{U}\overline{\Sigma}\v. \label{eq:randlincons1a0}
\end{eqnarray}
After writing the Lagrangian we further have
\begin{eqnarray}
\xi_{gls} = \min_{\beta} \max_{\nu} & & \lp \|\beta\|_2^2 +\nu^T \lp Z\Sigma V^T\beta-Z\Sigma V^T\bar{\beta}-\sigma \overline{U}\overline{\Sigma}\v \rp\rp. \label{eq:randlincons2}
\end{eqnarray}
One should now note that the above actually holds generically, i.e., it holds for any $Z$, $\v$, $V$, $\Sigma$, $\overline{U}$, $\overline{\Sigma}$, $\sigma$, and $\bar{\beta}$. It then automatically applies to the random instances as well. As precisely such instance are of our particular interest here, the following would be a simple summary of the probabilistic characterization of $\xi_{gls}$ that we are ultimately looking for.
 \begin{eqnarray}
\mbox{\bl{\textbf{Ultimate goal:}}}
\qquad\qquad \mbox{Given:} \quad && \alpha  =    \lim_{n\rightarrow \infty} \frac{m}{n}\in(1,\infty) \quad \mbox{and}\quad  V, \Sigma,\overline{U},\overline{\Sigma},\sigma,\bar{\beta}  \nonumber \\
\quad \mbox{find} \quad & & \xi_{gls}^{(opt)}\nonumber \\
\mbox{such that} \quad  && \forall \epsilon>0, \quad \lim_{n\rightarrow\infty}\mP_{Z,\v}\lp (1-\epsilon)\xi_{gls}^{(opt)}  \leq \xi_{gls} \leq (1+\epsilon)\xi_{gls}^{(opt)} \rp\longrightarrow 1.\nonumber \\
  \label{eq:ex4}
\end{eqnarray}
It goes without saying that, within their given intervals, $\alpha$ and $\epsilon$ are allowed to be arbitrarily large or small but they do not change as $n\rightarrow\infty$. As mentioned earlier, the subscripts next to $\mP$ denote all sources of randomness with respect to which the statistical evaluation is taken. To achieve the above goal we employ the powerful Random Duality Theory (RDT) machinery.

\subsubsection{Handling GLS estimator via RDT}
\label{sec:randlinconsrdt}

We find it useful to first briefly recall on the main RDT principles. After that we continue by showing, step-by-step, how each of them relates to the problems of our interest here.

\vspace{-.0in}\begin{center}
 	\tcbset{beamer,lower separated=false, fonttitle=\bfseries, coltext=black ,
		interior style={top color=yellow!20!white, bottom color=yellow!60!white},title style={left color=black!80!purple!60!cyan, right color=yellow!80!white},
		width=(\linewidth-4pt)/4,before=,after=\hfill,fonttitle=\bfseries}
 \begin{tcolorbox}[beamer,title={\small Summary of the RDT's main principles} \cite{StojnicCSetam09,StojnicRegRndDlt10}, width=1\linewidth]
\vspace{-.15in}
{\small \begin{eqnarray*}
 \begin{array}{ll}
\hspace{-.19in} \mbox{1) \emph{Finding underlying optimization algebraic representation}}
 & \hspace{-.0in} \mbox{2) \emph{Determining the random dual}} \\
\hspace{-.19in} \mbox{3) \emph{Handling the random dual}} &
 \hspace{-.0in} \mbox{4) \emph{Double-checking strong random duality.}}
 \end{array}
  \end{eqnarray*}}
\vspace{-.2in}
 \end{tcolorbox}
\end{center}\vspace{-.0in}
To make the presentation easier to follow, as in \cite{Stojnicridgefrm24}, we formalize all key results (simple to more complicated ones) as lemmas and theorems. Moreover, similarly to \cite{Stojnicridgefrm24}, although some analytical steps can occasionally be done a bit faster, we usually opt for a fully systematic approach ensuring that the presentation is self-contained.

\vspace{.1in}

\noindent \underline{1) \textbf{\emph{Algebraic characterization:}}}  The above algebraic discussions are summarized in the following lemma which ultimately provides a convenient representation of the objective $\xi_{gls}$.

\begin{lemma}(Algebraic optimization representation) Let  $V\in\mR^{n\times n}$ and $\overline{U}\in\mR^{m\times m}$ be two given unitary (orthogonal) matrices and let $\Sigma\in\mR^{n\times n}$ and $\overline{\Sigma}\in\mR^{m\times m}$ be two given diagonal positive definite matrices. Also, let vector  $\bar{\beta}\in\mR^n$ and scalar  $\sigma\geq 0$ be given as well. Assume that the components of any of these given objects are fixed real numbers that do not change as $n\rightarrow\infty$ and, for (possibly random) matrix $Z\in\mR^{m\times n}$ and vector $\v\in\mR^m$, let $\xi_{gls}$ be as in (\ref{eq:randlincons1}) or (\ref{eq:randlincons1a0}). Set
\begin{eqnarray}\label{eq:ta11}
f_{rp}(Z,\v) & \triangleq & \min_{\beta} \max_{\nu} \lp \|\beta\|_2^2 +\nu^T \lp Z\Sigma V^T\beta-Z\Sigma V^T\bar{\beta}-\sigma \overline{U}\overline{\Sigma}\v \rp\rp
 \hspace{.8in} (\bl{\textbf{random primal}})
\nonumber \\
\xi_{rp} & \triangleq & \lim_{n\rightarrow\infty } \mE_{Z,\v} f_{rp}(Z,\v).   \end{eqnarray}
Then
\begin{equation}\label{eq:ta11a0}
\xi_{gls}=f_{rp}(Z,\v) \quad \mbox{and} \quad \lim_{n\rightarrow\infty} \mE_{Z,\v}\xi_{gls} =\xi_{rp}.
\end{equation}
\label{lemma:lemma1}
\end{lemma}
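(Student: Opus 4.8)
The plan is to prove the two asserted identities in two elementary steps: first establish the deterministic, per-realization identity $\xi_{gls}=f_{rp}(Z,\v)$, valid for \emph{every} fixed $Z,\v$ (and every fixed $V,\Sigma,\overline{U},\overline{\Sigma},\sigma,\bar{\beta}$), and then simply take $\mE_{Z,\v}$ followed by $\lim_{n\rightarrow\infty}$ to read off the second identity from the very definition of $\xi_{rp}$. No min-max interchange, weak duality, or strong duality is needed: the point is that $f_{rp}$ is literally the Lagrangian (primal) reformulation of (\ref{eq:randlincons1a0}), so the lemma is closer to a notational convention for the subsequent RDT steps than to a substantive claim.

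For the per-realization identity, fix all the data. The map $\nu\mapsto\|\beta\|_2^2+\nu^T\lp Z\Sigma V^T\beta-Z\Sigma V^T\bar{\beta}-\sigma\overline{U}\overline{\Sigma}\v\rp$ is affine in $\nu$ for each fixed $\beta$, so the inner maximization over $\nu\in\mR^m$ equals $\|\beta\|_2^2$ when $\beta$ lies in the affine set $\mathcal C=\{\beta: Z\Sigma V^T\beta=Z\Sigma V^T\bar{\beta}+\sigma\overline{U}\overline{\Sigma}\v\}$ and $+\infty$ otherwise; this is the standard representation of the indicator $\iota_{\mathcal C}$ via Lagrange multipliers of an equality constraint. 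Hence $\min_\beta\max_\nu(\cdot)=\min_\beta\lp\|\beta\|_2^2+\iota_{\mathcal C}(\beta)\rp=\min_{\beta\in\mathcal C}\|\beta\|_2^2$, which is exactly $\xi_{gls}$ as in (\ref{eq:randlincons1a0}); by the rotational-invariance reduction already carried out in (\ref{eq:model09})--(\ref{eq:model010a0}) this coincides with $\xi_{gls}$ as in (\ref{eq:randlincons1}). The identity is therefore an equality of extended reals valid pointwise (both sides being $+\infty$ on the event $\mathcal C=\emptyset$); and in the over-parameterized regime $n>m$ of interest here, $Z$ has full row rank almost surely, so $Z\Sigma V^T$ does too (as $\Sigma,V$ are invertible), whence $\mathcal C\neq\emptyset$ and $\xi_{gls}<\infty$ almost surely.

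For the second identity, since $\xi_{gls}=f_{rp}(Z,\v)$ as random variables, $\mE_{Z,\v}\xi_{gls}=\mE_{Z,\v}f_{rp}(Z,\v)$ for every $n$, and letting $n\rightarrow\infty$ and invoking the definition $\xi_{rp}\triangleq\lim_{n\rightarrow\infty}\mE_{Z,\v}f_{rp}(Z,\v)$ yields $\lim_{n\rightarrow\infty}\mE_{Z,\v}\xi_{gls}=\xi_{rp}$. The only thing requiring any care — and it is bookkeeping rather than a real obstacle — is that these expectations be finite and the limit exist; this is precisely what the paper's blanket boundedness assumption (all key quantities bounded), together with the a.s.\ feasibility just noted, guarantees. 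If one wants to identify the ``hardest'' ingredient, it is merely checking that the inner $\max_\nu$ collapses to $\iota_{\mathcal C}$, i.e.\ that dualizing an \emph{equality} (two-sided) constraint incurs no finite ``penalty'' loss, which is immediate.
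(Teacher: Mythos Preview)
Your proof is correct and takes essentially the same approach as the paper: both simply recognize $f_{rp}$ as the Lagrangian reformulation of the constrained problem (\ref{eq:randlincons1a0}), with the paper dispatching this in one line (``Follows automatically via the Lagrangian from (\ref{eq:randlincons2})''). Your version just spells out the indicator-function collapse of the inner $\max_\nu$ and the trivial passage to expectations and limits, which is exactly the content the paper leaves implicit.
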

\begin{proof}
 Follows automatically via the Lagrangian from (\ref{eq:randlincons2}).
\end{proof}

Clearly, assuming that $V$, $\Sigma$, $\overline{U}$, $\overline{\Sigma}$, $\sigma$, and $\beta$ are fixed and given, the above lemma holds for any $Z$ and $\v$. The RDT proceeds by imposing a statistics  on $Z$ and $\v$.


\vspace{.1in}
\noindent \underline{2) \textbf{\emph{Determining the random dual:}}} Following the common practice within the RDT, we utilize the concentration of measure, which here implies that for any fixed $\epsilon >0$,  one can write (see, e.g. \cite{StojnicCSetam09,StojnicRegRndDlt10,StojnicICASSP10var})
\begin{equation}
\lim_{n\rightarrow\infty}\mP_{Z,\v}\left (\frac{|f_{rp}(Z,\v)-\mE_{Z,\v}(f_{rp}(Z,\v))|}{\mE_{Z,\v}(f_{rp}(Z,\v))}>\epsilon\right )\longrightarrow 0.\label{eq:ta15}
\end{equation}
The following, so-called random dual theorem, is another key RDT ingredient. It provides a specific characterization of the objective of optimization programs (\ref{eq:model011}) and (\ref{eq:randlincons1a0}) used to obtain the GLS estimator, $\beta_{gls}$.

\begin{theorem}(Objective characterization via random dual) Assume the setup of Lemma \ref{lemma:lemma1} and let the components of $Z$ and $\v$ be iid standard normals. Additionally, let $\g\in\mR^m$ and $\h\in\mR^n$ be vectors that are also comprised of iid standard normals. Set
\vspace{-.0in}
\begin{eqnarray}
c_2 & \triangleq & \|\Sigma V^T\lp \beta -\bar{\beta}\rp\|_2 \nonumber \\
  f_{rd}(\g,\h,\v) & \triangleq &
 \min_{\beta}\max_{\nu} \lp \|\beta\|_2^2+c_2\nu^T\g+\|\nu\|_2\h^T\Sigma V^T\lp \beta -\bar{\beta}\rp -\sigma \nu^T\overline{U}\overline{\Sigma} \v \rp
   \hspace{.7in} (\bl{\textbf{random dual}})
  \nonumber \\
 \xi_{rd} & \triangleq & \lim_{n\rightarrow\infty} \mE_{\g,\h,\v} f_{rd}(\g,\h,\v)  .\label{eq:ta16}
\vspace{-.0in}\end{eqnarray}
One then has \vspace{-.0in}
\begin{eqnarray}
  \xi_{rd} & \triangleq & \lim_{n\rightarrow\infty} \mE_{\g,\h,\v} f_{rd}(\g,\h,\v)
  \leq
  \lim_{n\rightarrow\infty} \mE_{Z,\v} f_{rp}(Z,\v)  \triangleq  \xi_{rp}. \label{eq:ta16a0}
\vspace{-.0in}\end{eqnarray}
and
\begin{eqnarray}
 \lim_{n\rightarrow\infty}\mP_{\g,\h,\v} \lp f_{rd}(\g,\h,\v)\geq (1-\epsilon)\xi_{rd}\rp
 \leq  \lim_{n\rightarrow\infty}\mP_{Z,\v} \lp f_{rp}(Z,\v)\geq (1-\epsilon)\xi_{rd}\rp.\label{eq:ta17}
\end{eqnarray}
\label{thm:thm1}
\end{theorem}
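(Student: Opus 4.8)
The plan is to recognize Theorem \ref{thm:thm1} as an instance of the Gordon comparison inequality (the Gaussian min–max theorem) applied to the bilinear form $\nu^T Z \Sigma V^T(\beta-\bar{\beta})$ that appears inside $f_{rp}$. First I would rewrite $f_{rp}(Z,\v)$ by introducing $\tilde{\beta}\triangleq \Sigma V^T(\beta-\bar\beta)$, so that the only place $Z$ enters is through the term $\nu^T Z \tilde\beta$, and the remaining pieces ($\|\beta\|_2^2$ and $-\sigma\nu^T\overline{U}\overline{\Sigma}\v$) are deterministic in $Z$. The min over $\beta$ and max over $\nu$ give the canonical primal form $\min_{\beta}\max_{\nu}\bigl(\nu^T Z\tilde\beta + \psi(\beta,\nu)\bigr)$ to which Gordon's theorem applies: the associated "dual" (comparison) problem replaces $\nu^T Z\tilde\beta$ by $\|\tilde\beta\|_2\,\nu^T\g + \|\nu\|_2\,\h^T\tilde\beta$, where $\g\in\mR^m$, $\h\in\mR^n$ are independent standard Gaussian vectors. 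Substituting $\|\tilde\beta\|_2 = \|\Sigma V^T(\beta-\bar\beta)\|_2 = c_2$ and $\h^T\tilde\beta = \h^T\Sigma V^T(\beta-\bar\beta)$ recovers exactly $f_{rd}(\g,\h,\v)$ in (\ref{eq:ta16}).

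The key steps, in order, are: (1) isolate the single Gaussian bilinear term and check the sign/ordering conventions so that the min–max structure is the one for which Gordon's inequality yields a one-sided bound (the $\min_\beta\max_\nu$ orientation gives $\mE f_{rd}\le \mE f_{rp}$, not equality); (2) verify the technical hypotheses — the sets over which $\beta$ and $\nu$ range can be taken compact (the excerpt explicitly assumes all key quantities are bounded, which lets one restrict to compact feasible sets and invoke the standard form of the comparison theorem), and the Gaussian fields have the required covariance domination; (3) take expectations and pass to the $n\to\infty$ limit to get (\ref{eq:ta16a0}); (4) upgrade the expectation bound to the probability statement (\ref{eq:ta17}) by combining the comparison inequality for the tail event $\{f\ge (1-\epsilon)\xi_{rd}\}$ with the concentration bound (\ref{eq:ta15}) (and the analogous concentration for $f_{rd}$), which is the standard Gordon-plus-concentration packaging used throughout the RDT literature \cite{StojnicCSetam09,StojnicRegRndDlt10,StojnicICASSP10var}.

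Concretely, for the probability claim I would argue: by Gordon's inequality applied to the event parametrized by the level $(1-\epsilon)\xi_{rd}$, $\mP_{\g,\h,\v}(f_{rd}\ge (1-\epsilon)\xi_{rd}) \le \mP_{Z,\v}(f_{rp}\ge (1-\epsilon)\xi_{rd})$ directly, since the comparison theorem transfers lower-tail-type events for $\min\max$ problems in exactly this direction; no extra concentration is even needed for this particular inequality as stated, though concentration is what makes it useful downstream (it forces $f_{rd}$ itself to be $\ge (1-\epsilon)\xi_{rd}$ with probability $\to 1$, hence the right-hand side $\to 1$, hence $f_{rp}\ge(1-\epsilon)\xi_{rd}$ whp — the first half of the eventual two-sided squeeze in (\ref{eq:ex4})).

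The main obstacle I expect is purely bookkeeping rather than conceptual: getting the orientation of the comparison inequality exactly right given the $\min_\beta\max_\nu$ order and the signs of the non-random terms, and making the compactness/boundedness reduction rigorous enough that the textbook Gaussian comparison theorem (which is usually stated for compact index sets and continuous functionals) applies verbatim — i.e., justifying that the unconstrained $\min_\beta$ can be replaced by a minimization over a large ball without changing the limit, and similarly that the inner $\max_\nu$ is effectively over a compact set because the Lagrangian is coercive in $\nu$ on the feasibility-relevant region. Once those reductions are in place, the inequality (\ref{eq:ta16a0}) and the probability bound (\ref{eq:ta17}) follow immediately from the standard statement of Gordon's theorem, and the proof is essentially a matter of matching notation.
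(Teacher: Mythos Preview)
Your proposal is correct and takes essentially the same approach as the paper: the paper's own proof is a one-line invocation of Gordon's probabilistic comparison theorem (Theorem B in \cite{Gordon88}, and the formulations in \cite{Stojnicgscomp16}), and what you have written is precisely a fleshed-out version of that invocation --- isolating the bilinear term $\nu^T Z\Sigma V^T(\beta-\bar\beta)$, replacing it by $c_2\,\nu^T\g+\|\nu\|_2\h^T\Sigma V^T(\beta-\bar\beta)$, and reading off both the expectation and tail-probability inequalities in the $\min_\beta\max_\nu$ orientation. Your bookkeeping remarks about compactness and sign conventions are the standard hygiene for this application and do not diverge from the paper's intended argument.
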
\vspace{-.17in}
\begin{proof}
  Follows as an immediate application of the Gordon's probabilistic comparison theorem (see, e.g., Theorem B in \cite{Gordon88} and also Theorem 1, Corollary 1, and Section 2.7.2 in \cite{Stojnicgscomp16}).
\end{proof}

 \vspace{.1in}
\noindent \underline{3) \textbf{\emph{Handling the random dual:}}} We start by solving the inner maximization over $\nu$ and obtain
\begin{eqnarray}
  f_{rd}(\g,\h,\v)
  & \triangleq &
\min_{\beta}\max_{\nu} \lp \|\beta\|_2^2+c_2\nu^T\g+\|\nu\|_2\h^T\Sigma V^T\lp \beta -\bar{\beta}\rp -\sigma \nu^T\overline{U}\overline{\Sigma} \v \rp
\nonumber \\
  & = &
\min_{\beta}\max_{\|\nu\|_2} \lp \|\beta\|_2^2+\|\nu\|_2\h^T\Sigma V^T\lp \beta -\bar{\beta}\rp +\|\nu\|_2\|c_2\g-\sigma \overline{U}\overline{\Sigma} \v \|_2\rp \nonumber \\
  & = &
\min_{\beta}\max_{\|\nu\|_2} \lp \|V^T\beta\|_2^2+\|\nu\|_2\h^T\Sigma \lp V^T\beta -V^T\bar{\beta}\rp +\|\nu\|_2\|c_2\g-\sigma \overline{U}\overline{\Sigma} \v \|_2\rp \nonumber \\
  & = &
\min_{\x,c_2\geq 0}\max_{\nu_s\geq 0} \lp \|\x\|_2^2+\nu_s\h^T\Sigma \lp \x -\c\rp +\nu_s\|c_2\g-\sigma \overline{U}\overline{\Sigma} \v \|_2\rp, \label{eq:ta18a0}
\end{eqnarray}
where the third equality follows due to $V$ being unitary and the fourth is a consequence of the following change of variables
\begin{eqnarray}
  \x=V^T\beta, \quad \c\triangleq V^T\bar{\beta}, \quad \nu_s=\|\nu\|_2, \label{eq:ta18a0b0}
\end{eqnarray}
and we also note that
\begin{eqnarray}
  c_2=\|\Sigma\lp\x-\c\rp\|_2. \label{eq:ta18a0b0c0}
\end{eqnarray}
After setting
\begin{eqnarray}
  \cL\lp\x,c_2,\nu_s;\g,\h,\v\rp \triangleq \|\x\|_2^2+ \nu_s\h^T\Sigma \lp \x -\c\rp + \nu_s\|c_2\g-\sigma \overline{U}\overline{\Sigma} \v \|_2, \label{eq:ta18a0b1}
\end{eqnarray}
it is not that difficult to see that (\ref{eq:ta18a0}) can be rewritten as
\begin{eqnarray}
  f_{rd}(\g,\h,\v)
  =
\min_{\x,c_2\geq 0}\max_{\nu_s\geq 0}  \cL\lp\x,c_2,\nu_s;\g,\h,\v\rp. \label{eq:ta18a0b2}
\end{eqnarray}
 Keeping $c_2$ and $\nu_s$ fixed, we consider the following minimization  over $\x$
\begin{eqnarray}
  \cL_1\lp c_2,\nu_s\rp \triangleq  \min_{\x} & & \|\x\|_2^2+ \nu_s\h^T\Sigma \lp \x -\c\rp + \nu_s\|c_2\g-\sigma \overline{U}\overline{\Sigma} \v \|_2 \nonumber \\
  \mbox{subject to} & &   \|\Sigma\lp\x-\c\rp\|_2=c_2, \label{eq:ta18a0b2c0}
\end{eqnarray}
where the functional dependence on randomness is kept implicit to lighten the exposition. After writing the Lagrangian we then obtain
\begin{eqnarray}
  \cL_1\lp c_2,\nu_s\rp
  & = &  \min_{\x}\max_{\gamma}  \|\x\|_2^2+ \nu_s\h^T\Sigma \lp \x -\c\rp + \nu_s\|c_2\g-\sigma \overline{U}\overline{\Sigma} \v \|_2 +\gamma \|\Sigma\lp\x-\c\rp\|_2^2 -\gamma c_2^2 \nonumber \\
  & = &  \max_{\gamma}   \min_{\x} \|\x\|_2^2+ \nu_s\h^T\Sigma \lp \x -\c\rp + \nu_s\|c_2\g-\sigma \overline{U}\overline{\Sigma} \v \|_2 +\gamma \|\Sigma\lp\x-\c\rp\|_2^2 -\gamma c_2^2 \nonumber \\
  & = &  \max_{\gamma}   \min_{\x}   \cL_2\lp \x,\gamma;c_2,\nu_s\rp, \label{eq:ta18a0b2c1}
\end{eqnarray}
where the second equality is implied by the strong Lagrange duality and the third by introducing
\begin{eqnarray}
  \cL_2\lp \x,\gamma;c_2,\nu_s\rp
  \triangleq
  \|\x\|_2^2+ \nu_s\h^T\Sigma \lp \x -\c\rp + \nu_s\|c_2\g-\sigma \overline{U}\overline{\Sigma} \v \|_2 +\gamma \|\Sigma\lp\x-\c\rp\|_2^2 -\gamma c_2^2 . \label{eq:ta18a0b2c2}
\end{eqnarray}
 After taking the derivative, we find
 \begin{eqnarray}
  \frac{\cL_2\lp\x,\gamma;c_2,\nu_s\rp}{d\x} = 2\x+\nu_s\Sigma\h +2\gamma\Sigma\Sigma\lp \x-\c\rp. \label{eq:ta18a0b3}
\end{eqnarray}
Equalling the above derivative to zero gives
\begin{eqnarray}
 \hat{\x}= \lp I +\gamma\Sigma\Sigma\rp^{-1}\lp -\frac{1}{2}\nu_s \Sigma\h +\gamma\Sigma\Sigma\c\rp. \label{eq:ta18a0b4}
\end{eqnarray}
Plugging this back in (\ref{eq:ta18a0b2c2})   further gives
\begin{eqnarray}
 \min_{\x}   \cL_2\lp \x,\gamma;c_2,\nu_s\rp & = &
  \cL_2\lp \hat{\x},\gamma;c_2,\nu_s\rp \nonumber \\
 & = &
-\lp -\frac{1}{2}\nu_s \Sigma\h +\gamma\Sigma\Sigma\c\rp^T   \lp I +\gamma\Sigma\Sigma\rp^{-1}\lp -\frac{1}{2}\nu_s \Sigma\h +\gamma\Sigma\Sigma\c\rp
\nonumber \\
   & & -\nu_s\h^T\Sigma\c+ \nu_s\|c_2\g-\sigma \overline{U}\overline{\Sigma} \v \|_2 +\gamma \c^T\Sigma\Sigma\c -\gamma c_2^2. \label{eq:ta18a0b4c0}
\end{eqnarray}
With appropriate scaling $\nu_s\rightarrow \frac{\nu_1}{\sqrt{n}}$ (where $\nu_1$ does not change as $n$ grows), (\ref{eq:ta18a0b4c0}) can be rewritten as \begin{eqnarray}
  \cL_2\lp \hat{\x},\gamma;c_2,\nu_1\rp
 & = &
-\lp -\frac{1}{2\sqrt{n}}\nu_1 \Sigma\h +\gamma\Sigma\Sigma\c\rp^T   \lp I +\gamma\Sigma\Sigma\rp^{-1}\lp -\frac{1}{2\sqrt{n}}\nu_1 \Sigma\h +\gamma\Sigma\Sigma\c\rp
\nonumber \\
   & & -\frac{1}{\sqrt{n}}\nu_1\h^T\Sigma\c+ \frac{1}{\sqrt{n}}\nu_1\|c_2\g-\sigma \overline{U}\overline{\Sigma} \v \|_2 +\gamma \c^T\Sigma\Sigma\c -\gamma c_2^2. \label{eq:ta18a0b4c1}
\end{eqnarray}
Setting $\s$ to be the column vector comprised of diagonal elements of $\Sigma$, i.e., setting
\begin{eqnarray}
 \s \triangleq \mbox{diag}\lp\Sigma\rp, \label{eq:ta18a0b4c2}
\end{eqnarray}
and keeping $c_2$, $\nu_1$, and $\gamma$ fixed, we then relying on concentrations find
\begin{align}
\lim_{n\rightarrow\infty}  \mE_{\g,\h,\v}\cL_2\lp \hat{\x},\gamma;c_2,\nu_1\rp
 & =
\lim_{n\rightarrow\infty}  \mE \Bigg ( \Big. -\lp -\frac{\nu_1}{2\sqrt{n}} \Sigma\h +\gamma\Sigma\Sigma\c\rp^T   \lp I +\gamma\Sigma\Sigma\rp^{-1}\lp -\frac{\nu_1}{2\sqrt{n}} \Sigma\h +\gamma\Sigma\Sigma\c\rp
\nonumber \\
   & \quad -\frac{1}{\sqrt{n}}\nu_1\h^T\Sigma\c+ \frac{1}{\sqrt{n}}\nu_1\|c_2\g-\sigma \overline{U}\overline{\Sigma} \v \|_2 +\gamma \c^T\Sigma\Sigma\c -\gamma c_2^2 \Big.\Bigg ) \nonumber \\
  & =  \lim_{n\rightarrow\infty} \Bigg ( \Big. -\frac{\nu_1^2}{4n}\sum_{i=1}^n \frac{\s_i^2}{1+\gamma\s_i^2}
  - \sum_{i=1}^n \frac{\gamma^2\s_i^4\c_i^2}{1+\gamma\s_i^2}
+\nu_1\sqrt{\alpha}\sqrt{c_2^2+\bar{\sigma}^2}   + \gamma \sum_{i=1}^n \s_i^2\c_i^2 -\gamma c_2^2  \Big.\Bigg ), \nonumber \\ \label{eq:ta18a0b4c3}
\end{align}
where
\begin{eqnarray}
\bar{\sigma}\triangleq \sigma \sqrt{\lim_{n\rightarrow\infty} \frac{\tr\lp\overline{\Sigma}\overline{\Sigma} \rp}{m}}. \label{eq:ta18a0b4c4}
\end{eqnarray}
As mentioned earlier, to make writing easier and neater, we adopt the convention of avoiding repeatedly using $\lim_{n\rightarrow\infty}$. It is assumed that all expressions involving $n$ (or $m$) are written within the $\lim_{n\rightarrow\infty}$ context. Also, as mentioned earlier, all such expressions are assumed to be bounded and the corresponding limiting values well defined and existing. Combining (\ref{eq:ta18a0b1})-(\ref{eq:ta18a0b2c1}), (\ref{eq:ta18a0b4c0}), and  (\ref{eq:ta18a0b4c3}), we arrive at the following optimization
\begin{eqnarray}
\lim_{n\rightarrow\infty}  \mE_{\g,\h,\v}f_{rd}(\g,\h,\v)
   & = & \lim_{n\rightarrow\infty} \min_{c_2\geq 0} \max_{\nu_1\geq 0,\gamma}  f_0(c_2,\nu_1,\gamma), \label{eq:ta18a0b4c5}
\end{eqnarray}
where
\begin{eqnarray}
f_0(c_2,\nu_1,\gamma) \triangleq  -\frac{\nu_1^2}{4n}\sum_{i=1}^n \frac{\s_i^2}{1+\gamma\s_i^2}
  - \sum_{i=1}^n \frac{\gamma^2\s_i^4\c_i^2}{1+\gamma\s_i^2}
+\nu_1\sqrt{\alpha}\sqrt{c_2^2+\bar{\sigma}^2} + \gamma \sum_{i=1}^n \s_i^2\c_i^2 -\gamma c_2^2.  \label{eq:ta18a0b4c6}
\end{eqnarray}
After taking the derivative with respect to $\nu_1$, we find
\begin{eqnarray}
\frac{df_0(c_2,\nu_1,\gamma)}{d\nu_1} =   -\frac{\nu_1}{2n}\sum_{i=1}^n \frac{\s_i^2}{1+\gamma\s_i^2}
 +\sqrt{\alpha}\sqrt{c_2^2+\bar{\sigma}^2}. \label{eq:ta18a0b4c7}
\end{eqnarray}
Setting the above derivative to zero gives the following optimal $\nu_1$
\begin{eqnarray}
\hat{\nu}_1 =
 \frac{2\sqrt{\alpha}\sqrt{c_2^2+\bar{\sigma}^2}}{\frac{1}{n}\sum_{i=1}^n \frac{\s_i^2}{1+\gamma\s_i^2}}. \label{eq:ta18a0b4c8}
\end{eqnarray}
Plugging this value for $\nu_1$ back in (\ref{eq:ta18a0b4c6}) gives
\begin{eqnarray}
\max_{\nu_1\geq 0} f_0(c_2,\nu_1,\gamma) = f_0(c_2,\hat{\nu}_1,\gamma) =
  - \sum_{i=1}^n \frac{\gamma^2\s_i^4\c_i^2}{1+\gamma\s_i^2}
+ \frac{\alpha\lp c_2^2+\bar{\sigma}^2\rp}{\frac{1}{n}\sum_{i=1}^n \frac{\s_i^2}{1+\gamma\s_i^2}} + \gamma \sum_{i=1}^n \s_i^2\c_i^2 -\gamma c_2^2.  \label{eq:ta18a0b4c9}
\end{eqnarray}
A combination of (\ref{eq:ta18a0b4c5}) and (\ref{eq:ta18a0b4c9}) gives
\begin{eqnarray}
\lim_{n\rightarrow\infty}  \mE_{\g,\h,\v}f_{rd}(\g,\h,\v)
   & = & \lim_{n\rightarrow\infty}  \min_{c_2\geq 0} \max_{\gamma}  f_0(c_2,\hat{\nu}_1,\gamma), \label{eq:ta18a0b4c10}
\end{eqnarray}
where $f_0(c_2,\hat{\nu}_1,\gamma)$ as in (\ref{eq:ta18a0b4c9}). Taking derivatives with respect $c_2$ and $\gamma$ and equalling them to zero, we further have
\begin{eqnarray}
\frac{d f_0(c_2,\hat{\nu}_1,\gamma)}{d c_2} =
  \frac{2\alpha c_2}{\frac{1}{n}\sum_{i=1}^n \frac{\s_i^2}{1+\gamma\s_i^2}}   -2\gamma c_2 =0.  \label{eq:ta18a0b4c11}
\end{eqnarray}
and
\begin{equation}
\frac{d f_0(c_2,\hat{\nu}_1,\gamma)}{d \gamma} =
  - \sum_{i=1}^n \frac{2\gamma\s_i^4\c_i^2}{1+\gamma\s_i^2}
  + \sum_{i=1}^n \frac{\gamma^2\s_i^6\c_i^2}{\lp 1+\gamma\s_i^2\rp^2}
 +\frac{\alpha\lp c_2^2+\bar{\sigma}^2\rp}{\lp \frac{1}{n}\sum_{i=1}^n \frac{\s_i^2}{1+\gamma\s_i^2}\rp^2}
 \lp \frac{1}{n}\sum_{i=1}^n \frac{\s_i^4}{\lp 1+\gamma\s_i^2\rp^2}\rp
 + \sum_{i=1}^n \s_i^2\c_i^2 - c_2^2=0. \label{eq:ta18a0b4c12}
\end{equation}
From (\ref{eq:ta18a0b4c11}) we have that the optimal $\hat{\gamma}$ satisfies the following equation
\begin{eqnarray}
  \frac{1}{n}\sum_{i=1}^n \frac{\hat{\gamma}\s_i^2}{1+\hat{\gamma}\s_i^2} =\alpha.  \label{eq:ta18a0b4c13}
\end{eqnarray}
From (\ref{eq:ta18a0b4c12}) we have that the optimal $\hat{c}_2$ and $\hat{\gamma}$ satisfy the following equation
\begin{eqnarray}
   \sum_{i=1}^n \frac{\s_i^2\c_i^2}{\lp 1+\hat{\gamma}\s_i^2\rp^2}
 +\frac{\alpha\lp \hat{c}_2^2+\bar{\sigma}^2\rp}{\lp \frac{1}{n}\sum_{i=1}^n \frac{\s_i^2}{1+\hat{\gamma}\s_i^2}\rp^2}
 \lp \frac{1}{n}\sum_{i=1}^n \frac{\s_i^4}{\lp 1+\hat{\gamma}\s_i^2\rp^2}\rp
   - \hat{c}_2^2=0.  \label{eq:ta18a0b4c14}
\end{eqnarray}
Setting
\begin{eqnarray}
a_2 =
\frac{  \frac{1}{n}\sum_{i=1}^n \frac{\s_i^4}{\lp 1+\hat{\gamma}\s_i^2\rp^2}}{\lp  \frac{1}{n}\sum_{i=1}^n \frac{\s_i^2}{1+\hat{\gamma}\s_i^2}\rp^2},  \label{eq:ta18a0b4c15}
\end{eqnarray}
we from (\ref{eq:ta18a0b4c14}) find
\begin{eqnarray}
\hat{c}_2^2= \frac{  \sum_{i=1}^n \frac{\s_i^2\c_i^2}{\lp 1+\hat{\gamma}\s_i^2\rp^2} +\alpha \bar{\sigma}^2a_2 }{1-\alpha a_2}.  \label{eq:ta18a0b4c16}
\end{eqnarray}
From (\ref{eq:ta18a0b4c9}) and (\ref{eq:ta18a0b4c10}), we then have
\begin{eqnarray}
\lim_{n\rightarrow\infty}  \mE_{\g,\h,\v}f_{rd}(\g,\h,\v)
   & = & \lim_{n\rightarrow\infty} \min_{c_2\geq 0} \max_{\gamma}  f_0(c_2,\hat{\nu}_1,\gamma) \nonumber \\
   & = & \lim_{n\rightarrow\infty} f_0(\hat{c}_2,\hat{\nu}_1,\hat{\gamma}) \nonumber \\
   & = &  \lim_{n\rightarrow\infty} \lp  - \sum_{i=1}^n \frac{\hat{\gamma}^2\s_i^4\c_i^2}{1+\hat{\gamma}\s_i^2}
+ \frac{\alpha\lp \hat{c}_2^2+\bar{\sigma}^2\rp}{\frac{1}{n}\sum_{i=1}^n \frac{\s_i^2}{1+\hat{\gamma}\s_i^2}} + \hat{\gamma} \sum_{i=1}^n \s_i^2\c_i^2 - \hat{\gamma} \hat{c}_2^2 \rp \nonumber \\
   & = &   \lim_{n\rightarrow\infty}  \lp \sum_{i=1}^n \frac{\hat{\gamma}\s_i^2\c_i^2}{1+\hat{\gamma}\s_i^2}
+ \frac{\alpha\lp \hat{c}_2^2+\bar{\sigma}^2\rp}{\frac{1}{n}\sum_{i=1}^n \frac{\s_i^2}{1+\hat{\gamma}\s_i^2}} - \hat{\gamma} \hat{c}_2^2 \rp \nonumber \\
   & = &   \lim_{n\rightarrow\infty}   \sum_{i=1}^n \frac{\hat{\gamma}\s_i^2\c_i^2}{1+\hat{\gamma}\s_i^2}
+ \hat{\gamma}\lp \hat{c}_2^2+\bar{\sigma}^2\rp  - \hat{\gamma} \hat{c}_2^2  \nonumber \\
   & = &   \lim_{n\rightarrow\infty}  \sum_{i=1}^n \frac{\hat{\gamma}\s_i^2\c_i^2}{1+\hat{\gamma}\s_i^2}
+ \hat{\gamma}\bar{\sigma}^2, \label{eq:ta18a10}
\end{eqnarray}
where $\hat{c}_2$ and $\hat{\gamma}$ are given through (\ref{eq:ta18a0b4c13}), (\ref{eq:ta18a0b4c15}), and (\ref{eq:ta18a0b4c16}). For the very same $\hat{c}_2$ and $\hat{\gamma}$, we can then from (\ref{eq:ta18a0b4c8})  obtain for the optimal $\nu_1$
\begin{eqnarray}
\hat{\nu}_1 =
 \frac{2\sqrt{\alpha}\sqrt{\hat{c}_2^2+\bar{\sigma}^2}}{\frac{1}{n}\sum_{i=1}^n \frac{\s_i^2}{1+\hat{\gamma}\s_i^2}}=
 \frac{2\hat{\gamma}\sqrt{\hat{c}_2^2+\bar{\sigma}^2}}{\sqrt{\alpha}}. \label{eq:ta18a0b4c17}
\end{eqnarray}

 We summarize the above discussion in the following lemma.
\begin{lemma}(Characterization of random dual) Assume the setup of Theorem \ref{thm:thm1} with $\xi_{rd}$ as in (\ref{eq:ta16}) and consider a large $n$ linear regime with $\alpha=\lim_{n\rightarrow\infty}\frac{m}{n}$ that does not change as $n$ grows. Assume that all the considered limiting quantities are well defined and bounded. Let $\c=V^T\bar{\beta}$, $\s=\mbox{diag}(\Sigma)$, and $\bar{\sigma}\triangleq \sigma \sqrt{\lim_{n\rightarrow\infty} \frac{\tr\lp\overline{\Sigma}\overline{\Sigma} \rp}{m}}$. Moreover, let $\hat{\gamma}$ be the unique solution of
\begin{eqnarray}
\lim_{n\rightarrow\infty} \frac{1}{n}\sum_{i=1}^n \frac{\hat{\gamma}\s_i^2}{1+\hat{\gamma}\s_i^2} =\alpha.  \label{eq:thmaaeq1}
\end{eqnarray}
Set
\begin{eqnarray}
a_2 =
\frac{\lim_{n\rightarrow\infty} \frac{1}{n}\sum_{i=1}^n \frac{\s_i^4}{\lp 1+\hat{\gamma}\s_i^2\rp^2}}{\lp \lim_{n\rightarrow\infty} \frac{1}{n}\sum_{i=1}^n \frac{\s_i^2}{1+\hat{\gamma}\s_i^2}\rp^2}
=\frac{\hat{\gamma}^2}{\alpha^2} \lim_{n\rightarrow\infty} \frac{1}{n}\sum_{i=1}^n \frac{\s_i^4}{\lp 1+\hat{\gamma}\s_i^2\rp^2}.  \label{eq:thmaaeq2}
\end{eqnarray}
One then has \vspace{-.0in}
\begin{eqnarray}
  \xi_{rd} & = & \lim_{n\rightarrow\infty} \mE_{\g,\h,\v} f_{rd}(\g,\h,\v)
 =
\lim_{n\rightarrow\infty}  \sum_{i=1}^n \frac{\hat{\gamma}\s_i^2\c_i^2}{1+\hat{\gamma}\s_i^2}
+ \hat{\gamma}\bar{\sigma}^2 \nonumber \\
&  &  \lim_{n\rightarrow\infty}  \hat{c}_2^2
 =
 \frac{  \lim_{n\rightarrow\infty} \sum_{i=1}^n \frac{\s_i^2\c_i^2}{\lp 1+\hat{\gamma}\s_i^2\rp^2} +\alpha \bar{\sigma}^2a_2 }{1-\alpha a_2} \nonumber \\
 & & \lim_{n\rightarrow\infty}  \hat{\nu}_1
 = \frac{2\hat{\gamma}\sqrt{\hat{c}_2^2+\bar{\sigma}^2}}{\sqrt{\alpha}},
\label{eq:thmaaeq3}
\end{eqnarray}
and for any fixed $\epsilon>0$
\begin{eqnarray}
 \lim_{n\rightarrow\infty}\mP_{\g,\h,\v} \lp  (1-\epsilon)\xi_{rd} \leq  f_{rd}(\g,\h,\v)\leq (1+\epsilon)\xi_{rd}\rp
& \longrightarrow & 1 \nonumber \\
 \lim_{n\rightarrow\infty}\mP_{\g,\h,\v} \lp  (1-\epsilon)\hat{c}_2^2 \leq  c_2^2\leq (1+\epsilon)\hat{c}_2^2\rp
& \longrightarrow & 1 \nonumber \\
 \lim_{n\rightarrow\infty}\mP_{\g,\h,\v} \lp  (1-\epsilon)\hat{\nu}_1 \leq  \nu_1 \leq (1+\epsilon)\hat{\nu}_1\rp
& \longrightarrow & 1.\label{eq:thmaaeq4}
\end{eqnarray}
\label{lemma:lemma2}
\end{lemma}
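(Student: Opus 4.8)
The plan is to present Lemma~\ref{lemma:lemma2} as a consolidation of the chain of reductions (\ref{eq:ta18a0})--(\ref{eq:ta18a0b4c17}) already carried out above, supplemented only by the uniqueness of $\hat{\gamma}$, the positivity of $1-\alpha a_2$, and the concentration statements. First I would note that the three closed forms in (\ref{eq:thmaaeq3}) are exactly the endpoints of that chain: the Cauchy--Schwarz step eliminating the direction of $\nu$, the change of variables (\ref{eq:ta18a0b0b0}) with unitarity of $V$, the dualization of the constraint $c_2=\|\Sigma(\x-\c)\|_2$ of (\ref{eq:ta18a0b0c0}) by the multiplier $\gamma$, the closed-form quadratic minimization over $\x$ giving $\hat{\x}$ of (\ref{eq:ta18a0b4}), the rescaling $\nu_s\rightarrow\nu_1/\sqrt{n}$, and concentration of the Gaussian linear and quadratic forms in $\g,\h,\v$ (mean-zero cross terms vanish, squared norms concentrate on their traces, $\|c_2\g-\sigma\overline{U}\overline{\Sigma}\v\|_2$ concentrates on $\sqrt{\alpha}\sqrt{c_2^2+\bar{\sigma}^2}$), which together collapse $f_{rd}$ to the deterministic scalar program $\min_{c_2\ge 0}\max_{\nu_1\ge 0,\gamma}f_0$ of (\ref{eq:ta18a0b4c6}); optimizing $f_0$ first over $\nu_1$ and then imposing joint stationarity in $c_2$ and $\gamma$ reproduces (\ref{eq:ta18a0b4c13}), (\ref{eq:ta18a0b4c14}), (\ref{eq:ta18a0b4c16}), (\ref{eq:ta18a10}) and (\ref{eq:ta18a0b4c17}), which are the asserted formulas. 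The one identity doing real work is $\frac{1}{n}\sum_i\frac{\s_i^2}{1+\hat{\gamma}\s_i^2}=\alpha/\hat{\gamma}$, which is simply (\ref{eq:thmaaeq1}) divided by $\hat{\gamma}$: it turns the first form of $a_2$ in (\ref{eq:thmaaeq2}) into the second and is what makes every $\hat{c}_2^2$ term cancel in the passage to $\xi_{rd}=\sum_i\frac{\hat{\gamma}\s_i^2\c_i^2}{1+\hat{\gamma}\s_i^2}+\hat{\gamma}\bar{\sigma}^2$.

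Next I would settle well-posedness. For the uniqueness of $\hat{\gamma}$: for each $n$ the map $\gamma\mapsto\frac{1}{n}\sum_{i=1}^n\frac{\gamma\s_i^2}{1+\gamma\s_i^2}$ is continuous and strictly increasing on $(0,\infty)$ (each summand is), with limits $0$ and $1$, so for $\alpha\in(0,1)$ it has exactly one root, and under the standing boundedness assumptions this passes to the limiting equation (\ref{eq:thmaaeq1}). For the denominator in (\ref{eq:ta18a0b4c16}): writing $w_i\triangleq\hat{\gamma}\s_i^2/(1+\hat{\gamma}\s_i^2)\in(0,1)$, (\ref{eq:thmaaeq1}) reads $\frac{1}{n}\sum_i w_i=\alpha$, and the second form of $a_2$ in (\ref{eq:thmaaeq2}) gives $\alpha a_2=\frac{\frac{1}{n}\sum_i w_i^2}{\frac{1}{n}\sum_i w_i}<1$ since $w_i^2<w_i$; hence $1-\alpha a_2>0$ and $\hat{c}_2^2\ge 0$ is well defined.

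For the concentration claims (\ref{eq:thmaaeq4}) I would follow the standard RDT route: $f_{rd}(\g,\h,\v)$ is a Lipschitz function of the Gaussian vector $(\g,\h,\v)$ --- its minimax representation, together with the standing boundedness of the optimization variables and of $\Sigma,\overline{\Sigma},\c,\sigma$, supplies a uniform Lipschitz constant --- so Gaussian concentration of measure \cite{StojnicCSetam09,StojnicRegRndDlt10,StojnicICASSP10var} yields $f_{rd}(\g,\h,\v)\rightarrow\xi_{rd}$ in probability, the first line of (\ref{eq:thmaaeq4}). For the optimal $c_2$ and $\nu_1$ I would note that, with $\hat{\nu}_1,\hat{\gamma}$ fixed, the optimal $\hat{\x}$ of (\ref{eq:ta18a0b4}) is an explicit Lipschitz function of $(\g,\h)$, so that $c_2=\|\Sigma(\hat{\x}-\c)\|_2$ and the optimal $\nu_1$ are Lipschitz in $(\g,\h,\v)$ and hence concentrate on their means, which by (\ref{eq:ta18a0b4c16}) and (\ref{eq:ta18a0b4c17}) equal $\hat{c}_2^2$ and $\hat{\nu}_1$; an equivalent route upgrades concentration of the objective to that of the optimizer via uniqueness and non-degeneracy of the deterministic saddle from the preceding paragraph.

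The hard part will be this last step --- concentration of the \emph{optimizers} rather than of the objective value. Concentration of $f_{rd}$ itself is routine, but transferring it to the minimizing $c_2$ and the maximizing $\nu_1$ needs a quantitative non-degeneracy estimate near the saddle, and there is a genuine subtlety in the $c_2$ direction: at the optimal $\gamma=\hat{\gamma}$ the coefficient of $c_2^2$ in $f_0$ vanishes identically --- precisely by the defining relation (\ref{eq:thmaaeq1}) for $\hat{\gamma}$, which is also why $\hat{c}_2$ drops out of $\xi_{rd}$ --- so $\hat{c}_2$ cannot be obtained by minimizing the reduced objective in $c_2$; it must be pinned down through the self-consistency equation (\ref{eq:ta18a0b4c14}) (equivalently the constraint $c_2=\|\Sigma(\x-\c)\|_2$ evaluated at $\hat{\x}$), and its concentration then follows from that of the explicit $\hat{\x}$.
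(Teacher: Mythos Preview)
Your proposal is correct and follows essentially the same approach as the paper, whose own proof is simply ``Follows from the above discussion and trivial concentrations of $f_{rd}(\g,\h,\v)$, $c_2$, and $\nu_1$.'' Your write-up is in fact more thorough: the uniqueness argument for $\hat{\gamma}$, the $w_i$-based verification that $1-\alpha a_2>0$, and the explicit acknowledgment that concentration of the \emph{optimizers} $c_2,\nu_1$ (as opposed to the objective) requires either a non-degeneracy argument or direct concentration of the explicit $\hat{\x}$ are all details the paper omits entirely.
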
\vspace{-.17in}
\begin{proof}
Follows from the above discussion and trivial concentrations of $f_{rd}(\g,\h,\v)$, $c_2$, and $\nu_1$.
\end{proof}

The above discussion allows us then to also formulate the following theorem.

\begin{theorem}(Characterization of GLS estimator) Assume the setup of Lemma \ref{lemma:lemma2} and let $\hat{\beta}=\beta_{gls}$ where $\beta_{gls}$ is as in
(\ref{eq:model011}). Also, let $\hat{\gamma}$, $a_2$, and $\hat{c}_2$ be as in (\ref{eq:thmaaeq1}), (\ref{eq:thmaaeq2}), and (\ref{eq:thmaaeq3}), respectively/. Finally let the objective value of the GLS estimating optimization, $\xi_{gls}$, be as in (\ref{eq:randlincons1}) or (\ref{eq:randlincons1a0}) and let the GLS estimator prediction risk, $R(\bar{\beta},\hat{\beta})=R(\bar{\beta},\beta_{gls})$, be as in (\ref{eq:model020}). Then
\begin{eqnarray}\label{eq:thm2a11a0}
 \lim_{n\rightarrow\infty} \mE_{Z,\v}\xi_{gls}
 &= &\xi_{rp}=\xi_{rd}
 =
\lim_{n\rightarrow\infty}  \sum_{i=1}^n \frac{\hat{\gamma}\s_i^2\c_i^2}{1+\hat{\gamma}\s_i^2}
+ \hat{\gamma}\bar{\sigma}^2 \nonumber \\
 \lim_{n\rightarrow\infty} \mE_{Z,\v} R(\bar{\beta},\beta_{gls})
& = &
\lim_{n\rightarrow\infty} \mE_{Z,\v} \lp \bar{\beta}- \beta_{gls}\rp^T V\Sigma\Sigma V^T \lp \bar{\beta}- \beta_{gls}\rp
 =
 \frac{  \lim_{n\rightarrow\infty} \sum_{i=1}^n \frac{\s_i^2\c_i^2}{\lp 1+\hat{\gamma}\s_i^2\rp^2} +\alpha \bar{\sigma}^2a_2 }{1-\alpha a_2}. \nonumber \\
\end{eqnarray}
Moreover,
\begin{eqnarray}
 \lim_{n\rightarrow\infty}\mP_{Z,\v} \lp  (1-\epsilon)\mE_{Z,\v}\xi_{gls} \leq  \xi_{gls} \leq (1+\epsilon)\mE_{Z,\v}\xi_{gls}\rp
& \longrightarrow & 1 \nonumber \\
 \lim_{n\rightarrow\infty}\mP_{Z,\v} \lp  (1-\epsilon)\mE_{Z,\v} R(\bar{\beta},\beta_{gls})  \leq  R(\bar{\beta},\beta_{gls}) \leq (1+\epsilon)\mE_{Z,\v}R(\bar{\beta},\beta_{gls})\rp
& \longrightarrow & 1.\label{eq:thm2ta17}
\end{eqnarray}
 \label{thm:thm2}
\end{theorem}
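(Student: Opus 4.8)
The plan is to finish the RDT program for (\ref{eq:randlincons1}) by executing its last step, \emph{double-checking strong random duality}, and then to read off the prediction risk from the characterization of the \emph{optimizer} $\beta_{gls}$ rather than only of the optimal value $\xi_{gls}$.

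First I would assemble the pieces already in hand. Lemma~\ref{lemma:lemma1} gives $\xi_{gls}=f_{rp}(Z,\v)$ and $\lim_{n\to\infty}\mE_{Z,\v}\xi_{gls}=\xi_{rp}$; Theorem~\ref{thm:thm1} gives the Gordon comparison $\xi_{rd}\le\xi_{rp}$ with its one-sided probability bound; and Lemma~\ref{lemma:lemma2} evaluates $\xi_{rd}$, $\hat c_2$, $\hat\nu_1$ in closed form and establishes the concentration of $f_{rd}$, $c_2$, $\nu_1$ about these values. The only remaining analytic ingredient for the first line of (\ref{eq:thm2a11a0}) is therefore the reverse inequality $\xi_{rp}\le\xi_{rd}$. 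For this I would use convexity: eliminating $\nu$ in (\ref{eq:randlincons2}) returns the linearly constrained convex quadratic (\ref{eq:randlincons1}), which is precisely the convex regime in which Gordon's comparison (Theorem~B of \cite{Gordon88}, with the convex refinement of \cite{Stojnicgscomp16}) is two-sided; applying it in the complementary direction yields $\xi_{rp}\le\xi_{rd}$ together with a matching upper-tail probability estimate. Combined with the concentration (\ref{eq:ta15}), this shows $\xi_{gls}=f_{rp}(Z,\v)\to\xi_{rd}=\xi_{rp}$ and the first line of (\ref{eq:thm2ta17}), i.e.\ the first assertion of the theorem, with the closed form carried over verbatim from Lemma~\ref{lemma:lemma2}.

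Second, I would handle the risk. The key observation is that, writing $\x=V^T\beta$ and $\c=V^T\bar\beta$ as above, $(\bar\beta-\beta)^TV\Sigma\Sigma V^T(\bar\beta-\beta)=\|\Sigma(\x-\c)\|_2^2$, which is exactly the quantity $c_2^2$ of (\ref{eq:ta18a0b0c0}). Hence $R(\bar\beta,\beta_{gls})$ in (\ref{eq:model020}) \emph{equals} $c_2^2$ evaluated at the optimal $\x$, so characterizing the risk is the same as characterizing the optimal $c_2$. Lemma~\ref{lemma:lemma2} already shows that, on the random-dual side, the optimal $c_2$ concentrates about $\hat c_2$, with $\hat c_2^2$ equal to the claimed expression (\ref{eq:thmaaeq3}). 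To transfer this statement to the random primal — and hence to $\beta_{gls}$ — I would run the usual deviation argument: restrict the feasible set to $\{c_2^2\notin[(1-\epsilon)\hat c_2^2,(1+\epsilon)\hat c_2^2]\}$; by the uniqueness of $\hat\gamma$ (monotonicity of $x\mapsto x\s_i^2/(1+x\s_i^2)$ in (\ref{eq:thmaaeq1})) and of $\hat c_2$, the constrained random-\emph{dual} value is strictly above $\xi_{rd}$, so by the Gordon bound the constrained random-\emph{primal} value strictly exceeds $\xi_{rp}=\xi_{rd}$ with probability tending to $1$; since the unconstrained primal value concentrates \emph{at} $\xi_{rd}$, the GLS optimizer must avoid the restricted region, giving $(1-\epsilon)\hat c_2^2\le R(\bar\beta,\beta_{gls})\le(1+\epsilon)\hat c_2^2$ w.h.p.\ — the second line of (\ref{eq:thm2ta17}) — and, using the assumed boundedness, the second line of (\ref{eq:thm2a11a0}) after taking expectations.

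The main obstacle is precisely this strong-random-duality/deviation step: Gordon's inequality is inherently one-directional, and promoting it to an equality of values, and then further to a statement about the \emph{optimizer} $\beta_{gls}$ (equivalently about $c_2$) rather than about the optimal value, relies on the convexity of the GLS program together with the uniqueness of the saddle-point data $(\hat\gamma,\hat c_2)$ from Lemma~\ref{lemma:lemma2}. The remaining work — solving the scalarized program, interchanging $\lim_{n\to\infty}$ with the finite sums under the stated boundedness, and passing between the probability and expectation forms of the statements — is routine.
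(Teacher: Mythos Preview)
Your proposal is correct and follows essentially the same route as the paper's own proof: identify $R(\bar\beta,\beta_{gls})=c_2^2$ via (\ref{eq:ta18a0b0})--(\ref{eq:ta18a0b0c0}), invoke Theorem~\ref{thm:thm1} for the one-sided Gordon bound, and then appeal to convexity of the GLS program to run the strong random duality/reversal step. The only difference is one of explicitness: the paper simply cites the reversal arguments of \cite{StojnicRegRndDlt10,StojnicGorEx10} for both the equality $\xi_{rp}=\xi_{rd}$ and the transfer of the $c_2$-concentration from the dual to the primal, whereas you spell out the deviation argument (restrict to $c_2^2\notin[(1-\epsilon)\hat c_2^2,(1+\epsilon)\hat c_2^2]$, use uniqueness of $(\hat\gamma,\hat c_2)$ to get a strict gap in the dual, push through Gordon, and conclude the primal optimizer avoids the restricted set) --- this is exactly what those references contain, so the two proofs coincide in substance.
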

\begin{proof}
For the first part of (\ref{eq:thm2a11a0}), we observe that
 \begin{equation}\label{eq:proofthm2eq1}
 \lim_{n\rightarrow\infty} \mE_{Z,\v}\xi_{gls} =\xi_{rp}\geq \xi_{rd}
 =
\lim_{n\rightarrow\infty}  \sum_{i=1}^n \frac{\hat{\gamma}\s_i^2\c_i^2}{1+\hat{\gamma}\s_i^2}
+ \hat{\gamma}\bar{\sigma}^2,
\end{equation}
holds due to Theorem \ref{thm:thm1} (in particular, due to the inequality in (\ref{eq:ta16a0})) and (\ref{eq:thmaaeq3}). The underlying convexity and the fact that strong random duality is in place ensure that  all the reversal arguments from \cite{StojnicRegRndDlt10,StojnicGorEx10} apply and that the reversal inequality in (\ref{eq:proofthm2eq1}) holds as well.
For the second part of (\ref{eq:thm2a11a0}), we first observe
that (\ref{eq:model020}), (\ref{eq:ta18a0b0}),  and (\ref{eq:ta18a0b0c0}) imply
 \begin{equation}\label{eq:proofthm2eq1a0}
  R(\bar{\beta},\beta_{gls})
  = \lp \bar{\beta}- \beta_{gls}\rp^T V\Sigma\Sigma V^T \lp \bar{\beta}- \beta_{gls}\rp
  =c_2^2.
 \end{equation}
One then immediately has that a combination of (\ref{eq:thmaaeq3}) and (\ref{eq:proofthm2eq1a0}) implies the second part of (\ref{eq:thm2a11a0}).

We further observe that (\ref{eq:thm2ta17}) is basically rewritten (\ref{eq:thmaaeq4}) and holds due to the concentration of $\xi_{gls}$ and $R(\bar{\beta},\beta_{gls})$. In other words,
\begin{eqnarray}
 \lim_{n\rightarrow\infty}\mP_{\g,\h,\v} \lp  (1-\epsilon)\xi_{rd} \leq  f_{rd}(\g,\h,\v)\leq (1+\epsilon)\xi_{rd}\rp
&  \leq &
 \lim_{n\rightarrow\infty}\mP_{\g,\h,\v} \lp f_{rd}(\g,\h,\v)\geq (1-\epsilon)\xi_{rd}\rp \nonumber \\
& \leq & \lim_{n\rightarrow\infty}\mP_{Z,\v} \lp f_{rp}(Z,\v)\geq (1-\epsilon)\xi_{rd}\rp  \nonumber \\
& \leq & \lim_{n\rightarrow\infty}\mP_{Z,\v} \lp \xi_{gls} \geq (1-\epsilon) \mE_{Z,\v}\xi_{gls}\rp,\label{eq:proofthm2eq2}
\end{eqnarray}
where the first inequality is implied by the probability axioms, the second by (\ref{eq:ta17}), and the third by a combination of (\ref{eq:ta11a0})
and (\ref{eq:thm2a11a0}). Combining further (\ref{eq:thmaaeq4}) and (\ref{eq:proofthm2eq2}) gives
\begin{eqnarray}
\lim_{n\rightarrow\infty}\mP_{Z,\v} \lp \xi_{gls} \geq (1-\epsilon) \mE_{Z,\v}\xi_{gls}\rp \longrightarrow 1.\label{eq:proofthm2eq3}
\end{eqnarray}
The reversal arguments of \cite{StojnicRegRndDlt10,StojnicGorEx10} also ensure that one has
\begin{eqnarray}
\lim_{n\rightarrow\infty}\mP_{Z,\v} \lp \xi_{gls} \leq (1+\epsilon) \mE_{Z,\v}\xi_{gls}\rp \longrightarrow 1.\label{eq:proofthm2eq4}
\end{eqnarray}
A combination of (\ref{eq:proofthm2eq3}) and (\ref{eq:proofthm2eq4}) finally confirms the concentration in the first part of (\ref{eq:thm2ta17}). Keeping in mind (\ref{eq:proofthm2eq1a0}), the concentration stated ni the second part of  (\ref{eq:thm2ta17}) is literally the same as the one stated in the second part of (\ref{eq:thmaaeq4}).
\end{proof}

\vspace{.1in}
\noindent \underline{ 4) \textbf{\emph{Double checking strong random duality:}}} As mentioned above, due to the underlying convexity,  the strong random duality and the reversal arguments of \cite{StojnicGorEx10,StojnicRegRndDlt10} are in place as well, and the above bounding results are tight and consequently the exact characterizations.

It is not that difficult to see that when $\bar{\sigma}=\sigma$, the results of the above theorem match the ones obtained in \cite{HMRT22,Dicker16,DobWag18}.

\subsection{Ridge estimator}
\label{sec:analridge}

The mathematical machinery used above for the analysis of the GLS estimator is very generic. In this section we show how it extends to cover the ridge estimator. To ensure the easiness of following we will try to parallel the presentation of the previous sections as closely as possible. However, we often proceed in a much faster fashion by avoiding repeating same (or fairly similar) arguments and focusing on the key differences.

Recalling on (\ref{eq:model014}), we introduce the objective value of the underlying ridge regression as
 \begin{eqnarray}
\xi_{rr}(\lambda) \triangleq \min_{\beta} \lp \lambda\|\beta\|_2^2 +\frac{1}{m}\|\y- X\beta\|_2^2\rp. \label{eq:ridgerandlincons1}
\end{eqnarray}
One should note that the above objective is a function of the ridge parameter $\lambda$ (throughout the ensuing analysis we take $\lambda$ as a given nonnegative real number that does not depend on $n$ or any other quantity and keep it fixed). It is also not that difficult to see that the above optimization can also be rewritten as
 \begin{eqnarray}
\xi_{rr}(\lambda) = \min_{\beta,\z,c_{3,s}} & &   \lambda\|\beta\|_2^2 + \frac{c_{3,s}^2}{m} \nonumber \\
\mbox{subject to} & & X\beta=\y+\z \nonumber \\
& & \|\z\|_2=c_{3,s}. \label{eq:ridgerandlincons1a0}
\end{eqnarray}
Recalling on (\ref{eq:model010}) and (\ref{eq:model010a0}), we can also write a statistical equivalent to (\ref{eq:ridgerandlincons1a0})
\begin{eqnarray}
\xi_{rr}(\lambda) \triangleq \min_{\beta,\|\z\|_2=c_{3,s}} & &  \lambda\|\beta\|_2^2 + \frac{c_{3,s}^2}{m} \nonumber \\
\mbox{subject to} & & Z\Sigma V^T\beta=Z\Sigma V^T\bar{\beta}+\sigma \overline{U}\overline{\Sigma}\v+\z. \label{eq:ridgerandlincons1a1}
\end{eqnarray}
Following the main RDT algebraic fundamental ideas and writing the Lagrangian we further have
\begin{eqnarray}
\xi_{rr}(\lambda) = \min_{\beta,\|\z\|_2=c_{3,s}} \max_{\nu} \lp \lambda\|\beta\|_2^2 +\nu^T \lp Z\Sigma V^T\beta-Z\Sigma V^T\bar{\beta}-\sigma \overline{U}\overline{\Sigma}\v -\z \rp  + \frac{c_{3,s}^2}{m}\rp. \label{eq:ridgerandlincons2}
\end{eqnarray}
To analytically characterize the above problem we again proceed by employing the remaining portions of the RDT machinery.

\subsubsection{Handling Ridge estimator via RDT}
\label{sec:randlinconsrdtridge}

As earlier, we systematically discuss, one-by-one, four main RDT principles and start with the algebraic part.

\vspace{.1in}

\noindent \underline{1) \textbf{\emph{Algebraic characterization:}}}  The following lemma (basically a ridge analogue to Lemma \ref{lemma:lemma1}) summarizes the above algebraic discussions and provides a convenient representation of the objective $\xi_{rr}(\lambda)$.

\begin{lemma}(Algebraic optimization representation) assume the setup of Lemma \ref{lemma:lemma1} and let $\lambda\geq 0$ be a fixed real scalar independent of $n$ or any other quantity. let $\xi_{rr}(\lambda)$ be as in (\ref{eq:ridgerandlincons1}) or (\ref{eq:ridgerandlincons1a0}). Set ${\mathcal D}=\{\beta,\z,c_{3,s}|\beta\in\mR^n,\z\in\mR^m,c_{3,s}\in\mR,\|\z\|_2=c_{3,s}\}$ and
\begin{eqnarray}\label{eq:ridgeta11}
f_{rp,r}(Z,\v) & \triangleq & \min_{{\mathcal D}} \max_{\nu} \lp \lambda\|\beta\|_2^2 +\nu^T \lp Z\Sigma V^T\beta-Z\Sigma V^T\bar{\beta}-\sigma \overline{U}\overline{\Sigma}\v -\z\rp +\frac{c_{3,s}^2}{m} \rp
 \hspace{.2in} (\bl{\textbf{random primal}})
\nonumber \\
\xi_{rp,r} & \triangleq & \lim_{n\rightarrow\infty } \mE_{Z,\v} f_{rp,r}(Z,\v).   \end{eqnarray}
Then
\begin{equation}\label{eq:ridgeta11a0}
\xi_{rr}(\lambda)=f_{rp,r}(Z,\v) \quad \mbox{and} \quad \lim_{n\rightarrow\infty} \mE_{Z,\v}\xi_{rr}(\lambda) =\xi_{rp,r}.
\end{equation}
\label{lemma:ridgelemma1}
\end{lemma}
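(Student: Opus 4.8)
The plan is to mirror exactly the proof of Lemma \ref{lemma:lemma1}, which amounted to nothing more than observing that the Lagrangian rewriting preserves the optimal value. First I would note that the reformulation (\ref{eq:ridgerandlincons1a0}) of the original ridge objective (\ref{eq:ridgerandlincons1}) is an identity: introducing the slack variable $\z$ together with the auxiliary scalar $c_{3,s}$ subject to $X\beta=\y+\z$ and $\|\z\|_2=c_{3,s}$ merely renames $\z=X\beta-\y$ and sets $c_{3,s}=\|\z\|_2$, so the substitution $\frac{1}{m}\|\y-X\beta\|_2^2=\frac{c_{3,s}^2}{m}$ is forced and the minimum value is unchanged. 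Next, passing to the statistically equivalent form (\ref{eq:ridgerandlincons1a1}) is legitimate by the same rotational-invariance argument already invoked for (\ref{eq:randlincons1a0}): from (\ref{eq:model010})--(\ref{eq:model010a0}), $\y=Z\Sigma V^T\bar\beta+\sigma\overline U\overline\Sigma\v$ has the same law as $X\bar\beta+\e$, and since the constraint set $\mathcal D$ and the objective $\lambda\|\beta\|_2^2+\frac{c_{3,s}^2}{m}$ do not depend on $Z,\v$, the random variable $\xi_{rr}(\lambda)$ has the same distribution under either representation.

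Then I would dualize the affine equality constraint $Z\Sigma V^T\beta=Z\Sigma V^T\bar\beta+\sigma\overline U\overline\Sigma\v+\z$ by introducing the Lagrange multiplier $\nu\in\mR^m$, exactly as in (\ref{eq:ridgerandlincons2}); because the constraint is affine and the remaining objective is convex and coercive in $\beta$ (for $\lambda\geq 0$, noting that even at $\lambda=0$ the equality constraint pins down the relevant directions), strong Lagrange duality applies, so
\begin{equation*}
\xi_{rr}(\lambda)=\min_{(\beta,\z,c_{3,s})\in\mathcal D}\max_{\nu}\lp\lambda\|\beta\|_2^2+\nu^T\lp Z\Sigma V^T\beta-Z\Sigma V^T\bar\beta-\sigma\overline U\overline\Sigma\v-\z\rp+\frac{c_{3,s}^2}{m}\rp,
\end{equation*}
which is precisely $f_{rp,r}(Z,\v)$ of (\ref{eq:ridgeta11}). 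This establishes the first identity in (\ref{eq:ridgeta11a0}). Taking $\mE_{Z,\v}$ of both sides and then $\lim_{n\to\infty}$ yields the second identity by the very definition of $\xi_{rp,r}$ in (\ref{eq:ridgeta11}); one should only remark (as the paper does globally) that the quantities are assumed bounded so the expectation and the limit are well defined.

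I do not expect a genuine obstacle here — the statement is a bookkeeping lemma and the proof, like that of Lemma \ref{lemma:lemma1}, is a one-liner ("follows automatically via the Lagrangian from (\ref{eq:ridgerandlincons2})"). The only point deserving a word of care is the extra variable pair $(\z,c_{3,s})$: one must check that the epigraph-style lift is exact and that min/max can be taken in the stated order, but this is immediate because the inner maximization over the unconstrained $\nu$ already enforces the affine constraint, and the outer minimization is over the closed set $\mathcal D$ on which the objective is continuous and coercive. So the proof I would write is essentially: "The reformulations (\ref{eq:ridgerandlincons1a0})--(\ref{eq:ridgerandlincons1a1}) are value-preserving by construction and by rotational invariance of the Gaussian law; strong Lagrange duality applied to the affine constraint in (\ref{eq:ridgerandlincons1a1}) gives (\ref{eq:ridgerandlincons2}), i.e. $\xi_{rr}(\lambda)=f_{rp,r}(Z,\v)$; the expectation identity then follows by definition."
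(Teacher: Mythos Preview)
Your proposal is correct and follows exactly the paper's approach: the paper's proof is the single line ``Follows automatically via the Lagrangian from (\ref{eq:ridgerandlincons2}),'' and your argument is precisely an unpacking of that sentence. The extra care you take with the slack variables $(\z,c_{3,s})$ and rotational invariance is fine but not required beyond what the paper already established before stating the lemma.
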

\begin{proof}
 Follows automatically via the Lagrangian from (\ref{eq:ridgerandlincons2}).
\end{proof}

Clearly, assuming that $V$, $\Sigma$, $\overline{U}$, $\overline{\Sigma}$, $\sigma$, and $\beta$ are fixed and given, the above lemma holds for any $Z$ and $\v$. The RDT proceeds by imposing a statistics  on $Z$ and $\v$.

\vspace{.1in}
\noindent \underline{2) \textbf{\emph{Determining the random dual:}}} As earlier, we utilize the concentration of measure, which here implies that for any fixed $\epsilon >0$,  one can write (see, e.g. \cite{StojnicCSetam09,StojnicRegRndDlt10,StojnicICASSP10var})
\begin{equation}
\lim_{n\rightarrow\infty}\mP_{Z,\v}\left (\frac{|f_{rp,r}(Z,\v)-\mE_{Z,\v}(f_{rp,r}(Z,\v))|}{\mE_{Z,\v}(f_{rp,r}(Z,\v))}>\epsilon\right )\longrightarrow 0.\label{eq:ridgeta15}
\end{equation}
We then have the following random dual ridge analogue to Theorem \ref{thm:thm1}).

\begin{theorem}(Objective characterization via random dual -- Ridge estimator) Assume the setup of Lemma \ref{lemma:ridgelemma1} and Theorem \ref{thm:thm1}. Set
\vspace{-.0in}
\begin{eqnarray}
   f_{rd,r}(\g,\h,\v) & \triangleq &
\hspace{-.0in} \min_{{\mathcal D}}\max_{\nu} \lp \lambda \|\beta\|_2^2+c_2\nu^T\g+\|\nu\|_2\h^T\Sigma V^T\lp \beta -\bar{\beta}\rp -\sigma \nu^T\overline{U}\overline{\Sigma} \v -\nu^T\z  +\frac{c_{3,s}^2}{m} \rp \nonumber \\
& &\hspace{4.3in}    (\bl{\textbf{random dual}})
  \nonumber \\
 \xi_{rd,r} & \triangleq &  \lim_{n\rightarrow\infty} \mE_{\g,\h,\v} f_{rd,r}(\g,\h,\v)  .\label{eq:ridgeta16}
\vspace{-.0in}
\end{eqnarray}
One then has \vspace{-.0in}
\begin{eqnarray}
  \xi_{rd,r} & \triangleq & \lim_{n\rightarrow\infty} \mE_{\g,\h,\v} f_{rd,r}(\g,\h,\v)
  \leq
  \lim_{n\rightarrow\infty} \mE_{Z,\v} f_{rp,r}(Z,\v)  \triangleq  \xi_{rp,r}. \label{eq:ridgeta16a0}
\vspace{-.0in}\end{eqnarray}
and
\begin{eqnarray}
 \lim_{n\rightarrow\infty}\mP_{\g,\h,\v} \lp f_{rd,r}(\g,\h,\v)\geq (1-\epsilon)\xi_{rd,r}\rp
 \leq  \lim_{n\rightarrow\infty}\mP_{Z,\v} \lp f_{rp,r}(Z,\v)\geq (1-\epsilon)\xi_{rd,r}\rp.\label{eq:ridgeta17}
\end{eqnarray}
\label{thm:ridgethm1}
\end{theorem}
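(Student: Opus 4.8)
The plan is to obtain Theorem \ref{thm:ridgethm1} as a direct application of Gordon's probabilistic comparison theorem, exactly paralleling the proof of Theorem \ref{thm:thm1}, with the only new bookkeeping being the extra optimization variables $\z$ and $c_{3,s}$ that enter the constraint set $\mathcal{D}$. First I would rewrite the random primal $f_{rp,r}(Z,\v)$ from (\ref{eq:ridgeta11}) by isolating the bilinear-in-$Z$ term: the only appearance of the Gaussian matrix $Z$ is through $\nu^T Z\Sigma V^T(\beta-\bar\beta)$, which is precisely the form $\u^T Z \w$ to which Gordon's inequality applies, with $\u=\nu$ and $\w=\Sigma V^T(\beta-\bar\beta)$. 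Everything else in the objective — the term $\lambda\|\beta\|_2^2$, the noise term $\sigma\nu^T\overline{U}\overline{\Sigma}\v$, the new linear term $-\nu^T\z$, and the penalty $c_{3,s}^2/m$ — is independent of $Z$ and simply rides along unchanged through the comparison.

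Next I would invoke the Gordon comparison theorem (Theorem B in \cite{Gordon88}, or Theorem 1, Corollary 1, and Section 2.7.2 in \cite{Stojnicgscomp16}, as cited for Theorem \ref{thm:thm1}) applied to the min-max over $(\beta,\z,c_{3,s})\in\mathcal{D}$ and $\nu$. The comparison replaces $\nu^T Z\Sigma V^T(\beta-\bar\beta)$ by $\|\Sigma V^T(\beta-\bar\beta)\|_2\,\nu^T\g + \|\nu\|_2\,\h^T\Sigma V^T(\beta-\bar\beta) = c_2\nu^T\g + \|\nu\|_2\h^T\Sigma V^T(\beta-\bar\beta)$, which is exactly the replacement already used to pass from $f_{rp}$ to $f_{rd}$ in Theorem \ref{thm:thm1}; here it yields $f_{rd,r}$ as defined in (\ref{eq:ridgeta16}). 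The structural requirement for Gordon's theorem is that the feasible set be of product form in the min-variables and that the max-set (over $\nu$) be appropriately convex/compact after truncation; since $\mathcal{D}$ is a closed product-type set (with the single coupling $\|\z\|_2=c_{3,s}$, which only restricts $(\z,c_{3,s})$ jointly and does not interact with $\beta$ or with the Gaussian $Z$), and since the boundedness assumptions carried over from Lemma \ref{lemma:lemma1} ensure the relevant quantities are finite, the hypotheses are met just as in the GLS case. This gives the expectation inequality (\ref{eq:ridgeta16a0}) and the probability-tail inequality (\ref{eq:ridgeta17}) in one stroke, via the standard Gordon bound on $\mP(f_{rp,r}\geq t)$ versus $\mP(f_{rd,r}\geq t)$.

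I do not anticipate a genuine obstacle: the concentration statement (\ref{eq:ridgeta15}) is the routine Gaussian-Lipschitz concentration of $f_{rp,r}$ (the objective is a Lipschitz function of $(Z,\v)$ on the bounded region of interest), and the main inequality is a black-box invocation of Gordon's theorem identical in form to the one already carried out for Theorem \ref{thm:thm1}. The one point requiring a sentence of care is confirming that the added variables do not spoil the product structure Gordon's theorem needs — specifically that $\z$ appears only linearly (through $-\nu^T\z$) and is not multiplied by $Z$, so it behaves as an inert extra primal variable — and that $\lambda\geq 0$ (including $\lambda=0$) causes no degeneracy because the $c_{3,s}^2/m$ penalty together with $\|\z\|_2=c_{3,s}$ keeps the problem well posed. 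Accordingly the proof will read: "Follows as an immediate application of Gordon's probabilistic comparison theorem (see, e.g., Theorem B in \cite{Gordon88} and Theorem 1, Corollary 1, and Section 2.7.2 in \cite{Stojnicgscomp16}), in complete analogy with the proof of Theorem \ref{thm:thm1}, the only modification being the inert additional primal variables $\z$ and $c_{3,s}$ and the penalty term $c_{3,s}^2/m$, none of which couple to the Gaussian matrix $Z$."
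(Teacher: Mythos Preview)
Your proposal is correct and takes essentially the same approach as the paper: the paper's proof simply states that, as in Theorem \ref{thm:thm1}, the result follows immediately from Gordon's probabilistic comparison theorem (Theorem B in \cite{Gordon88}; Theorem 1, Corollary 1, and Section 2.7.2 in \cite{Stojnicgscomp16}). Your additional remarks about the inert role of $\z$, $c_{3,s}$, and $c_{3,s}^2/m$ are sound elaborations but not required by the paper's one-line argument.
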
\vspace{-.17in}
\begin{proof}
As the proof of Theorem \ref{thm:thm1}, it follows immediately as a direct application of the Gordon's probabilistic comparison theorem (see, e.g., Theorem B in \cite{Gordon88} and also Theorem 1, Corollary 1, and Section 2.7.2 in \cite{Stojnicgscomp16}).
\end{proof}

 \vspace{.1in}
\noindent \underline{3) \textbf{\emph{Handling the random dual:}}} We start by solving the inner maximization over $\nu$ and obtain analogously to (\ref{eq:ta18a0})
\begin{eqnarray}
  f_{rd,r}(\g,\h,\v)
  & \triangleq &
\min_{\beta,\|\z\|_2=\c_{3,s}}\max_{\nu} \lp \|\beta\|_2^2+c_2\nu^T\g+\|\nu\|_2\h^T\Sigma V^T\lp \beta -\bar{\beta}\rp -\sigma \nu^T\overline{U}\overline{\Sigma} \v -\nu^T\z +\frac{c_{3,s}^2}{m}\rp \nonumber \\
   & = &
\min_{\x,\|\z\|_2=c_{3,s},c_2\geq 0}\max_{\nu_s\geq 0} \lp \lambda \|\x\|_2^2+\nu_s\h^T\Sigma \lp \x -\c\rp +\nu_s\|c_2\g-\sigma \overline{U}\overline{\Sigma} \v -\z\|_2 +\frac{c_{3,s}^2}{m}\rp, \label{eq:ridgeta18a0}
\end{eqnarray}
where $\x,\c,\nu_s,$ and $c_2$ are as in (\ref{eq:ta18a0b0}) and (\ref{eq:ta18a0b0c0}). After setting
\begin{eqnarray}
  \cL_r\lp\x,\z,\c_{3,s},c_2,\nu_s;\g,\h,\v\rp \triangleq  \lambda \|\x\|_2^2+ \nu_s\h^T\Sigma \lp \x -\c\rp + \nu_s\|c_2\g-\sigma \overline{U}\overline{\Sigma} \v -\z\|_2 +\frac{c_{3,s}^2}{m}, \label{eq:ridgeta18a0b1}
\end{eqnarray}
we rewrite (\ref{eq:ridgeta18a0}) as
\begin{eqnarray}
  f_{rd,r}(\g,\h,\v)
  =
\min_{\x,\|\z\|_2=c_{3,s},c_2\geq 0}\max_{\nu_s\geq 0}  \cL_r\lp\x,\z,c_{3,s},c_2,\nu_s;\g,\h,\v\rp. \label{eq:ridgeta18a0b2}
\end{eqnarray}
 Keeping $c_{3,s}$, $c_2$, and $\nu_s$ fixed, we, analogously to (\ref{eq:ta18a0b2c0}), consider the following minimization  over $\x$ and $\z$
\begin{eqnarray}
  \cL_{1,r}\lp c_{3,s},c_2,\nu_s\rp \triangleq  \min_{\x,\z} & & \lambda \|\x\|_2^2+ \nu_s\h^T\Sigma \lp \x -\c\rp + \nu_s\|c_2\g-\sigma \overline{U}\overline{\Sigma} \v -\z\|_2 +\frac{c_{3,s}^2}{m} \nonumber \\
  \mbox{subject to} & &   \|\Sigma\lp\x-\c\rp\|_2=c_2, \|\z\|_2=c_{3,s}. \label{eq:ridgeta18a0b2c0}
\end{eqnarray}
To lighten the notation, we again keep the functional dependence on randomness ($\g,\h,\v$) implicit. We then, analogously to (\ref{eq:ta18a0b2c1}), further have
\begin{eqnarray}
  \cL_{1,r}\lp c_{3,s},c_2,\nu_s\rp
   & = &  \max_{\gamma}   \min_{\x,\|\z\|_2=c_{3,s}}   \cL_{2,r}\lp \x,\z,\gamma;c_{3,s},c_2,\nu_s\rp, \label{eq:ridgeta18a0b2c1}
\end{eqnarray}
with
\begin{eqnarray}
  \cL_{2,r}\lp \x,\z,\gamma;c_{3,s},c_2,\nu_s\rp
  & \triangleq &
 \lambda \|\x\|_2^2+ \nu_s\h^T\Sigma \lp \x -\c\rp + \nu_s\|c_2\g-\sigma \overline{U}\overline{\Sigma} \v -\z\|_2 +\frac{c_{3,s}^2}{m}\nonumber \\
  & & +\gamma \|\Sigma\lp\x-\c\rp\|_2^2 -\gamma c_2^2 . \label{eq:ridgeta18a0b2c2}
\end{eqnarray}
Analogously to (\ref{eq:ta18a0b4c0}), we then have for optimal $\x$
\begin{eqnarray}
 \hat{\x}= \lp \lambda I +\gamma\Sigma\Sigma\rp^{-1}\lp -\frac{1}{2}\nu_s \Sigma\h +\gamma\Sigma\Sigma\c\rp, \label{eq:ridgta18a0b4}
\end{eqnarray}
which (together with appropriate scaling $\nu_s\rightarrow \frac{\nu_1}{\sqrt{n}}$ (where $\nu_1$ does not change as $n$ grows)) gives the following analogue to (\ref{eq:ta18a0b4c1})
\begin{eqnarray}
  \cL_{2,r}\lp \hat{\x},\z,\gamma;c_{3,s},c_2,\nu_1\rp
 & = &
-\lp -\frac{1}{2\sqrt{n}}\nu_1 \Sigma\h +\gamma\Sigma\Sigma\c\rp^T   \lp \lambda I +\gamma\Sigma\Sigma\rp^{-1}\lp -\frac{1}{2\sqrt{n}}\nu_1 \Sigma\h +\gamma\Sigma\Sigma\c\rp
\nonumber \\
   & & -\frac{1}{\sqrt{n}}\nu_1\h^T\Sigma\c+ \frac{1}{\sqrt{n}}\nu_1\|c_2\g-\sigma \overline{U}\overline{\Sigma} \v -\z \|_2 +\frac{c_{3,s}^2}{m}+\gamma \c^T\Sigma\Sigma\c -\gamma c_2^2. \nonumber \\ \label{eq:ridgeta18a0b4c1}
\end{eqnarray}
One then also easily obtains for optimal $\z$
\begin{eqnarray}
     \hat{\z}=c_{3,s}\frac{c_2\g-\sigma \overline{U}\overline{\Sigma} \v}{\|c_2\g-\sigma \overline{U}\overline{\Sigma} \v\|_2}, \label{eq:ridgeta18a0b4c1d0}
\end{eqnarray}
which together with (\ref{eq:ridgeta18a0b4c1}) gives
\begin{eqnarray}
  \cL_{2,r}\lp \hat{\x},\hat{\z},\gamma;c_{3,s},c_2,\nu_1\rp
 & = &
-\lp -\frac{1}{2\sqrt{n}}\nu_1 \Sigma\h +\gamma\Sigma\Sigma\c\rp^T   \lp \lambda I +\gamma\Sigma\Sigma\rp^{-1}\lp -\frac{1}{2\sqrt{n}}\nu_1 \Sigma\h +\gamma\Sigma\Sigma\c\rp
\nonumber \\
   & & -\frac{1}{\sqrt{n}}\nu_1\h^T\Sigma\c+ \frac{1}{\sqrt{n}}\nu_1|\|c_2\g-\sigma \overline{U}\overline{\Sigma} \v\|_2 -c_{3,s}| +\frac{c_{3,s}^2}{m}+\gamma \c^T\Sigma\Sigma\c -\gamma c_2^2. \nonumber \\ \label{eq:ridgeta18a0b4c1d1}
\end{eqnarray}
Utilizing appropriate scaling $c_3\rightarrow\frac{c_{3,s}}{\sqrt{m}}$ and keeping $c_3$, $c_2$, $\nu_1$, and $\gamma$ fixed, we then,  relying on concentrations, find analogously to (\ref{eq:ta18a0b4c3})
\begin{align}
\lim_{n\rightarrow\infty} \hspace{-.06in} \mE_{\g,\h,\v}\cL_{2,r}\lp \hat{\x},\hat{\z},\gamma;c_3,c_2,\nu_1\rp
 & = \hspace{-.05in}
\lim_{n\rightarrow\infty} \hspace{-.06in} \mE \Bigg ( \Big. -\lp -\frac{\nu_1}{2\sqrt{n}} \Sigma\h +\gamma\Sigma\Sigma\c\rp^T   \lp \lambda I +\gamma\Sigma\Sigma\rp^{-1}\lp -\frac{\nu_1}{2\sqrt{n}} \Sigma\h +\gamma\Sigma\Sigma\c\rp
\nonumber \\
   & \hspace{-.05in}\quad -\frac{1}{\sqrt{n}}\nu_1\h^T\Sigma\c+ \frac{1}{\sqrt{n}}\nu_1 | \|c_2\g-\sigma \overline{U}\overline{\Sigma} \v \|_2 -c_3 m| +c_3^2 +\gamma \c^T\Sigma\Sigma\c -\gamma c_2^2 \Big.\Bigg ) \nonumber \\
  & =  \lim_{n\rightarrow\infty} \Bigg ( \Big. -\frac{\nu_1^2}{4n}\sum_{i=1}^n \frac{\s_i^2}{\lambda+\gamma\s_i^2}
  - \sum_{i=1}^n \frac{\gamma^2\s_i^4\c_i^2}{\lambda+\gamma\s_i^2}
+\nu_1\sqrt{\alpha}\left | \sqrt{c_2^2+\bar{\sigma}^2}-c_3\right | +c_3^2  \nonumber \\
&  \quad + \gamma \sum_{i=1}^n \s_i^2\c_i^2 -\gamma c_2^2  \Big.\Bigg ), \nonumber \\ \label{eq:ridgeta18a0b4c3}
\end{align}
with $\bar{\sigma}$ as in(\ref{eq:ta18a0b4c4}). A combination of (\ref{eq:ridgeta18a0b1})-(\ref{eq:ridgeta18a0b2c1}), (\ref{eq:ridgeta18a0b4c1d1}), and  (\ref{eq:ridgeta18a0b4c3}) allows us to arrive at the following optimization
\begin{eqnarray}
\lim_{n\rightarrow\infty}  \mE_{\g,\h,\v}f_{rd,r}(\g,\h,\v)
   & = & \lim_{n\rightarrow\infty} \min_{c_3,c_2\geq 0} \max_{\nu_1\geq 0,\gamma}  f_{0,r}(c_2,\nu_1,\gamma), \label{eq:ridgeta18a0b4c5}
\end{eqnarray}
where
\begin{eqnarray}
f_{0,r}(c_3,c_2,\nu_1,\gamma) \triangleq  -\frac{\nu_1^2}{4n}\sum_{i=1}^n \frac{\s_i^2}{\lambda+\gamma\s_i^2}
  - \sum_{i=1}^n \frac{\gamma^2\s_i^4\c_i^2}{\lambda+\gamma\s_i^2}
+\nu_1\sqrt{\alpha}\left | \sqrt{c_2^2+\bar{\sigma}^2} -c_3\right | +c_3^2  + \gamma \sum_{i=1}^n \s_i^2\c_i^2 -\gamma c_2^2.  \label{eq:ridgeta18a0b4c6}
\end{eqnarray}
After taking the derivative with respect to $\nu_1$ we, analogously to (\ref{eq:ta18a0b4c8}), find the following optimal $\nu_1$
\begin{eqnarray}
\hat{\nu}_1 =
 \frac{2\sqrt{\alpha}\left | \sqrt{c_2^2+\bar{\sigma}^2} -c_3\right |}{\frac{1}{n}\sum_{i=1}^n \frac{\s_i^2}{\lambda+\gamma\s_i^2}}. \label{eq:ridgeta18a0b4c8}
\end{eqnarray}
which then,  together with (\ref{eq:ridgeta18a0b4c6}), gives
\begin{equation}
\max_{\nu_1\geq 0} f_{0,r}(c_3,c_2,\nu_1,\gamma) = f_{0,r}(c_3,c_2,\hat{\nu}_1,\gamma) =
  - \sum_{i=1}^n \frac{\gamma^2\s_i^4\c_i^2}{\lambda+\gamma\s_i^2}
+ \frac{\alpha\lp \sqrt{c_2^2+\bar{\sigma}^2} -c_3\rp^2}{\frac{1}{n}\sum_{i=1}^n \frac{\s_i^2}{\lambda+\gamma\s_i^2}} +c_3^2 + \gamma \sum_{i=1}^n \s_i^2\c_i^2 -\gamma c_2^2.  \label{eq:ridgeta18a0b4c9}
\end{equation}
Combining (\ref{eq:ridgeta18a0b4c5}) and (\ref{eq:ridgeta18a0b4c9}) we obtain
\begin{eqnarray}
\lim_{n\rightarrow\infty}  \mE_{\g,\h,\v}f_{rd,r}(\g,\h,\v)
   & = & \lim_{n\rightarrow\infty}  \min_{c_3,c_2\geq 0} \max_{\gamma}  f_{0,r}(c_3,c_2,\hat{\nu}_1,\gamma), \label{eq:ridgeta18a0b4c10}
\end{eqnarray}
with $f_{0,r}(c_3,c_2,\hat{\nu}_1,\gamma)$ (concave in $\gamma$ and convex in $c_3$) as in (\ref{eq:ridgeta18a0b4c9}). After taking the derivative with respect to $c_3$, we find the following optimal $c_3$
\begin{eqnarray}
\hat{c}_3=\frac{\sqrt{c_2^2+\bar{\sigma}^2}}{1+\frac{1}{\alpha n}\sum_{i=1}^n \frac{\s_i^2}{\lambda+\gamma\s_i^2}}, \label{eq:ridgeta18a0b4c10d0}
\end{eqnarray}
which, together with (\ref{eq:ridgeta18a0b4c9}), gives
\begin{eqnarray}
\max_{c_3\geq 0} f_{0,r}(c_3,c_2,\hat{\nu}_1,\gamma) = f_{0,r}(\hat{c}_3,c_2,\hat{\nu}_1,\gamma) =
  - \sum_{i=1}^n \frac{\gamma^2\s_i^4\c_i^2}{\lambda+\gamma\s_i^2}
+ \frac{c_2^2+\bar{\sigma}^2}{1+\frac{1}{\alpha n}\sum_{i=1}^n \frac{\s_i^2}{\lambda+\gamma\s_i^2}}   + \gamma \sum_{i=1}^n \s_i^2\c_i^2 -\gamma c_2^2.  \label{eq:ridgeta18a0b4c10d1}
\end{eqnarray}
Taking derivatives with respect $c_2$ and $\gamma$ and equalling them to zero, we further have
\begin{eqnarray}
\frac{d f_{0,r}(\hat{c}_3,c_2,\hat{\nu}_1,\gamma)}{d c_2} =
  \frac{2 c_2}{1+\frac{1}{\alpha n}\sum_{i=1}^n \frac{\s_i^2}{\lambda+\gamma\s_i^2}}   -2\gamma c_2 =0.  \label{eq:ridgeta18a0b4c11}
\end{eqnarray}
and
\begin{eqnarray}
\frac{d f_{0,r}(\hat{c}_3,c_2,\hat{\nu}_1,\gamma)}{d \gamma}
& = &
  - \sum_{i=1}^n \frac{2\gamma\s_i^4\c_i^2}{\lambda+\gamma\s_i^2}
  + \sum_{i=1}^n \frac{\gamma^2\s_i^6\c_i^2}{\lp \lambda +\gamma\s_i^2\rp^2}
 +\frac{\lp c_2^2+\bar{\sigma}^2\rp}{\lp 1+ \frac{1}{\alpha n}\sum_{i=1}^n \frac{\s_i^2}{\lambda+\gamma\s_i^2}\rp^2}
 \lp \frac{1}{\alpha n}\sum_{i=1}^n \frac{\s_i^4}{\lp \lambda  +\gamma\s_i^2\rp^2}\rp
  \nonumber \\
 & &
 + \sum_{i=1}^n \s_i^2\c_i^2 - c_2^2=0. \label{eq:ridgeta18a0b4c12}
\end{eqnarray}
From (\ref{eq:ridgeta18a0b4c11}) we first have that the optimal $\hat{\gamma}$ satisfies the following equation
\begin{eqnarray}
  \frac{1}{n}\sum_{i=1}^n \frac{\hat{\gamma}\s_i^2}{\lambda+\hat{\gamma}\s_i^2} =\alpha(1-\hat{\gamma}).  \label{eq:ridgeta18a0b4c13}
\end{eqnarray}
For such a $\hat{\gamma}$ we then from (\ref{eq:ridgeta18a0b4c12}) have that the optimal $\hat{c}_2$ satisfies the following equation
\begin{eqnarray}
   \sum_{i=1}^n \frac{\lambda^2 \s_i^2\c_i^2}{\lp \lambda +\hat{\gamma}\s_i^2\rp^2}
 +\frac{\lp \hat{c}_2^2+\bar{\sigma}^2\rp}{\lp 1 + \frac{1}{\alpha n}\sum_{i=1}^n \frac{\s_i^2}{\lambda+\hat{\gamma}\s_i^2}\rp^2}
 \lp \frac{1}{\alpha n}\sum_{i=1}^n \frac{\s_i^4}{\lp \lambda +\hat{\gamma}\s_i^2\rp^2}\rp
   - \hat{c}_2^2=0.  \label{eq:ridgeta18a0b4c14}
\end{eqnarray}
Setting
\begin{eqnarray}
a_{2,r} =
\frac{  \frac{1}{n}\sum_{i=1}^n \frac{\s_i^4}{\lp \lambda+\hat{\gamma}\s_i^2\rp^2}}{\lp \alpha+ \frac{1}{  n}\sum_{i=1}^n \frac{\s_i^2}{\lambda +\hat{\gamma}\s_i^2}\rp^2},  \label{eq:ridgeta18a0b4c15}
\end{eqnarray}
we from (\ref{eq:ridgeta18a0b4c14}) find
\begin{eqnarray}
\hat{c}_2^2= \frac{  \sum_{i=1}^n \frac{\lambda^2 \s_i^2\c_i^2}{\lp \lambda+\hat{\gamma}\s_i^2\rp^2} + \alpha \bar{\sigma}^2a_{2,r} }{1- \alpha a_{2,r}}.  \label{eq:ridgeta18a0b4c16}
\end{eqnarray}
From (\ref{eq:ridgeta18a0b4c9}) and (\ref{eq:ridgeta18a0b4c10}), we then have
\begin{eqnarray}
\lim_{n\rightarrow\infty}  \mE_{\g,\h,\v}f_{rd,r}(\g,\h,\v)
   & = & \lim_{n\rightarrow\infty} \min_{c_2\geq 0} \max_{\gamma}  f_0(\hat{c}_3,c_2,\hat{\nu}_1,\gamma) \nonumber \\
   & = & \lim_{n\rightarrow\infty} f_0(\hat{c}_3,\hat{c}_2,\hat{\nu}_1,\hat{\gamma}) \nonumber \\
   & = &  \lim_{n\rightarrow\infty} \lp  - \sum_{i=1}^n \frac{\hat{\gamma}^2\s_i^4\c_i^2}{\lambda+\hat{\gamma}\s_i^2}
+ \frac{\alpha\lp \hat{c}_2^2+\bar{\sigma}^2\rp}{\alpha +\frac{1}{n}\sum_{i=1}^n \frac{\s_i^2}{\lambda+\hat{\gamma}\s_i^2}} + \hat{\gamma} \sum_{i=1}^n \s_i^2\c_i^2 - \hat{\gamma} \hat{c}_2^2 \rp \nonumber \\
   & = &   \lim_{n\rightarrow\infty}  \lp \sum_{i=1}^n \frac{\hat{\gamma}\s_i^2\c_i^2}{\lambda +\hat{\gamma}\s_i^2}
+ \frac{\alpha\lp \hat{c}_2^2+\bar{\sigma}^2\rp}{\alpha +\frac{1}{n}\sum_{i=1}^n \frac{\s_i^2}{\lambda +\hat{\gamma}\s_i^2}} - \hat{\gamma} \hat{c}_2^2 \rp \nonumber \\
   & = &   \lim_{n\rightarrow\infty}   \sum_{i=1}^n \frac{\hat{\gamma}\s_i^2\c_i^2}{\lambda+\hat{\gamma}\s_i^2}
+ \hat{\gamma}\lp \hat{c}_2^2+\bar{\sigma}^2\rp  - \hat{\gamma} \hat{c}_2^2  \nonumber \\
   & = &   \lim_{n\rightarrow\infty}  \sum_{i=1}^n \frac{\hat{\gamma}\s_i^2\c_i^2}{\lambda +\hat{\gamma}\s_i^2}
+ \hat{\gamma}\bar{\sigma}^2, \label{eq:ridgeta18a10}
\end{eqnarray}
where $\hat{c}_2$ and $\hat{\gamma}$ are given through (\ref{eq:ridgeta18a0b4c13}), (\ref{eq:ridgeta18a0b4c15}), and (\ref{eq:ridgeta18a0b4c16}). For the very same $\hat{c}_2$ and $\hat{\gamma}$, we can then from (\ref{eq:ridgeta18a0b4c10d0}) and (\ref{eq:ridgeta18a0b4c8})  obtain for the optimal $c_3$ and $\nu_1$
\begin{eqnarray}
\hat{c}_3=\frac{\sqrt{\hat{c}_2^2+\bar{\sigma}^2}}{1+\frac{1}{\alpha n}\sum_{i=1}^n \frac{\s_i^2}{\lambda+\gamma\s_i^2}}
=\hat{\gamma}\sqrt{\hat{c}_2^2+\bar{\sigma}^2},  \label{eq:ridgeta18a0b4c17d0}
\end{eqnarray}
\begin{eqnarray}
\hat{\nu}_1 =
 \frac{2\sqrt{\alpha}\left | \sqrt{\hat{c}_2^2+\bar{\sigma}^2} -\hat{c}_3 \right |}{\frac{1}{n}\sum_{i=1}^n \frac{\s_i^2}{\lambda+\hat{\gamma}\s_i^2}}
 =
 \frac{2\hat{\gamma}\left | \sqrt{\hat{c}_2^2+\bar{\sigma}^2} -\hat{c}_3\right |}{\sqrt{\alpha}(1-\hat{\gamma}) }
 =
 \frac{2\hat{\gamma} \sqrt{\hat{c}_2^2+\bar{\sigma}^2} }{\sqrt{\alpha} }. \label{eq:ridgeta18a0b4c17d1}
\end{eqnarray}
For the completeness we also determine the optimal $\hat{c}_4^2=\lim_{n\rightarrow \infty} \mE_{\g,\h,\v}\|\hat{\x}\|_2^2$
\begin{eqnarray}
\hat{c}_4^2 & = & \lim_{n\rightarrow \infty} \mE_{\g,\h,\v}\| \hat{\x}\|_2^2 \nonumber \\
 & = & \lim_{n\rightarrow \infty} \mE_{\g,\h,\v}\left \| \lp \lambda I +\hat{\gamma}\Sigma\Sigma\rp^{-1}\lp -\frac{\hat{\nu}_1}{2\sqrt{n}} \Sigma\h +\gamma\Sigma\Sigma\c\rp \right \|_2^2 \nonumber \\
 & = &
\lim_{n\rightarrow \infty}  \frac{\hat{\nu}_1^2}{4n}\sum_{i=1}^n\frac{\s_i^2}{\lp \lambda+\hat{\gamma}\s_i^2\rp^2}
+
\lim_{n\rightarrow \infty}  \sum_{i=1}^n\frac{\hat{\gamma}^2\s_i^4\c_i^2}{\lp \lambda+\hat{\gamma}\s_i^2\rp^2}. \label{eq:ridgeta18a0b4c17d1e0}
\end{eqnarray}
We summarize the above discussion in the following theorem (basically a joint Ridge analogoue to the GLS related Lemma \ref{lemma:lemma2} and Theorem \ref{thm:thm2}).

\begin{theorem}(Characterization of Ridge estimator) For a given fixed real positive number $\lambda$, assume the setup of Lemma \ref{lemma:ridgelemma1} and Theorem \ref{thm:ridgethm1} with  $\xi_{rr}(\lambda)$ as in (\ref{eq:ridgerandlincons1}) or (\ref{eq:ridgerandlincons1a0}) and $\xi_{rd,r}$ as in (\ref{eq:ridgeta16}). Let $\hat{\beta}=\beta_{rr}(\lambda)$ where $\beta_{rr}(\lambda)$ is as in
(\ref{eq:model014}). Also, let the Ridge estimator prediction risk, $R(\bar{\beta},\hat{\beta})=R(\bar{\beta},\beta_{rr}(\lambda))$, be defined via (\ref{eq:model020}). Assuming a large $n$ linear regime with $\alpha=\lim_{n\rightarrow\infty}\frac{m}{n}$, let unique $\hat{\gamma}$ be such that
\begin{eqnarray}
\lim_{n\rightarrow\infty} \frac{1}{n}\sum_{i=1}^n \frac{\hat{\gamma}\s_i^2}{\lambda+\hat{\gamma}\s_i^2} =\alpha (1-\hat{\gamma}).  \label{eq:ridgethmaaeq1}
\end{eqnarray}
Set
\begin{eqnarray}
a_2 =
\frac{\lim_{n\rightarrow\infty} \frac{1}{n}\sum_{i=1}^n \frac{\s_i^4}{\lp \lambda +\hat{\gamma}\s_i^2\rp^2}}{\lp \alpha + \lim_{n\rightarrow\infty} \frac{1}{n}\sum_{i=1}^n \frac{\s_i^2}{\lambda+\hat{\gamma}\s_i^2}\rp^2}
=\frac{\hat{\gamma}^2}{\alpha^2 } \lim_{n\rightarrow\infty} \frac{1}{n}\sum_{i=1}^n \frac{\s_i^4}{\lp \lambda+\hat{\gamma}\s_i^2\rp^2}.  \label{eq:ridgethmaaeq2}
\end{eqnarray}
One then has \vspace{-.0in}
\begin{eqnarray}
 \lim_{n\rightarrow\infty} \mE_{Z,\v}\xi_{rr}(\lambda)
 &= &\xi_{rp,r}=  \xi_{rd,r}  = \lim_{n\rightarrow\infty} \mE_{\g,\h,\v} f_{rd,r}(\g,\h,\v)
 =
\lim_{n\rightarrow\infty}  \sum_{i=1}^n \frac{\hat{\gamma}\s_i^2\c_i^2}{\lambda+\hat{\gamma}\s_i^2}
+ \hat{\gamma}\bar{\sigma}^2 \nonumber \\
 \lim_{n\rightarrow\infty} \mE_{Z,\v} R(\bar{\beta},\beta_{rr}(\lambda))
& = &
\lim_{n\rightarrow\infty} \hat{c}_2^2
  =
 \frac{  \lim_{n\rightarrow\infty} \sum_{i=1}^n \frac{\lambda^2 \s_i^2\c_i^2}{\lp \lambda+\hat{\gamma}\s_i^2\rp^2} +\alpha \bar{\sigma}^2a_2 }{1-\alpha a_2} \nonumber \\
 & & \lim_{n\rightarrow\infty}\hat{\nu}_1
  = \frac{2\hat{\gamma}\sqrt{\hat{c}_2^2+\bar{\sigma}^2}}{\sqrt{\alpha}} \nonumber \\
 \lim_{n\rightarrow\infty} \mE_{Z,\v} \|\y-X\beta_{rr}(\lambda)\|_2^2
& = &
\lim_{n\rightarrow\infty} \hat{c}_3^2
  = \hat{\gamma}^2(\hat{c}_2^2+\bar{\sigma}^2) \nonumber \\
 \lim_{n\rightarrow\infty} \mE_{Z,\v} \|\beta_{rr}(\lambda)\|_2^2
& = &
 \lim_{n\rightarrow\infty}\hat{c}_4^2
 =
 \lim_{n\rightarrow \infty}  \frac{\hat{\nu}_1^2}{4n}\sum_{i=1}^n\frac{\s_i^2}{\lp \lambda+\hat{\gamma}\s_i^2\rp^2}
+
\lim_{n\rightarrow \infty}  \sum_{i=1}^n\frac{\hat{\gamma}^2\s_i^4\c_i^2}{\lp \lambda+\hat{\gamma}\s_i^2\rp^2},
\label{eq:ridgethmaaeq3}
\end{eqnarray}
and for any fixed $\epsilon>0$
\begin{eqnarray}
 \lim_{n\rightarrow\infty}\mP_{Z,\v} \lp  (1-\epsilon)\mE_{Z,\v}\xi_{rr}(\lambda) \leq  \xi_{rr}(\lambda) \leq (1+\epsilon)\mE_{Z,\v}\xi_{rr}(\lambda)\rp
& \longrightarrow & 1 \nonumber \\
 \lim_{n\rightarrow\infty}\mP_{Z,\v} \lp  (1-\epsilon)\mE_{Z,\v} R(\bar{\beta},\beta_{rr}(\lambda))  \leq  R(\bar{\beta},\beta_{rr}(\lambda)) \leq (1+\epsilon)\mE_{Z,\v}R(\bar{\beta},\beta_{rr}(\lambda))\rp
& \longrightarrow & 1  \nonumber \\
 \lim_{n\rightarrow\infty}\mP_{\g,\h,\v} \lp  (1-\epsilon)\hat{\nu}_1 \leq  \nu_1 \leq (1+\epsilon)\hat{\nu}_1\rp
& \longrightarrow & 1 \nonumber \\
 \lim_{n\rightarrow\infty}\mP_{Z,\v} \lp  (1-\epsilon)\mE_{Z,\v}\|\y-X\beta_{rr}(\lambda)\|_2^2 \leq \|\y-X\beta_{rr}(\lambda)\|_2^2  \leq (1+\epsilon)\mE_{Z,\v}\|\y-X\beta_{rr}(\lambda)\|_2^2 \rp
& \longrightarrow & 1\nonumber \\
 \lim_{n\rightarrow\infty}\mP_{Z,\v} \lp  (1-\epsilon)\mE_{Z,\v}\|\beta_{rr}(\lambda)\|_2^2 \leq \|\beta_{rr}(\lambda)\|_2^2  \leq (1+\epsilon)\mE_{Z,\v}\|\beta_{rr}(\lambda)\|_2^2 \rp
& \longrightarrow & 1.\nonumber \\
\label{eq:ridgethmaaeq4}
\end{eqnarray}
\label{thm:ridgethm2}
\end{theorem}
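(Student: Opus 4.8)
The plan is to mirror the proof of Theorem~\ref{thm:thm2} almost line for line, since the ridge program (\ref{eq:model014}) is again an unconstrained convex problem in $\beta$ and the four RDT steps have already been executed in Lemma~\ref{lemma:ridgelemma1}, Theorem~\ref{thm:ridgethm1}, and the closed-form computation culminating in (\ref{eq:ridgeta18a10}). What remains is to assemble these pieces into the chain of (in)equalities in (\ref{eq:ridgethmaaeq3}) and (\ref{eq:ridgethmaaeq4}).

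First I would establish the lower bound. Theorem~\ref{thm:ridgethm1}, and in particular (\ref{eq:ridgeta16a0}), gives $\lim_{n\to\infty}\mE_{Z,\v}\xi_{rr}(\lambda)=\xi_{rp,r}\geq \xi_{rd,r}$, and the evaluation in (\ref{eq:ridgeta18a0})--(\ref{eq:ridgeta18a10}) --- solving the inner maximization over $\nu$, then the minimizations over $\x$ and $\z$, then over $c_3$, and finally the scalar saddle over $c_2,\gamma,\nu_1$ via the stationarity relations (\ref{eq:ridgeta18a0b4c13})--(\ref{eq:ridgeta18a0b4c16}) --- identifies $\xi_{rd,r}$ with $\lim_{n\to\infty}\sum_{i=1}^n \frac{\hat{\gamma}\s_i^2\c_i^2}{\lambda+\hat{\gamma}\s_i^2}+\hat{\gamma}\bar{\sigma}^2$, where $\hat{\gamma}$ is the root of (\ref{eq:ridgethmaaeq1}). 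Because the primal is convex (strictly so for $\lambda>0$) and strong random duality is in force, the reversal arguments of \cite{StojnicRegRndDlt10,StojnicGorEx10} then upgrade $\geq$ to $=$, giving the first line of (\ref{eq:ridgethmaaeq3}) together with $\xi_{rp,r}=\xi_{rd,r}$.

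Next I would read off the remaining deterministic limits from the very same random-dual optimization. By (\ref{eq:model020}) together with the change of variables (\ref{eq:ta18a0b0})--(\ref{eq:ta18a0b0c0}) one has $R(\bar{\beta},\beta_{rr}(\lambda))=(\bar{\beta}-\beta_{rr})^TV\Sigma\Sigma V^T(\bar{\beta}-\beta_{rr})=c_2^2$, exactly as in (\ref{eq:proofthm2eq1a0}); hence the risk is the random value of the auxiliary scalar $c_2$, which concentrates around $\hat{c}_2^2$ of (\ref{eq:ridgeta18a0b4c16}) (with $a_2=a_{2,r}$ from (\ref{eq:ridgeta18a0b4c15})). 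Likewise $\y-X\beta_{rr}(\lambda)$ equals $-\z$ at the optimum, so after the $\sqrt{m}$ rescaling $\|\y-X\beta_{rr}(\lambda)\|_2$ concentrates around $\hat{c}_3$ of (\ref{eq:ridgeta18a0b4c17d0}), and $\|\beta_{rr}(\lambda)\|_2^2=\|\x\|_2^2$ concentrates around $\hat{c}_4^2$, obtained by substituting the explicit optimizer $\hat{\x}$ of (\ref{eq:ridgta18a0b4}) into $\|\cdot\|_2^2$ and taking expectations as in (\ref{eq:ridgeta18a0b4c17d1e0}); the optimal multiplier is $\hat{\nu}_1$ of (\ref{eq:ridgeta18a0b4c17d1}). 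The four two-sided bounds in (\ref{eq:ridgethmaaeq4}) then follow from the identical chaining used in (\ref{eq:proofthm2eq2})--(\ref{eq:proofthm2eq4}): the trivial Gaussian concentration of $f_{rd,r}$, $c_2$, $c_{3,s}$, $\nu_1$, $\|\beta_{rr}(\lambda)\|_2$ about their means; the Gordon inequality (\ref{eq:ridgeta17}); the reversal inequality from convexity and strong random duality; and the probability axioms to squeeze each quantity onto its RDT prediction with probability tending to $1$.

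The hard part will be not any particular computation but the two ingredients that make the reversal legitimate: (i) confirming that strong random duality genuinely holds for the ridge program, i.e.\ that once the multiplier $\gamma$ for $\|\Sigma(\x-\c)\|_2=c_2$ is introduced and $\z$ is eliminated the auxiliary minimax (\ref{eq:ridgeta18a0}) carries no duality gap, so that Gordon's comparison can be reversed in the sense of \cite{StojnicGorEx10}; and (ii) checking that the root $\hat{\gamma}$ of (\ref{eq:ridgethmaaeq1}) is unique --- this is the strict monotonicity in $\gamma$ of $\frac{1}{n}\sum_{i=1}^n\frac{\gamma\s_i^2}{\lambda+\gamma\s_i^2}-\alpha(1-\gamma)$ on $(0,\infty)$ --- and that $1-\alpha a_2>0$ at that root, so that $\hat{c}_2^2$ in (\ref{eq:ridgeta18a0b4c16}) is well-defined, finite and nonnegative. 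Both reduce to elementary monotonicity facts about the limiting spectral sums in the large-$n$ proportional regime, under the boundedness assumptions on $\{\s_i\}$, $\{\c_i\}$ and $\bar{\sigma}$ already standing in Lemma~\ref{lemma:ridgelemma1}.
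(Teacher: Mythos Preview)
Your proposal is correct and follows essentially the same route as the paper: the paper's proof is a one-line pointer to the preceding random-dual computation (\ref{eq:ridgeta18a0})--(\ref{eq:ridgeta18a0b4c17d1e0}), trivial concentrations of $f_{rd,r}$, $c_2$, $\nu_1$, and the chaining/reversal arguments already spelled out in the proof of Theorem~\ref{thm:thm2}, which is exactly what you have unpacked. Your added remarks on uniqueness of $\hat{\gamma}$ and positivity of $1-\alpha a_2$ are reasonable due-diligence items the paper leaves implicit under its standing boundedness assumptions.
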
\vspace{-.17in}
\begin{proof}
Follows from the above discussion, trivial concentrations of $f_{rd,r}(\g,\h,\v)$, $c_2$, and $\nu_1$, and utilization of literally the asme arguments as in the proof of Theorem  \ref{thm:thm2}.
\end{proof}

\vspace{.1in}
\noindent \underline{ 4) \textbf{\emph{Double checking strong random duality:}}} As earlier due to the underlying convexity,  the strong random duality and the reversal arguments of \cite{StojnicGorEx10,StojnicRegRndDlt10} are in place as well.

As was the case earlier when we considered the GLS estimator, it is again not that difficult to see that when $\bar{\sigma}=\sigma$, the results of the above theorem match the ones obtained in \cite{HMRT22,Dicker16,DobWag18}.

\subsection{Precise analysis of the LS estimator}
\label{sec:analls}

While the formal definition of ridge regression typically assumes that the value of the ridge parameter  $\lambda$ is positive, the mathematical analysis from Section \ref{sec:analridge} holds unaltered even if $\lambda=0$ provided that $\alpha>1$. As that is precisely the scenario of the LS estimator (see (\ref{eq:model017})-(\ref{eq:model019})), all of the above results automatically translate. However, many of the evaluations simplify and one ultimately has the following (much simpler) corollary of Theorem \ref{thm:ridgethm2}.

\begin{corollary}(Characterization of LS estimator) Assume the setup of Theorem \ref{thm:ridgethm2} with $\lambda =0$ and $\alpha>1$. Let unique $\hat{\gamma}$ be such that
\begin{eqnarray}
\hat{\gamma}=1-\frac{1}{\alpha}.  \label{eq:lsridgethmaaeq1}
\end{eqnarray}
Set
\begin{eqnarray}
a_2 =\frac{1}{\alpha^2}.  \label{eq:lsridgethmaaeq2}
\end{eqnarray}
One then has \vspace{-.0in}
\begin{eqnarray}
 \lim_{n\rightarrow\infty} \mE_{Z,\v}\xi_{ls}
 &= &
  \lim_{n\rightarrow\infty} \mE_{Z,\v}\xi_{rr}(0)
=
\|\bar{\beta}|_2^2+ \hat{\gamma}\bar{\sigma}^2 \nonumber \\
 \lim_{n\rightarrow\infty} \mE_{Z,\v} R(\bar{\beta},\beta_{ls})
& = &
 \lim_{n\rightarrow\infty} \mE_{Z,\v} R(\bar{\beta},\beta_{ls}(0))
  =
 \frac{\alpha \bar{\sigma}^2a_2 }{1-\alpha a_2}=
 \frac{\bar{\sigma}^2 }{\alpha -1} \nonumber \\
 & & \lim_{n\rightarrow\infty} \hat{\nu}_1
  = 2\bar{\sigma}\frac{\sqrt{\alpha -1}}{\alpha} \nonumber \\
 \lim_{n\rightarrow\infty} \mE_{Z,\v} \|\y-X\beta_{ls}\|_2^2
& = &
 \lim_{n\rightarrow\infty} \mE_{Z,\v} \|\y-X\beta_{rr}(0)\|_2^2
= \lim_{n\rightarrow\infty}\hat{c}_3^2
 =   \bar{\sigma}^2\frac{\alpha -1}{\alpha} \nonumber \\
 \lim_{n\rightarrow\infty} \mE_{Z,\v} \|\beta_{ls}\|_2^2
& = &
 \lim_{n\rightarrow\infty} \mE_{Z,\v} \|\beta_{rr}(0)\|_2^2
= 1+
 \lim_{n\rightarrow \infty}  \frac{\hat{\nu}_1^2}{4\hat{\gamma}^2n}\sum_{i=1}^n\frac{1}{\s_i^2}
  =\|\bar{\beta}\|_2^2+
\frac{\bar{\sigma}^2}{\alpha-1} \lim_{n\rightarrow \infty}  \frac{1}{n}\sum_{i=1}^n\frac{1}{\s_i^2},\nonumber \\
\label{eq:lsridgethmaaeq3}
\end{eqnarray}
and for any fixed $\epsilon>0$
\begin{eqnarray}
 \lim_{n\rightarrow\infty}\mP_{Z,\v} \lp  (1-\epsilon)\mE_{Z,\v}\xi_{ls} \leq  \xi_{ls} \leq (1+\epsilon)\mE_{Z,\v}\xi_{ls}\rp
& \longrightarrow & 1 \nonumber \\
 \lim_{n\rightarrow\infty}\mP_{Z,\v} \lp  (1-\epsilon)\mE_{Z,\v} R(\bar{\beta},\beta_{ls})  \leq  R(\bar{\beta},\beta_{ls}) \leq (1+\epsilon)\mE_{Z,\v}R(\bar{\beta},\beta_{ls})\rp
& \longrightarrow & 1  \nonumber \\
 \lim_{n\rightarrow\infty}\mP_{\g,\h,\v} \lp  (1-\epsilon)\hat{\nu}_1 \leq  \nu_1 \leq (1+\epsilon)\hat{\nu}_1\rp
& \longrightarrow & 1 \nonumber \\
 \lim_{n\rightarrow\infty}\mP_{Z,\v} \lp  (1-\epsilon)\mE_{Z,\v}\|\y-X\beta_{ls}\|_2^2 \leq \|\y-X\beta_{ls}\|_2^2  \leq (1+\epsilon)\mE_{Z,\v}\|\y-X\beta_{ls}\|_2^2 \rp
& \longrightarrow & 1\nonumber \\
 \lim_{n\rightarrow\infty}\mP_{Z,\v} \lp  (1-\epsilon)\mE_{Z,\v}\|\beta_{ls}\|_2^2 \leq \|\beta_{ls}\|_2^2  \leq (1+\epsilon)\mE_{Z,\v}\|\beta_{ls}\|_2^2 \rp
& \longrightarrow & 1.\label{eq:lsridgethmaaeq4}
\end{eqnarray}
\label{thm:lsridgethm2}
\end{corollary}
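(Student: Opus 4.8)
The plan is to obtain the corollary as a direct specialization of Theorem~\ref{thm:ridgethm2} to $\lambda=0$, so almost all of the work is algebraic bookkeeping. First I would invoke the observation made at the start of Section~\ref{sec:analls}: when $\alpha>1$ the entire RDT derivation behind Theorem~\ref{thm:ridgethm2} goes through unchanged at $\lambda=0$, because the only place $\lambda$ enters critically is in the inverse $(\lambda I+\gamma\Sigma\Sigma)^{-1}$, and since $\Sigma$ is positive definite with spectrum bounded away from zero (part of the standing boundedness hypothesis) this inverse is well defined for $\gamma>0$ even when $\lambda=0$. Consequently all the identities (\ref{eq:ridgethmaaeq1})--(\ref{eq:ridgethmaaeq4}) remain valid verbatim with $\lambda$ set to $0$, and it only remains to simplify them.

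Second I would evaluate the fixed-point equation (\ref{eq:ridgethmaaeq1}) at $\lambda=0$. Since $\frac1n\sum_{i=1}^n\frac{\hat\gamma\s_i^2}{\hat\gamma\s_i^2}=1$, equation (\ref{eq:ridgethmaaeq1}) collapses to $1=\alpha(1-\hat\gamma)$, i.e. $\hat\gamma=1-\tfrac1\alpha$; this lies in $(0,1)$ precisely because $\alpha>1$, and it is the unique root, which is (\ref{eq:lsridgethmaaeq1}). Plugging $\lambda=0$ and this $\hat\gamma$ into (\ref{eq:ridgethmaaeq2}) gives $\frac1n\sum\frac{\s_i^4}{(\hat\gamma\s_i^2)^2}=\hat\gamma^{-2}$ and hence $a_2=\frac{\hat\gamma^2}{\alpha^2}\cdot\hat\gamma^{-2}=\alpha^{-2}$, which is (\ref{eq:lsridgethmaaeq2}); in particular $1-\alpha a_2=\frac{\alpha-1}{\alpha}>0$.

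Third, I would substitute $\lambda=0$, $\hat\gamma=1-\tfrac1\alpha$, and $a_2=\alpha^{-2}$ into each line of (\ref{eq:ridgethmaaeq3}). In the objective line the factor $\frac{\hat\gamma\s_i^2}{\hat\gamma\s_i^2}=1$, so the sum becomes $\sum_i\c_i^2=\|V^T\bar\beta\|_2^2=\|\bar\beta\|_2^2$ (using that $V$ is orthogonal), yielding $\|\bar\beta\|_2^2+\hat\gamma\bar\sigma^2$. In the risk line the ``bias'' term $\sum_i\frac{\lambda^2\s_i^2\c_i^2}{(\lambda+\hat\gamma\s_i^2)^2}$ vanishes at $\lambda=0$, leaving $\frac{\alpha\bar\sigma^2a_2}{1-\alpha a_2}=\frac{\bar\sigma^2/\alpha}{(\alpha-1)/\alpha}=\frac{\bar\sigma^2}{\alpha-1}$. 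Since then $\hat c_2^2+\bar\sigma^2=\frac{\alpha\bar\sigma^2}{\alpha-1}$, the formulas for $\hat\nu_1=\frac{2\hat\gamma\sqrt{\hat c_2^2+\bar\sigma^2}}{\sqrt\alpha}$ and $\hat c_3^2=\hat\gamma^2(\hat c_2^2+\bar\sigma^2)$ reduce to $\frac{2\bar\sigma\sqrt{\alpha-1}}{\alpha}$ and $\bar\sigma^2\frac{\alpha-1}{\alpha}$ respectively; finally in the $\hat c_4^2$ line the second sum collapses to $\sum_i\c_i^2=\|\bar\beta\|_2^2$ and the first to $\frac{\hat\nu_1^2}{4\hat\gamma^2n}\sum_i\s_i^{-2}$, where $\frac{\hat\nu_1^2}{4\hat\gamma^2}=\frac{\bar\sigma^2(\alpha-1)}{\alpha^2\hat\gamma^2}=\frac{\bar\sigma^2}{\alpha-1}$. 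This reproduces (\ref{eq:lsridgethmaaeq3}) exactly, and the concentration statements (\ref{eq:lsridgethmaaeq4}) are nothing but (\ref{eq:ridgethmaaeq4}) with $\lambda=0$, so they transfer with no extra work.

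The only genuine subtlety, and the step I would be most careful about, is not a hard estimate but a well-posedness check: at $\lambda=0$ the quantity $\lim_{n\to\infty}\frac1n\sum_i\s_i^{-2}$ enters the characterization of $\|\beta_{ls}\|_2^2$ explicitly, so one must record that the standing assumption that all limiting quantities are well defined and bounded forces the singular values of $\Sigma$ to stay bounded away from zero, and that $\alpha>1$ is exactly what guarantees both the existence and uniqueness of $\hat\gamma\in(0,1)$ and the strict positivity $1-\alpha a_2=\tfrac{\alpha-1}{\alpha}>0$ needed for the denominators in (\ref{eq:lsridgethmaaeq3}). Everything else is the routine substitution sketched above.
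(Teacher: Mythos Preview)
Your proposal is correct and follows exactly the same approach as the paper: invoke that the derivation behind Theorem~\ref{thm:ridgethm2} remains valid at $\lambda=0$ when $\alpha>1$, and then plug $\lambda=0$ into (\ref{eq:ridgethmaaeq1})--(\ref{eq:ridgethmaaeq3}) to obtain the simplified expressions. Your write-up is in fact more detailed than the paper's own proof, which simply states that the substitution yields (\ref{eq:lsridgethmaaeq1})--(\ref{eq:lsridgethmaaeq3}) without spelling out the intermediate algebra.
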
\vspace{-.17in}
\begin{proof}
 The discussion above Theorem \ref{thm:ridgethm2} holds for $\lambda=0$ as well which implies that all the expressions given in Theorem \ref{thm:ridgethm2} are true for $\lambda=0$ as well. Plugging $\lambda=0$ in (\ref{eq:ridgethmaaeq1}),  (\ref{eq:ridgethmaaeq2}), and  (\ref{eq:ridgethmaaeq3}) then gives (\ref{eq:lsridgethmaaeq1}),  (\ref{eq:lsridgethmaaeq2}), and  (\ref{eq:lsridgethmaaeq3}).
\end{proof}

\subsection{Numerical results}
\label{sec:numresuncorr}

We supplement the above theoretical results with the ones obtained through numerical simulations. Let a matrix $\bar{\cA}(q)$ be defined via its $ij$-th entry, $\bar{\cA}_{ij}(q)$, in the following way
 \begin{eqnarray}
\bar{\cA}_{ij}(q)=q^{|j-i|}.  \label{eq:numuncorr1}
\end{eqnarray}
Let then $\cA(q)$ be
 \begin{eqnarray}
\cA(q)=I + \bar{\cA}(q).  \label{eq:numuncorr2}
\end{eqnarray}
 For the covariance matrices, we take
\begin{eqnarray}
A & = &\cA(q) \nonumber \\
\overline{A} &= & \cA(q_v).
  \label{eq:numuncorr3}
\end{eqnarray}
In Figure \ref{fig:fig1}, we choose $q=0.5,q_v=0.4$, and  $n=600$. The full curves are the theoretical predictions and the dots are the corresponding simulated values. Even for fairly small dimensions, we observe an excellent agreement between the theoretical predictions and simulations results.

Statistical setups in all simulations are chosen to perfectly match the ones used in the corresponding theoretical analyses. However, as the theoretical results hold way more generally, we conducted simulations for two additional (and different) scenarios. In one of them we generated $A$ and $E$ as  iid $\pm1$ Bernoullis and in the other, as iid $\sqrt{12}\mbox{Unif}[-0.5,0.5]$. The obtained results were virtually identical to those already shown in Figure \ref{fig:fig1} and there was no point including them..

\begin{figure}[h]
\centering
\centerline{\includegraphics[width=1\linewidth]{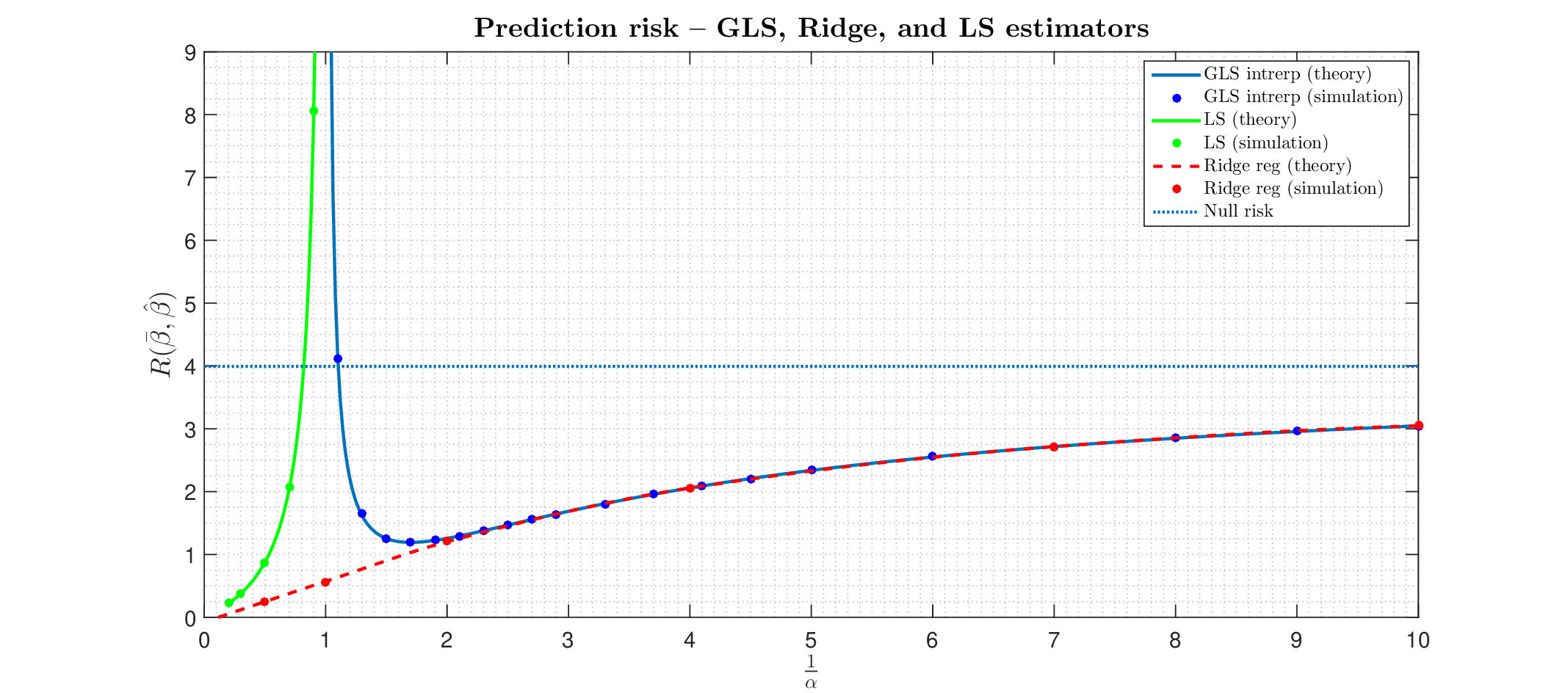}}
\caption{Prediction risk -- all three estimators (GLS, Ridge, and LS); row-correlated features $X$; Covariance matrices are: $A=\cA(q)$, and $\overline{A}=\cA(q_v)$; $q=0.5,q_v=0.4$.}
\label{fig:fig1}
\end{figure}

\section{Inter-rows correlations}
\label{sec:rowcorr}

Models considered so far assumed that all feature vectors $X_{i,:}$ (basically the rows of matrix $X$) are independent of each other. In this section we show how the mathematical machinery introduced earlier covers scenarios with correlated rows of $X$. To that end we now assume for a full rank $\overline{\overline{A}}\in\mR^{m\times m}$ that
\begin{eqnarray}
X=\overline{\overline{A}}ZA.\label{eq:corra0}
\end{eqnarray}
Let the SVD of $\overline{\overline{A}}$ be given as
\begin{eqnarray}
 \overline{\overline{A}}=\overline{\overline{U}}\overline{\overline{\Sigma}} \overline{\overline{V}}^T.\label{eq:corra1}
\end{eqnarray}
One can then rewrite  (\ref{eq:model01})  and (\ref{eq:model05}) as
\begin{eqnarray}
\y=X\bar{\beta}+\e=\overline{\overline{A}}ZA\bar{\beta}+\sigma\overline{A}\v. \label{eq:corrmodel05}
\end{eqnarray}
Using the SVDs ((\ref{eq:model08}) and (\ref{eq:corra1}) one can then write
 \begin{eqnarray}
\y=\overline{\overline{A}}ZA\bar{\beta}+\sigma \overline{A}\v
=
\overline{\overline{U}}\overline{\overline{\Sigma}} \overline{\overline{V}}^T ZU\Sigma V^T\bar{\beta}+\sigma \overline{U}\overline{\Sigma} \overline{V}^T\v. \label{eq:corrmodel09}
\end{eqnarray}
Due to rotational symmetry of both rows and columns of $Z$ and rows of $\v$, (\ref{eq:model09}) and (\ref{eq:model010}) are  statistically equivalent to the following
\begin{eqnarray}
\y=\overline{\overline{U}}\overline{\overline{\Sigma}} Z \Sigma V^T\bar{\beta}+\sigma \overline{U}\overline{\Sigma} \v. \label{eq:corrmodel010}
\end{eqnarray}
This basically mean that one can view $X$ and $\e$ as
\begin{eqnarray}
X=\overline{\overline{U}}\overline{\overline{\Sigma}}  Z \Sigma V^T, \quad \e=\overline{U}\overline{\Sigma} \v. \label{eq:corrmodel010a0}
\end{eqnarray}
All three estimators (GLS, Ridge, and LS) analyzed earlier can be analyzed again within the context of the generalized model given by (\ref{eq:corrmodel010}) and (\ref{eq:corrmodel010a0}). In fact, all Section \ref{sec:analgls} GLS related results including  Theorem \ref{thm:thm2}, continue to hold with only $\bar{\sigma}$ redefined as
\begin{eqnarray}
 \bar{\sigma} =\sigma \sqrt{\frac{\tr\lp \overline{\Sigma}\overline{U}^T   \overline{\overline{U}} \overline{\overline{\Sigma}} \overline{\overline{\Sigma}}    \overline{\overline{U}}^T \overline{U}\overline{\Sigma}\rp}{m}}. \label{eq:corrmodel010a0b0}
\end{eqnarray}
On the other hand the Ridge estimator results need to be carefully readdressed. We do so by parallelling the presentation of Section \ref{sec:analridge} as closely as possible. As expected, we proceed in a much faster fashion by avoiding duplicating already shown arguments and focusing on emphasizing the key differences.

We start by recognizing that, relying on (\ref{eq:corrmodel010}) and (\ref{eq:corrmodel010a0}) one, analogously to (\ref{eq:ridgerandlincons1a1}),  now has
\begin{eqnarray}
\bar{\xi}_{rr}(\lambda) \triangleq \min_{\beta,\|\z\|_2=c_{3,s}} & &  \lambda\|\beta\|_2^2 + \frac{c_{3,s}^2}{m} \nonumber \\
\mbox{subject to} & & \overline{\overline{U}}\overline{\overline{\Sigma}}Z\Sigma V^T\beta=\overline{\overline{U}}\overline{\overline{\Sigma}}Z\Sigma V^T\bar{\beta}+\sigma \overline{U}\overline{\Sigma}\v+\z. \label{eq:corrridgerandlincons1a1}
\end{eqnarray}
it is also not that difficult to see that writing the Lagrangian gives the following analogue to (\ref{eq:ridgerandlincons2})
\begin{eqnarray}
\bar{\xi}_{rr}(\lambda) = \min_{\beta,\|\z\|_2=c_{3,s}} \max_{\nu} \lp \lambda\|\beta\|_2^2 +
\nu^T \lp
\overline{\overline{U}}\overline{\overline{\Sigma}} Z\Sigma V^T\beta-\overline{\overline{U}}\overline{\overline{\Sigma}} Z\Sigma V^T\bar{\beta}-\sigma \overline{U}\overline{\Sigma}\v -\z \rp  + \frac{c_{3,s}^2}{m}\rp. \label{eq:corrridgerandlincons2}
\end{eqnarray}
For the completeness we also define
\begin{eqnarray}
\bar{\beta}_{rr}(\lambda) \triangleq \mbox{arg} \min_{\beta,\|\z\|_2=c_{3,s}} & &  \lambda\|\beta\|_2^2 + \frac{c_{3,s}^2}{m} \nonumber \\
\mbox{subject to} & & \overline{\overline{U}}\overline{\overline{\Sigma}}Z\Sigma V^T\beta=\overline{\overline{U}}\overline{\overline{\Sigma}}Z\Sigma V^T\bar{\beta}+\sigma \overline{U}\overline{\Sigma}\v+\z. \label{eq:corrmodel014}
\end{eqnarray}

To analytically characterize the above problem we again utilize the RDT machinery.

\subsection{Analysis of the Ridge estimator under $X$ row-correlations }
\label{sec:corrrandlinconsrdtridge}

As usual, we proceed by discussing in detail each of the four main RDT principles.

\vspace{.1in}

\noindent \underline{1) \textbf{\emph{Algebraic characterization:}}}  The following lemma is basically a correlated rows analogue to Lemma \ref{lemma:ridgelemma1}) and conveniently summarizes the above algebraic considerations.

\begin{lemma}(Algebraic optimization representation) Assume the setup of Lemmas \ref{lemma:lemma1} and \ref{lemma:ridgelemma1}   and let the unitary matrix $\overline{\overline{U}}\in\mR^{m\times m}$ and diagonal matrix $\overline{\overline{\Sigma}}\in\mR^{m\times m}$ be given. let $\bar{\xi}_{rr}(\lambda)$ be as in (\ref{eq:corrridgerandlincons1a1}). Set ${\mathcal D}=\{\beta,\z,c_{3,s}|\beta\in\mR^n,\z\in\mR^m,c_{3,s}\in\mR,\|\z\|_2=c_{3,s}\}$ and
\begin{align}\label{eq:corrridgeta11}
\bar{f}_{rp,r}(Z,\v) & \triangleq  \min_{{\mathcal D}} \max_{\nu} \lp \lambda\|\beta\|_2^2 +\nu^T
\lp \overline{\overline{U}}\overline{\overline{\Sigma}} Z\Sigma V^T\beta
- \overline{\overline{U}}\overline{\overline{\Sigma}}Z\Sigma V^T\bar{\beta}-\sigma \overline{U}\overline{\Sigma}\v -\z\rp +\frac{c_{3,s}^2}{m} \rp
 \hspace{.02in} (\bl{\textbf{random primal}})
\nonumber \\
\bar{\xi}_{rp,r} & \triangleq  \hspace{-.04in} \lim_{n\rightarrow\infty } \mE_{Z,\v} \bar{f}_{rp,r}(Z,\v).
\end{align}
Then
\begin{equation}\label{eq:corrridgeta11a0}
\bar{\xi}_{rr}(\lambda)=\bar{f}_{rp,r}(Z,\v) \quad \mbox{and} \quad \lim_{n\rightarrow\infty} \mE_{Z,\v}\bar{\xi}_{rr}(\lambda) =\bar{\xi}_{rp,r}.
\end{equation}
\label{lemma:corrridgelemma1}
\end{lemma}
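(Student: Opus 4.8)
The plan is to follow, essentially verbatim, the proofs of Lemma~\ref{lemma:lemma1} and Lemma~\ref{lemma:ridgelemma1}: the asserted identity is a purely algebraic (Lagrangian) restatement of the constrained program that defines $\bar{\xi}_{rr}(\lambda)$ and carries no probabilistic content, so nothing beyond bookkeeping is required. First I would recall that, by (\ref{eq:corrmodel010}) and (\ref{eq:corrmodel010a0}), the row-correlated ridge objective is statistically equivalent to the constrained program (\ref{eq:corrridgerandlincons1a1}), in which the slack vector $\z$ with $\|\z\|_2=c_{3,s}$ has been introduced exactly as in Lemma~\ref{lemma:ridgelemma1} so that the fit term $\tfrac1m\|\y-X\beta\|_2^2$ is replaced by $\tfrac{c_{3,s}^2}{m}$ under the single vector equality constraint. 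The feasible set $\mathcal{D}$ is nonempty (for any $\beta$ one may take $\z$ to be the corresponding residual), so the program is well posed.

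Next I would dualize that equality constraint by an unconstrained multiplier $\nu\in\mR^m$ and observe the standard indicator trick: for a fixed $(\beta,\z,c_{3,s})\in\mathcal{D}$ the inner supremum $\sup_{\nu}\nu^T\big(\overline{\overline{U}}\,\overline{\overline{\Sigma}}\,Z\Sigma V^T(\beta-\bar{\beta})-\sigma\overline{U}\,\overline{\Sigma}\v-\z\big)$ equals $0$ when the constraint holds and $+\infty$ otherwise. Hence the $\min$--$\max$ in (\ref{eq:corrridgeta11}) (equivalently (\ref{eq:corrridgerandlincons2})) collapses to the constrained minimum, giving $\bar{f}_{rp,r}(Z,\v)=\bar{\xi}_{rr}(\lambda)$ for every realization of $Z$ and $\v$. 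This step is identical to the one used for $\xi_{rr}(\lambda)$ in Lemma~\ref{lemma:ridgelemma1}; the only change is the extra deterministic factors $\overline{\overline{U}}$ (unitary) and $\overline{\overline{\Sigma}}$ (diagonal) now appearing inside the linear map, which play no role in this purely algebraic identity.

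Finally, since the pointwise identity holds for all $Z,\v$, I would take $\mE_{Z,\v}$ of both sides and then $\lim_{n\rightarrow\infty}$, which by the definition of $\bar{\xi}_{rp,r}$ in (\ref{eq:corrridgeta11}) immediately yields $\lim_{n\rightarrow\infty}\mE_{Z,\v}\bar{\xi}_{rr}(\lambda)=\bar{\xi}_{rp,r}$. There is no genuine obstacle here: the only point worth a sentence is the verification that the inner supremum over the unconstrained $\nu$ reproduces the hard equality constraint, which is immediate from the indicator argument above. The substantive work — imposing the Gaussian statistics on $Z,\v$, invoking Gordon's comparison to pass to the random dual, and then carrying out the concentration/strong-duality arguments for the row-correlated model — is deferred to the subsequent theorems, exactly as in Section~\ref{sec:analridge}.
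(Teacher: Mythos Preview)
Your proposal is correct and follows exactly the paper's approach: the paper's proof is the single line ``Follows automatically via the Lagrangian from (\ref{eq:corrridgerandlincons2}),'' and you have simply spelled out that Lagrangian/indicator argument in detail. No differences to report.
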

\begin{proof}
 Follows automatically via the Lagrangian from (\ref{eq:corrridgerandlincons2}).
\end{proof}

\vspace{.1in}
\noindent \underline{2) \textbf{\emph{Determining the random dual:}}} Utilizing concentration
\begin{equation}
\lim_{n\rightarrow\infty}\mP_{Z,\v}\left (\frac{|\bar{f}_{rp,r}(Z,\v)-\mE_{Z,\v}(\bar{f}_{rp,r}(Z,\v))|}{\mE_{Z,\v}(\bar{f}_{rp,r}(Z,\v))}>\epsilon\right )\longrightarrow 0,\label{eq:corrridgeta15}
\end{equation}
we have the following random dual analogue to Theorems \ref{thm:thm1} and \ref{thm:ridgethm1}.

\begin{theorem}(Random dual -- Row-correlated $X$ Ridge estimator) Assume the setup of Lemmas \ref{lemma:ridgelemma1} and \ref{lemma:corrridgelemma1} and Theorems \ref{thm:thm1} and \ref{thm:ridgethm1}. Set
\vspace{-.0in}
\begin{align}
   \bar{f}_{rd,r}(\g,\h,\v) & \triangleq
\hspace{-.0in} \min_{{\mathcal D}}\max_{\nu} \lp \lambda \|\beta\|_2^2+c_2\nu^T\overline{\overline{U}}\overline{\overline{\Sigma}}\g+\|\overline{\overline{\Sigma}}\overline{\overline{U}}^T\nu\|_2\h^T\Sigma V^T\lp \beta -\bar{\beta}\rp -\sigma \nu^T\overline{U}\overline{\Sigma} \v -\nu^T\z  +\frac{c_{3,s}^2}{m} \rp \nonumber \\
&
   \hspace{4.5in} (\bl{\textbf{random dual}})
  \nonumber \\
 \bar{\xi}_{rd,r} & \triangleq  \lim_{n\rightarrow\infty} \mE_{\g,\h,\v} \bar{f}_{rd,r}(\g,\h,\v)  .\label{eq:corridgeta16}
\vspace{-.0in}\end{align}
One then has \vspace{-.0in}
\begin{eqnarray}
  \bar{\xi}_{rd,r} & \triangleq & \lim_{n\rightarrow\infty} \mE_{\g,\h,\v} \bar{f}_{rd,r}(\g,\h,\v)
  \leq
  \lim_{n\rightarrow\infty} \mE_{Z,\v} \bar{f}_{rp,r}(Z,\v)  \triangleq  \bar{\xi}_{rp,r}. \label{eq:corridgeta16a0}
\vspace{-.0in}\end{eqnarray}
and
\begin{eqnarray}
 \lim_{n\rightarrow\infty}\mP_{\g,\h,\v} \lp \bar{f}_{rd,r}(\g,\h,\v)\geq (1-\epsilon)\bar{\xi}_{rd,r}\rp
 \leq  \lim_{n\rightarrow\infty}\mP_{Z,\v} \lp \bar{f}_{rp,r}(Z,\v)\geq (1-\epsilon)\bar{\xi}_{rd,r}\rp.\label{eqcor:ridgeta17}
\end{eqnarray}
\label{thm:corrridgethm1}
\end{theorem}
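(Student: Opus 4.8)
The plan is to mirror the proofs of Theorems \ref{thm:thm1} and \ref{thm:ridgethm1}: recognize that the only place where the Gaussian matrix $Z$ enters $\bar{f}_{rp,r}$ is through a single bilinear form, and then invoke Gordon's Gaussian comparison (Theorem B in \cite{Gordon88}; Theorem 1, Corollary 1, and Section 2.7.2 in \cite{Stojnicgscomp16}) in its $\min$--$\max$ formulation.

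First I would isolate the $Z$-dependence in $\bar{f}_{rp,r}$ from (\ref{eq:corrridgeta11}). After the substitutions $\x=V^T\beta$, $\c=V^T\bar{\beta}$ as in (\ref{eq:ta18a0b0}), the term carrying $Z$ is $\nu^T\overline{\overline{U}}\,\overline{\overline{\Sigma}}\,Z\,\Sigma V^T(\beta-\bar{\beta})=a(\nu)^T Z\, b(\x)$, where $a(\nu)\triangleq\overline{\overline{\Sigma}}\,\overline{\overline{U}}^T\nu\in\mR^m$ and $b(\x)\triangleq\Sigma(\x-\c)\in\mR^n$. The remaining terms of the objective --- $\lambda\|\beta\|_2^2$, the constraint $\|\z\|_2=c_{3,s}$ together with $\frac{c_{3,s}^2}{m}$, and the linear-in-$\nu$ pieces $-\sigma\nu^T\overline{U}\,\overline{\Sigma}\v-\nu^T\z$ --- involve only deterministic data and the independent Gaussian $\v$, hence they are untouched by the comparison. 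The optimization retains the $\min$ over $\mathcal D$ and $\max$ over $\nu$ structure required by Gordon's theorem.

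Next I would apply the comparison. Gordon's inequality replaces $a(\nu)^T Z\, b(\x)$ by $\|b(\x)\|_2\,\g^T a(\nu)+\|a(\nu)\|_2\,\h^T b(\x)$ for independent standard Gaussians $\g\in\mR^m$, $\h\in\mR^n$, which yields precisely the claimed $\bar{f}_{rd,r}$ once one notes $\|b(\x)\|_2=\|\Sigma(\x-\c)\|_2=c_2$ (cf. (\ref{eq:ta18a0b0c0})), $\g^T a(\nu)=\nu^T\overline{\overline{U}}\,\overline{\overline{\Sigma}}\g$, and $\|a(\nu)\|_2=\|\overline{\overline{\Sigma}}\,\overline{\overline{U}}^T\nu\|_2$. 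Taking $\mE_{\g,\h,\v}$, respectively $\mE_{Z,\v}$, and passing to the $n\to\infty$ limit gives the expectation inequality (\ref{eq:corridgeta16a0}); the accompanying tail statement (\ref{eqcor:ridgeta17}) is the probabilistic half of the same Gordon comparison, exactly as in Theorems \ref{thm:thm1} and \ref{thm:ridgethm1}.

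The one genuinely new feature relative to Section \ref{sec:analridge} --- and the only place needing care --- is that the coefficient multiplying $\h^T\Sigma(\x-\c)$ is no longer $\|\nu\|_2$ but the weighted norm $\|\overline{\overline{\Sigma}}\,\overline{\overline{U}}^T\nu\|_2$, because $\overline{\overline{U}}\,\overline{\overline{\Sigma}}$ sits between $\nu$ and $Z$ and cannot be removed by unitary invariance alone. This is what drives the subsequent analysis (solving the inner $\nu$-maximization, the scaling $\nu_s\to\nu_1/\sqrt n$, and the concentration of the resulting scalar objective) to differ from the row-uncorrelated Ridge case; that bookkeeping, rather than the comparison step itself, is where the real work lies in the lemmas that follow. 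At the level of this theorem, however, nothing beyond the direct invocation of Gordon's probabilistic comparison theorem is required.
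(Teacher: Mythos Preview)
Your proposal is correct and follows exactly the same route as the paper: the paper's proof is a one-line invocation of Gordon's probabilistic comparison theorem (Theorem~B in \cite{Gordon88}; Theorem~1, Corollary~1, and Section~2.7.2 in \cite{Stojnicgscomp16}), just as for Theorems~\ref{thm:thm1} and~\ref{thm:ridgethm1}. Your explicit identification of the bilinear form $a(\nu)^T Z\,b(\x)$ with $a(\nu)=\overline{\overline{\Sigma}}\,\overline{\overline{U}}^T\nu$ and $b(\x)=\Sigma(\x-\c)$, and the resulting weighted norm $\|\overline{\overline{\Sigma}}\,\overline{\overline{U}}^T\nu\|_2$ in the dual, simply spells out what the paper leaves implicit.
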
\vspace{-.17in}
\begin{proof}
As the proof of Theorems \ref{thm:thm1} and \ref{thm:ridgethm1}, it follows immediately as a direct application of the Gordon's probabilistic comparison theorem (see, e.g., Theorem B in \cite{Gordon88} and also Theorem 1, Corollary 1, and Section 2.7.2 in \cite{Stojnicgscomp16}).
\end{proof}

 \vspace{.1in}
\noindent \underline{3) \textbf{\emph{Handling the random dual:}}} We start by solving the inner maximization over $\nu$ and obtain analogously to (\ref{eq:ta18a0}) and (\ref{eq:ridgeta18a0})
{\small \begin{eqnarray}
  \bar{f}_{rd,r}(\g,\h,\v)
  & \triangleq &
\min_{\beta,\|\z\|_2=\c_{3,s}}\max_{\nu} \lp \|\beta\|_2^2+c_2\nu^T\overline{\overline{U}}\overline{\overline{\Sigma}}\g+\|\overline{\overline{\Sigma}}\overline{\overline{U}}^T\nu\|_2\h^T\Sigma V^T\lp \beta -\bar{\beta}\rp -\sigma \nu^T\overline{U}\overline{\Sigma} \v -\nu^T\z +\frac{c_{3,s}^2}{m}\rp \nonumber \\
  & = &
\min_{\beta,\|\z\|_2=\c_{3,s}}\max_{\nu} \Bigg (\Big. \|\beta\|_2^2+c_2\nu^T\overline{\overline{U}}\overline{\overline{\Sigma}}\g+\|\overline{\overline{\Sigma}}\overline{\overline{U}}^T\nu\|_2\h^T\Sigma V^T\lp \beta -\bar{\beta}\rp \nonumber \\
& & -\sigma \nu^T \overline{\overline{U}} \overline{\overline{\Sigma}}\overline{\overline{\Sigma}}^{-1}\overline{\overline{U}}^T\overline{U}\overline{\Sigma} \v -\nu^T\overline{\overline{U}} \overline{\overline{\Sigma}}\overline{\overline{\Sigma}}^{-1}\overline{\overline{U}}^T\z +\frac{c_{3,s}^2}{m}\Big. \Bigg) \nonumber \\
   & = &
\hspace{-.1in}\min_{\x,\|\z\|_2=c_{3,s},c_2\geq 0}\max_{\nu_s\geq 0} \lp \lambda \|\x\|_2^2+\nu_s\h^T\Sigma \lp \x -\c\rp +\nu_s\|c_2\g-\sigma \overline{\overline{\Sigma}}^{-1}\overline{\overline{U}}^T \overline{U}\overline{\Sigma} \v -\overline{\overline{\Sigma}}^{-1}\overline{\overline{U}}^T\z\|_2 +\frac{c_{3,s}^2}{m}\rp \nonumber \\
   & = &
\hspace{-.1in}\min_{\x,\|\z\|_2=c_{3,s},c_2\geq 0}\max_{\nu_s\geq 0} \lp \lambda \|\x\|_2^2+\nu_s\h^T\Sigma \lp \x -\c\rp +\nu_s\|c_2\g-\sigma \overline{\overline{\Sigma}}^{-1}\overline{\overline{U}}^T \overline{U}\overline{\Sigma} \v -\overline{\overline{\Sigma}}^{-1}\z\|_2 +\frac{c_{3,s}^2}{m}\rp, \nonumber \\\label{eq:corrridgeta18a0}
\end{eqnarray}
}

\noindent where $\x,\c,$ and $c_2$ are as in (\ref{eq:ta18a0b0}) and (\ref{eq:ta18a0b0c0}) and the last equality follows after  unitary invariant change $\z\rightarrow \overline{\overline{U}}^T\z$. After setting
\begin{eqnarray}
  \bar{\cL}_r\lp\x,\z,\c_{3,s},c_2,\nu_s;\g,\h,\v\rp \triangleq  \lambda \|\x\|_2^2+\nu_s\h^T\Sigma \lp \x -\c\rp +\nu_s\|c_2\g-\sigma \overline{\overline{\Sigma}}^{-1}\overline{\overline{U}}^T \overline{U}\overline{\Sigma} \v -\overline{\overline{\Sigma}}^{-1}\z\|_2 +\frac{c_{3,s}^2}{m}, \label{eq:corrridgeta18a0b1}
\end{eqnarray}
one rewrites (\ref{eq:corrridgeta18a0}) as
\begin{eqnarray}
  \bar{f}_{rd,r}(\g,\h,\v)
  =
\min_{\x,\|\z\|_2=c_{3,s},c_2\geq 0}\max_{\nu_s\geq 0}  \bar{\cL}_r\lp\x,\z,c_{3,s},c_2,\nu_s;\g,\h,\v\rp, \label{eq:corridgeta18a0b2}
\end{eqnarray}
and keeping $c_{3,s}$, $c_2$, and $\nu_s$ fixed, continues, analogously to (\ref{eq:ta18a0b2c0}) and (\ref{eq:ridgeta18a0b2c0}), by considering the following minimization  over $\x$ and $\z$
\begin{eqnarray}
  \bar{\cL}_{1,r}\lp c_{3,s},c_2,\nu_s\rp \triangleq  \min_{\x,\z} & & \lambda \|\x\|_2^2+\nu_s\h^T\Sigma \lp \x -\c\rp +\nu_s\|c_2\g-\sigma \overline{\overline{\Sigma}}^{-1}\overline{\overline{U}}^T \overline{U}\overline{\Sigma} \v -\overline{\overline{\Sigma}}^{-1}\z\|_2 +\frac{c_{3,s}^2}{m} \nonumber \\
  \mbox{subject to} & &   \|\Sigma\lp\x-\c\rp\|_2=c_2, \|\z\|_2=c_{3,s}, \label{eq:corrridgeta18a0b2c0}
\end{eqnarray}
where as earlier, in order to simplify the notation, the functional dependence on randomness ($\g,\h,\v$) is kept implicit. Moreover, analogously to (\ref{eq:ta18a0b2c1}) and (\ref{eq:ridgeta18a0b2c1}), we further have
\begin{eqnarray}
  \bar{\cL}_{1,r}\lp c_{3,s},c_2,\nu_s\rp
   & = &  \max_{\gamma}   \min_{\x,\|\z\|_2=c_{3,s}}   \bar{\cL}_{2,r}\lp \x,\z,\gamma;c_{3,s},c_2,\nu_s\rp, \label{eq:corrridgeta18a0b2c1}
\end{eqnarray}
with
\begin{eqnarray}
  \bar{\cL}_{2,r}\lp \x,\z,\gamma;c_{3,s},c_2,\nu_s\rp
  & \triangleq &
\lambda \|\x\|_2^2+\nu_s\h^T\Sigma \lp \x -\c\rp +\nu_s\|c_2\g-\sigma \overline{\overline{\Sigma}}^{-1}\overline{\overline{U}}^T \overline{U}\overline{\Sigma} \v -\overline{\overline{\Sigma}}^{-1}\z\|_2 +\frac{c_{3,s}^2}{m}
\nonumber \\
  & & +\gamma \|\Sigma\lp\x-\c\rp\|_2^2 -\gamma c_2^2 . \label{eq:corrridgeta18a0b2c2}
\end{eqnarray}
Taking the optimal $\x$ as in (\ref{eq:ta18a0b4c0}) and (\ref{eq:ridgta18a0b4})
\begin{eqnarray}
 \hat{\x}= \lp \lambda I +\gamma\Sigma\Sigma\rp^{-1}\lp -\frac{1}{2}\nu_s \Sigma\h +\gamma\Sigma\Sigma\c\rp, \label{eq:corrridgta18a0b4}
\end{eqnarray}
with appropriate scaling $\nu_s\rightarrow \frac{\nu_1}{\sqrt{n}}$  gives the following analogue to (\ref{eq:ridgeta18a0b4c1})
\begin{eqnarray}
  \bar{\cL}_{2,r}\lp \hat{\x},\z,\gamma;c_{3,s},c_2,\nu_1\rp
 & = &
-\lp -\frac{1}{2\sqrt{n}}\nu_1 \Sigma\h +\gamma\Sigma\Sigma\c\rp^T   \lp \lambda I +\gamma\Sigma\Sigma\rp^{-1}\lp -\frac{1}{2\sqrt{n}}\nu_1 \Sigma\h +\gamma\Sigma\Sigma\c\rp
\nonumber \\
   & & -\frac{1}{\sqrt{n}}\nu_1\h^T\Sigma\c+ \frac{1}{\sqrt{n}}\nu_1\|c_2\g-\sigma \overline{\overline{\Sigma}}^{-1}\overline{\overline{U}}^T \overline{U}\overline{\Sigma} \v -\overline{\overline{\Sigma}}^{-1}\z \|_2  +\frac{c_{3,s}^2}{m}+\gamma \c^T\Sigma\Sigma\c -\gamma c_2^2. \nonumber \\ \label{eq:corrridgeta18a0b4c1}
\end{eqnarray}

We now switch to the following optimization over $\z$,
\begin{eqnarray}
\bar{\cL}_{3,r} \triangleq \min_\z & &  \|c_2\g-\sigma \overline{\overline{\Sigma}}^{-1}\overline{\overline{U}}^T \overline{U}\overline{\Sigma} \v -\overline{\overline{\Sigma}}^{-1}\z \|_2^2   \nonumber \\ \
\mbox{subject to} & & \|\z\|_2=c_{3,s}.
\label{eq:corrridgeta18a0b4c1d0}
\end{eqnarray}
After setting
\begin{eqnarray}
\theta=c_2\g-\sigma \overline{\overline{\Sigma}}^{-1}\overline{\overline{U}}^T \overline{U}\overline{\Sigma} \v,
\label{eq:corrridgeta18a0b4c1d0e0}
\end{eqnarray}
writing the Lagrangian and utilizing strong Lagrange duality, we obtain
\begin{eqnarray}
\bar{\cL}_{3,r} = \min_{\z}\max_{\gamma_1} \bar{\cL}_{4,r}(\z,\gamma_1)
=\max_{\gamma_1}\min_{\z} \bar{\cL}_{4,r}(\z,\gamma_1)
\label{eq:corrridgeta18a0b4c1d1}
\end{eqnarray}
with
\begin{eqnarray}
\bar{\cL}_{4,r}(\z,\gamma_1) = \|\theta -\overline{\overline{\Sigma}}^{-1}\z \|_2^2 +\gamma_1 \|\z\|_2^2-\gamma_1c_{3,s}^2.
\label{eq:corrridgeta18a0b4c1d2}
\end{eqnarray}
Computing the derivative with respect to $\z$ gives
\begin{eqnarray}
\frac{d \bar{\cL}_{4,r}(\z,\gamma_1)}{d\z} = 2 \overline{\overline{\Sigma}}^{-1}\overline{\overline{\Sigma}}^{-1}\z  -2\overline{\overline{\Sigma}}^{-1}\theta +2\gamma_1\z.
\label{eq:corrridgeta18a0b4c1d3}
\end{eqnarray}
Equalling the derivative to zero gives the optimal $\z$
\begin{eqnarray}
\hat{\z}=\lp \gamma_1 I + \overline{\overline{\Sigma}}^{-1}\overline{\overline{\Sigma}}^{-1}\rp^{-1} \overline{\overline{\Sigma}}^{-1}\theta,
\label{eq:corrridgeta18a0b4c1d4}
\end{eqnarray}
or in other words
\begin{eqnarray}
\hat{\z}_j=\frac{1}{\gamma_1  + \frac{1}{\os_j^2}} \frac{\theta_j}{\os_j},
\label{eq:corrridgeta18a0b4c1d5}
\end{eqnarray}
where
\begin{eqnarray}
\os\triangleq \mbox{diag} (\overline{\overline{\Sigma}}).
\label{eq:corrridgeta18a0b4c1d5e0}
\end{eqnarray}
Taking $\hat{\z}$ from (\ref{eq:corrridgeta18a0b4c1d5}) and plugging it back in (\ref{eq:corrridgeta18a0b4c1d2}) gives
\begin{eqnarray}
\bar{\cL}_{4,r}(\hat{\z},\gamma_1) =\sum_{j=1}^{m} \lp \theta_j^2 -\frac{1}{\gamma_1  + \frac{1}{\os_j^2}} \frac{\theta_j^2}{\os_j^2} \rp
 -\gamma_1 c_{3,s}^2,
\label{eq:corrridgeta18a0b4c1d6}
\end{eqnarray}
which together with (\ref{eq:corrridgeta18a0b4c1d1}) gives
\begin{eqnarray}
\bar{\cL}_{3,r}
=\max_{\gamma_1}\min_{\z} \bar{\cL}_{4,r}(\z,\gamma_1)
=\max_{\gamma_1}  \bar{\cL}_{4,r}(\hat{\z},\gamma_1)
=\max_{\gamma_1}
\sum_{j=1}^{m} \lp \theta_j^2 -\frac{1}{\gamma_1  + \frac{1}{\os_j^2}} \frac{\theta_j^2}{\os_j^2} \rp
 -\gamma_1 c_{3,s}^2.
\label{eq:corrridgeta18a0b4c1d7}
\end{eqnarray}
Combining (\ref{eq:corrridgeta18a0b4c1}), (\ref{eq:corrridgeta18a0b4c1d0}), and (\ref{eq:corrridgeta18a0b4c1d7}), we further find
\begin{eqnarray}
  \bar{\cL}_{2,r}\lp \hat{\x},\hat{\z},\gamma;c_{3,s},c_2,\nu_1\rp
 & = &
-\lp -\frac{1}{2\sqrt{n}}\nu_1 \Sigma\h +\gamma\Sigma\Sigma\c\rp^T   \lp \lambda I +\gamma\Sigma\Sigma\rp^{-1}\lp -\frac{1}{2\sqrt{n}}\nu_1 \Sigma\h +\gamma\Sigma\Sigma\c\rp
\nonumber \\
   & & -\frac{1}{\sqrt{n}}\nu_1\h^T\Sigma\c+ \frac{1}{\sqrt{n}}\nu_1\sqrt{\bar{\cL}_{3,r}}  +\frac{c_{3,s}^2}{m}+\gamma \c^T\Sigma\Sigma\c -\gamma c_2^2 \nonumber \\
    & = &
-\lp -\frac{1}{2\sqrt{n}}\nu_1 \Sigma\h +\gamma\Sigma\Sigma\c\rp^T   \lp \lambda I +\gamma\Sigma\Sigma\rp^{-1}\lp -\frac{1}{2\sqrt{n}}\nu_1 \Sigma\h +\gamma\Sigma\Sigma\c\rp
\nonumber \\
   & & -\frac{1}{\sqrt{n}}\nu_1\h^T\Sigma\c+ \frac{1}{\sqrt{n}}\nu_1
   \sqrt{\max_{\gamma_1}\bar{\cL}_{4,r}(\hat{\z},\gamma_1)}
   +\frac{c_{3,s}^2}{m}+\gamma \c^T\Sigma\Sigma\c -\gamma c_2^2 \nonumber \\
       & = & \max_{\gamma_1} \bar{\cL}_{5,r} (\gamma_1), \label{eq:corrridgeta18a0b4d8}
\end{eqnarray}
where
\begin{eqnarray}
  \bar{\cL}_{5,r}\lp \gamma_1 \rp
    & = &
-\lp -\frac{1}{2\sqrt{n}}\nu_1 \Sigma\h +\gamma\Sigma\Sigma\c\rp^T   \lp \lambda I +\gamma\Sigma\Sigma\rp^{-1}\lp -\frac{1}{2\sqrt{n}}\nu_1 \Sigma\h +\gamma\Sigma\Sigma\c\rp -\frac{1}{\sqrt{n}}\nu_1\h^T\Sigma\c
\nonumber \\
   & & + \frac{1}{\sqrt{n}}\nu_1\sqrt{\bar{\cL}_{4,r}(\hat{\z},\gamma_1)}
   +\frac{c_{3,s}^2}{m}+\gamma \c^T\Sigma\Sigma\c -\gamma c_2^2 \nonumber \\
    & = &
-\lp -\frac{1}{2\sqrt{n}}\nu_1 \Sigma\h +\gamma\Sigma\Sigma\c\rp^T   \lp \lambda I +\gamma\Sigma\Sigma\rp^{-1}\lp -\frac{1}{2\sqrt{n}}\nu_1 \Sigma\h +\gamma\Sigma\Sigma\c\rp
\nonumber \\
   & & -\frac{1}{\sqrt{n}}\nu_1\h^T\Sigma\c+ \frac{\sqrt{m}}{\sqrt{n}}\nu_1
   \sqrt{ \frac{1}{m}\sum_{j=1}^{m} \lp \theta_j^2 -\frac{1}{\gamma_1  + \frac{1}{\os_j^2}} \frac{\theta_j^2}{\os_j^2} \rp
 -\gamma_1 \frac{c_{3,s}^2}{m} }  +\frac{c_{3,s}^2}{m}+\gamma \c^T\Sigma\Sigma\c -\gamma c_2^2. \nonumber \\
  \label{eq:corrridgeta18a0b4d9}
\end{eqnarray}
Let
\begin{eqnarray}
{\mathcal U} & = & \overline{\overline{U}}^T\overline{U} \nonumber \\
\q_j & = & \frac{1}{\os_i^2}\sum_{l=1}^{m}{\mathcal U}_{il}^2\bar{\s}_l^2.
  \label{eq:corrridgeta18a0b4d9e0}
\end{eqnarray}
We then recognize that
\begin{eqnarray}
\mE_{\g,\v}\theta_j^2=c_2^2+\sigma^2\q_j.
  \label{eq:corrridgeta18a0b4d9e1}
\end{eqnarray}
Utilizing appropriate scaling $c_3\rightarrow\frac{c_{3,s}}{\sqrt{m}}$ (while keeping $c_3$, $c_2$, $\nu_1$, and $\gamma$ fixed) and relying on concentrations, one finds analogously to (\ref{eq:ta18a0b4c3}) and (\ref{eq:ridgeta18a0b4c3})
 \begin{align}
\lim_{n\rightarrow\infty} \hspace{-.06in} \mE_{\g,\h,\v}\bar{\cL}_{5,r}\lp \gamma_1\rp
 & = \hspace{-.05in}
\lim_{n\rightarrow\infty} \hspace{-.06in} \mE \Bigg ( \Big. -\lp -\frac{1}{2\sqrt{n}}\nu_1 \Sigma\h +\gamma\Sigma\Sigma\c\rp^T   \lp \lambda I +\gamma\Sigma\Sigma\rp^{-1}
  \lp -\frac{1}{2\sqrt{n}}\nu_1 \Sigma\h +\gamma\Sigma\Sigma\c\rp   \nonumber \\
   & \quad  -\frac{1}{\sqrt{n}}\nu_1\h^T\Sigma\c + \frac{\sqrt{m}}{\sqrt{n}}\nu_1
   \sqrt{ \frac{1}{m}\sum_{j=1}^{m} \lp \theta_j^2 -\frac{1}{\gamma_1  + \frac{1}{\os_j^2}} \frac{\theta_j^2}{\os_j^2} \rp
 -\gamma_1 \frac{c_{3,s}^2}{m} }
 \nonumber \\
& \quad  +\frac{c_{3,s}^2}{m}+\gamma \c^T\Sigma\Sigma\c -\gamma c_2^2 \Big.\Bigg ) \nonumber \\
  & =  \lim_{n\rightarrow\infty} \Bigg ( \Big. -\frac{\nu_1^2}{4n}\sum_{i=1}^n \frac{\s_i^2}{\lambda+\gamma\s_i^2}
  - \sum_{i=1}^n \frac{\gamma^2\s_i^4\c_i^2}{\lambda+\gamma\s_i^2}
+\nu_1\sqrt{\alpha}\sqrt{l_4} +c_3^2  + \gamma \sum_{i=1}^n \s_i^2\c_i^2 -\gamma c_2^2  \Big.\Bigg ),\label{eq:corrridgeta18a0b4c3}
\end{align}
 with
\begin{eqnarray}
l_4=\frac{1}{m}\sum_{j=1}^{m} \lp c_2^2+\sigma^2\q_j\rp -\frac{1}{m}\sum_{j=1}^{m} \frac{1}{\gamma_1  + \frac{1}{\os_j^2}} \frac{c_2^2+\sigma^2\q_j}{\os_j^2}
 -\gamma_1 c_3^2.\label{eq:corrridgeta18a0b4c3d0}
\end{eqnarray}
A combination of (\ref{eq:corrridgeta18a0b1})-(\ref{eq:corrridgeta18a0b2c1}), (\ref{eq:corrridgeta18a0b4c1d1}), and  (\ref{eq:corrridgeta18a0b4c3}) allows us to arrive at the following optimization
\begin{eqnarray}
\lim_{n\rightarrow\infty}  \mE_{\g,\h,\v}\bar{f}_{rd,r}(\g,\h,\v)
   & = & \lim_{n\rightarrow\infty} \min_{c_3,c_2\geq 0} \max_{\nu_1\geq 0,\gamma,\gamma_1}  \bar{f}_{0,r}(c_3,c_2,\nu_1,\gamma,\gamma_1), \label{eq:corrridgeta18a0b4c5}
\end{eqnarray}
where
\begin{eqnarray}
\bar{f}_{0,r}(c_3,c_2,\nu_1,\gamma,\gamma_1) \triangleq  -\frac{\nu_1^2}{4n}\sum_{i=1}^n \frac{\s_i^2}{\lambda+\gamma\s_i^2}
  - \sum_{i=1}^n \frac{\gamma^2\s_i^4\c_i^2}{\lambda+\gamma\s_i^2}
+\nu_1\sqrt{\alpha}\sqrt{l_4} +c_3^2  + \gamma \sum_{i=1}^n \s_i^2\c_i^2 -\gamma c_2^2 . \nonumber \\ \label{eq:corrridgeta18a0b4c6}
\end{eqnarray}
After taking the derivative with respect to $\nu_1$ we, analogously to (\ref{eq:ta18a0b4c8}) and (\ref{eq:ridgeta18a0b4c8}), find the following optimal $\nu_1$
\begin{eqnarray}
\hat{\nu}_1 =
 \frac{2\sqrt{\alpha}\sqrt{l_4}}{\frac{1}{n}\sum_{i=1}^n \frac{\s_i^2}{\lambda+\gamma\s_i^2}}. \label{eq:corrridgeta18a0b4c8}
\end{eqnarray}
which then,  together with (\ref{eq:corrridgeta18a0b4c6}), gives
\begin{equation}
\max_{\nu_1\geq 0} \bar{f}_{0,r}(c_3,c_2,\nu_1,\gamma,\gamma_1) = \bar{f}_{0,r}(c_3,c_2,\hat{\nu}_1,\gamma,\gamma_1) =
  - \sum_{i=1}^n \frac{\gamma^2\s_i^4\c_i^2}{\lambda+\gamma\s_i^2}
+ \frac{\alpha l_4}{\frac{1}{n}\sum_{i=1}^n \frac{\s_i^2}{\lambda+\gamma\s_i^2}} +c_3^2 + \gamma \sum_{i=1}^n \s_i^2\c_i^2 -\gamma c_2^2.  \label{eq:corrridgeta18a0b4c9}
\end{equation}
Combining further (\ref{eq:corrridgeta18a0b4c5}) and (\ref{eq:corrridgeta18a0b4c9}) we find
\begin{eqnarray}
\lim_{n\rightarrow\infty}  \mE_{\g,\h,\v}\bar{f}_{rd,r}(\g,\h,\v)
   & = & \lim_{n\rightarrow\infty}  \min_{c_3,c_2\geq 0} \max_{\gamma,\gamma_1}  \bar{f}_{0,r}(c_3,c_2,\hat{\nu}_1,\gamma,\gamma_1), \label{eq:corrridgeta18a0b4c10}
\end{eqnarray}
where $\bar{f}_{0,r}(c_3,c_2,\hat{\nu}_1,\gamma,\gamma_1)$  as in (\ref{eq:corrridgeta18a0b4c9}). Computing derivatives with respect to $c_3$ and $\gamma_1$ gives
\begin{eqnarray}
\frac{d \bar{f}_{0,r}(c_3,c_2,\hat{\nu}_1,\gamma,\gamma_1)}{dc_3}
=
 \frac{\alpha}{\frac{1}{n}\sum_{i=1}^n \frac{\s_i^2}{\lambda+\gamma\s_i^2}} \frac{dl_4}{dc_3}+2c_3
=
-2 \frac{\alpha}{\frac{1}{n}\sum_{i=1}^n \frac{\s_i^2}{\lambda+\gamma\s_i^2}} \gamma_1 c_3+2c_3
, \label{eq:corrridgeta18a0b4c10d0e0}
\end{eqnarray}
and
\begin{eqnarray}
\frac{d \bar{f}_{0,r}(c_3,c_2,\hat{\nu}_1,\gamma,\gamma_1)}{d\gamma_1}
=
 \frac{\alpha}{\frac{1}{n}\sum_{i=1}^n \frac{\s_i^2}{\lambda+\gamma\s_i^2}} \frac{dl_4}{d\gamma_1}
=
 \frac{\alpha}{\frac{1}{n}\sum_{i=1}^n \frac{\s_i^2}{\lambda+\gamma\s_i^2}}
\lp  \frac{1}{m}\sum_{j=1}^{m} \frac{1}{\lp\gamma_1  + \frac{1}{\os_j^2}\rp^2} \frac{c_2^2+\sigma^2\q_j}{\os_j^2}
 - c_3^2 \rp
  . \label{eq:corrridgeta18a0b4c10d0e1}
\end{eqnarray}
Equalling these derivatives to zero gives the following two equations for the optimal $\gamma_1$ and $c_3$
\begin{eqnarray}
\hat{\gamma}_1= \frac{\frac{1}{n}\sum_{i=1}^n \frac{\s_i^2}{\lambda+\gamma\s_i^2}}{\alpha},
 \label{eq:corrridgeta18a0b4c10d0e2}
\end{eqnarray}
and
\begin{eqnarray}
\hat{c}_3^2 =  \frac{1}{m}\sum_{j=1}^{m} \frac{1}{\lp \hat{\gamma}_1  + \frac{1}{\os_j^2}\rp^2} \frac{c_2^2+\sigma^2\q_j}{\os_j^2}. \label{eq:corrridgeta18a0b4c10d0e3}
\end{eqnarray}
 Combining (\ref{eq:corrridgeta18a0b4c10d0e2}) and (\ref{eq:corrridgeta18a0b4c10d0e3}) together with (\ref{eq:corrridgeta18a0b4c9}), gives
\begin{eqnarray}
\min_{c_3\geq 0}\max_{\gamma_1} \bar{f}_{0,r}(c_3,c_2,\hat{\nu}_1,\gamma,\gamma_1) = \bar{f}_{0,r}(\hat{c}_3,c_2,\hat{\nu}_1,\gamma,\hat{\gamma}_1),  \label{eq:corrridgeta18a0b4c10d1}
\end{eqnarray}
with
\begin{eqnarray}
 \bar{f}_{0,r}(\hat{c}_3,c_2,\hat{\nu}_1,\gamma,\hat{\gamma}_1)
 & = &
  - \sum_{i=1}^n \frac{\gamma^2\s_i^4\c_i^2}{\lambda+\gamma\s_i^2}
+ \frac{\alpha l_4}{\frac{1}{n}\sum_{i=1}^n \frac{\s_i^2}{\lambda+\gamma\s_i^2}} +\hat{c}_3^2 + \gamma \sum_{i=1}^n \s_i^2\c_i^2 -\gamma c_2^2 \nonumber \\
 & = &
  - \sum_{i=1}^n \frac{\gamma^2\s_i^4\c_i^2}{\lambda+\gamma\s_i^2}
+ \frac{1}{\hat{\gamma_1}}
\lp
\frac{1}{m}\sum_{j=1}^{m} \lp c_2^2+\sigma^2\q_j\rp -\frac{1}{m}\sum_{j=1}^{m} \frac{1}{\hat{\gamma}_1  + \frac{1}{\os_j^2}} \frac{c_2^2+\sigma^2\q_j}{\os_j^2}
 -\hat{\gamma}_1 \hat{c}_3^2
\rp \nonumber \\
& &
+\hat{c}_3^2 + \gamma \sum_{i=1}^n \s_i^2\c_i^2 -\gamma c_2^2 \nonumber \\
 & = &
  - \sum_{i=1}^n \frac{\gamma^2\s_i^4\c_i^2}{\lambda+\gamma\s_i^2}
+ \frac{1}{\hat{\gamma_1}}
\lp
\frac{1}{m}\sum_{j=1}^{m} \lp c_2^2+\sigma^2\q_j\rp -\frac{1}{m}\sum_{j=1}^{m} \frac{\frac{1}{\os_j^2}}{\hat{\gamma}_1  + \frac{1}{\os_j^2}} \lp c_2^2+\sigma^2\q_j\rp
\rp \nonumber \\
& &
  + \gamma \sum_{i=1}^n \s_i^2\c_i^2 -\gamma c_2^2 \nonumber \\
 & = &
  - \sum_{i=1}^n \frac{\gamma^2\s_i^4\c_i^2}{\lambda+\gamma\s_i^2}
+  \frac{1}{m}\sum_{j=1}^{m} \frac{   c_2^2+\sigma^2\q_j  }{\hat{\gamma}_1  + \frac{1}{\os_j^2}}
   + \gamma \sum_{i=1}^n \s_i^2\c_i^2 -\gamma c_2^2 \nonumber \\
 & = &
  - \sum_{i=1}^n \frac{\gamma^2\s_i^4\c_i^2}{\lambda+\gamma\s_i^2}
+  \frac{1}{m}\sum_{j=1}^{m} \frac{   c_2^2+\sigma^2\q_j  }{\frac{1}{\alpha n}\sum_{i=1}^n \frac{\s_i^2}{\lambda+\gamma\s_i^2} + \frac{1}{\os_j^2}}
   + \gamma \sum_{i=1}^n \s_i^2\c_i^2 -\gamma c_2^2 \nonumber \\
  \label{eq:corrridgeta18a0b4c10d1e0}
\end{eqnarray}
Taking derivatives with respect $c_2$ and $\gamma$ and equalling them to zero, we also have
\begin{eqnarray}
\frac{d \bar{f}_{0,r}(\hat{c}_3,c_2,\hat{\nu}_1,\gamma,\hat{\gamma}_1)}{d c_2} =
\frac{1}{m}\sum_{j=1}^{m}  \frac{2 c_2}{\frac{1}{\os_j^2}+\frac{1}{\alpha n}\sum_{i=1}^n \frac{\s_i^2}{\lambda+\gamma\s_i^2}}   -2\gamma c_2 =0.  \label{eq:corrridgeta18a0b4c11}
\end{eqnarray}
and
\begin{eqnarray}
\frac{d \bar{f}_{0,r}(\hat{c}_3,c_2,\hat{\nu}_1,\gamma,\hat{\gamma}_1)}{d \gamma}
& = &
  - \sum_{i=1}^n \frac{2\gamma\s_i^4\c_i^2}{\lambda+\gamma\s_i^2}
  + \sum_{i=1}^n \frac{\gamma^2\s_i^6\c_i^2}{\lp \lambda +\gamma\s_i^2\rp^2} \nonumber \\
  & &
 + \frac{1}{m}\sum_{j=1}^{m}\frac{\lp c_2^2+\sigma^2\q_j\rp}{\lp \frac{1}{\os_j^2}+ \frac{1}{\alpha n}\sum_{i=1}^n \frac{\s_i^2}{\lambda+\gamma\s_i^2}\rp^2}
 \lp \frac{1}{\alpha n}\sum_{i=1}^n \frac{\s_i^4}{\lp \lambda  +\gamma\s_i^2\rp^2}\rp
 + \sum_{i=1}^n \s_i^2\c_i^2 - c_2^2=0. \nonumber \\ \label{eq:corrridgeta18a0b4c12}
\end{eqnarray}
From (\ref{eq:corrridgeta18a0b4c11}) we find that the optimal $\hat{\gamma}$ satisfies the following equation
\begin{eqnarray}
\hat{\gamma}= \frac{1}{m}\sum_{j=1}^{m}  \frac{1}{\frac{1}{\os_j^2}+\frac{1}{\alpha n}\sum_{i=1}^n \frac{\s_i^2}{\lambda+\hat{\gamma}\s_i^2}}.  \label{eq:corrridgeta18a0b4c13}
\end{eqnarray}
Moreover, we then (for such a $\hat{\gamma}$) have from (\ref{eq:corrridgeta18a0b4c12}) that the optimal $\hat{c}_2$ satisfies
\begin{eqnarray}
   \sum_{i=1}^n \frac{\lambda^2 \s_i^2\c_i^2}{\lp \lambda +\hat{\gamma}\s_i^2\rp^2}
 + \frac{1}{m}\sum_{j=1}^{m}\frac{\lp c_2^2+\sigma^2\q_j\rp}{\lp \frac{1}{\os_j^2}+ \frac{1}{\alpha n}\sum_{i=1}^n \frac{\s_i^2}{\lambda+\gamma\s_i^2}\rp^2}
 \lp \frac{1}{\alpha n}\sum_{i=1}^n \frac{\s_i^4}{\lp \lambda  +\gamma\s_i^2\rp^2}\rp
   - c_2^2=0.  \label{eq:corrridgeta18a0b4c14}
\end{eqnarray}
After setting
\begin{eqnarray}
\bar{a}_{2,r}
&= &
  \frac{1}{n}\sum_{i=1}^n \frac{\s_i^4}{\lp \lambda+\hat{\gamma}\s_i^2\rp^2}\frac{1}{m}\sum_{j=1}^m \frac{1}{\lp \frac{\alpha}{\os_j^2}+ \frac{1}{  n}\sum_{i=1}^n \frac{\s_i^2}{\lambda +\hat{\gamma}\s_i^2}\rp^2} \nonumber  \\
\bar{a}_{3,r}
&= &
  \frac{1}{n}\sum_{i=1}^n \frac{\s_i^4}{\lp \lambda+\hat{\gamma}\s_i^2\rp^2}
  \frac{1}{m}\sum_{j=1}^m \frac{\q_j}{\lp \frac{\alpha}{\os_j^2}+ \frac{1}{  n}\sum_{i=1}^n \frac{\s_i^2}{\lambda +\hat{\gamma}\s_i^2}\rp^2},  \label{eq:corrridgeta18a0b4c15}
\end{eqnarray}
we then from (\ref{eq:corrridgeta18a0b4c14}) obtain
\begin{eqnarray}
\hat{c}_2^2= \frac{  \sum_{i=1}^n \frac{\lambda^2 \s_i^2\c_i^2}{\lp \lambda+\hat{\gamma}\s_i^2\rp^2} + \alpha \sigma^2\bar{a}_{3,r} }{1- \alpha \bar{a}_{2,r}}.  \label{eq:corrridgeta18a0b4c16}
\end{eqnarray}
Combining (\ref{eq:corrridgeta18a0b4c10d1}) and (\ref{eq:corrridgeta18a0b4c10d1e0}) together with (\ref{eq:corrridgeta18a0b4c10}), we find
\begin{eqnarray}
\lim_{n\rightarrow\infty}  \mE_{\g,\h,\v}\bar{f}_{rd,r}(\g,\h,\v)
   & = & \lim_{n\rightarrow\infty} \min_{c_2\geq 0} \max_{\gamma}  f_0(\hat{c}_3,c_2,\hat{\nu}_1,\gamma,\hat{\gamma}_1) \nonumber \\
   & = & \lim_{n\rightarrow\infty} f_0(\hat{c}_3,\hat{c}_2,\hat{\nu}_1,\hat{\gamma},\hat{\gamma}_1) \nonumber \\
   & = &  \lim_{n\rightarrow\infty} \lp    - \sum_{i=1}^n \frac{\hat{\gamma}^2\s_i^4\c_i^2}{\lambda+\hat{\gamma}\s_i^2}
+  \frac{1}{m}\sum_{j=1}^{m} \frac{   \hat{c}_2^2+\sigma^2\q_j  }{\frac{1}{\alpha n}\sum_{i=1}^n \frac{\s_i^2}{\lambda+\hat{\gamma}\s_i^2} + \frac{1}{\os_j^2}}
   + \hat{\gamma} \sum_{i=1}^n \s_i^2\c_i^2 - \hat{\gamma} \hat{c}_2^2 \rp \nonumber \\
   & = &  \lim_{n\rightarrow\infty} \lp    \sum_{i=1}^n \frac{\lambda \hat{\gamma}\s_i^2\c_i^2}{\lambda+\hat{\gamma}\s_i^2}
+  \frac{1}{m}\sum_{j=1}^{m} \frac{   \hat{c}_2^2+\sigma^2\q_j  }{\frac{1}{\alpha n}\sum_{i=1}^n \frac{\s_i^2}{\lambda+\hat{\gamma}\s_i^2} + \frac{1}{\os_j^2}}
   -\hat{\gamma} \hat{c}_2^2 \rp \nonumber \\
    & = &   \lim_{n\rightarrow\infty}  \sum_{i=1}^n \frac{\lambda \hat{\gamma}\s_i^2\c_i^2}{\lambda +\hat{\gamma}\s_i^2}
+ \sigma^2 \frac{1}{m}\sum_{j=1}^{m} \frac{ \q_j  }{\frac{1}{\alpha n}\sum_{i=1}^n \frac{\s_i^2}{\lambda+\hat{\gamma}\s_i^2} + \frac{1}{\os_j^2}}, \label{eq:corrridgeta18a10}
\end{eqnarray}
where $\hat{c}_2$ and $\hat{\gamma}$ are given through (\ref{eq:corrridgeta18a0b4c13}), (\ref{eq:corrridgeta18a0b4c15}), and (\ref{eq:corrridgeta18a0b4c16}). Taking the very same $\hat{c}_2$ and $\hat{\gamma}$, we then from (\ref{eq:corrridgeta18a0b4c10d0e3}) and (\ref{eq:corrridgeta18a0b4c8}) find the optimal $c_3$ and $\nu_1$
\begin{eqnarray}
\hat{c}_3^2
 =  \frac{1}{m}\sum_{j=1}^{m} \frac{1}{\lp \frac{1}{\alpha n}\sum_{i=1}^n \frac{\s_i^2}{\lambda+\hat{\gamma}\s_i^2}  + \frac{1}{\os_j^2}\rp^2} \frac{\hat{c}_2^2+\sigma^2\q_j}{\os_j^2}.
  \label{eq:corrridgeta18a10b0}
\end{eqnarray}
and
\begin{eqnarray}
\hat{\nu}_1 & = &
 \frac{2\sqrt{\alpha}\sqrt{l_4}}{\frac{1}{n}\sum_{i=1}^n \frac{\s_i^2}{\lambda+\hat{\gamma}\s_i^2}}\nonumber \\
 & = &
 \frac{2\sqrt{\alpha}}{\frac{1}{n}\sum_{i=1}^n \frac{\s_i^2}{\lambda+\hat{\gamma}\s_i^2}}
\sqrt{\frac{1}{m}\sum_{j=1}^{m} \lp \hat{c}_2^2+\sigma^2\q_j\rp -\frac{1}{m}\sum_{j=1}^{m} \frac{1}{\hat{\gamma}_1  + \frac{1}{\os_j^2}} \frac{\hat{c}_2^2+\sigma^2\q_j}{\os_j^2}
 -\hat{\gamma}_1 \hat{c}_3^2
} \nonumber \\
& = &
 \frac{2\sqrt{\alpha}}{\frac{1}{n}\sum_{i=1}^n \frac{\s_i^2}{\lambda+\hat{\gamma}\s_i^2}}
\sqrt{\hat{\gamma}_1\frac{1}{m}\sum_{j=1}^{m} \frac{1}{\hat{\gamma}_1  + \frac{1}{\os_j^2}} \frac{\hat{c}_2^2+\sigma^2\q_j}{\os_j^2}
 -\hat{\gamma}_1 \hat{c}_3^2
} \nonumber \\
& = &
 \frac{2\sqrt{\alpha}}{\frac{1}{n}\sum_{i=1}^n \frac{\s_i^2}{\lambda+\hat{\gamma}\s_i^2}}
\sqrt{\hat{\gamma}_1\frac{1}{m}\sum_{j=1}^{m} \frac{1}{\hat{\gamma}_1  + \frac{1}{\os_j^2}} \frac{\hat{c}_2^2+\sigma^2\q_j}{\os_j^2}
 -\hat{\gamma}_1
 \frac{1}{m}\sum_{j=1}^{m} \frac{1}{\lp \hat{\gamma}_1  + \frac{1}{\os_j^2}\rp^2} \frac{\hat{c}_2^2+\sigma^2\q_j}{\os_j^2}
} \nonumber \\
& = &
 \frac{2\sqrt{\alpha}\hat{\gamma}_1}{\frac{1}{n}\sum_{i=1}^n \frac{\s_i^2}{\lambda+\hat{\gamma}\s_i^2}}
\sqrt{
 \frac{1}{m}\sum_{j=1}^{m} \frac{\hat{c}_2^2+\sigma^2\q_j}{\lp \hat{\gamma}_1  + \frac{1}{\os_j^2}\rp^2}
 } \nonumber \\
& = &
 \frac{2}{\sqrt{\alpha}}
\sqrt{
 \frac{1}{m}\sum_{j=1}^{m} \frac{\hat{c}_2^2+\sigma^2\q_j}{\lp \hat{\gamma}_1  + \frac{1}{\os_j^2}\rp^2}
 } \nonumber \\
& = &
 \frac{2}{\sqrt{\alpha}}
\sqrt{
 \frac{1}{m}\sum_{j=1}^{m} \frac{\hat{c}_2^2+\sigma^2\q_j}{\lp \frac{1}{\alpha n}\sum_{i=1}^n \frac{\s_i^2}{\lambda+\hat{\gamma}\s_i^2}  + \frac{1}{\os_j^2}\rp^2}}. \label{eq:corrridgeta18a10b1}
\end{eqnarray}
We also have, analogously to (\ref{eq:ridgeta18a0b4c17d1e0}), for the optimal $\hat{c}_4^2=\lim_{n\rightarrow \infty} \mE_{\g,\h,\v}\|\hat{\x}\|_2^2$
\begin{eqnarray}
\hat{c}_4^2 & = & \lim_{n\rightarrow \infty} \mE_{\g,\h,\v}\| \hat{\x}\|_2^2 \nonumber \\
 & = & \lim_{n\rightarrow \infty} \mE_{\g,\h,\v}\left \| \lp \lambda I +\hat{\gamma}\Sigma\Sigma\rp^{-1}\lp -\frac{\hat{\nu}_1}{2\sqrt{n}} \Sigma\h +\gamma\Sigma\Sigma\c\rp \right \|_2^2 \nonumber \\
 & = &
\lim_{n\rightarrow \infty}  \frac{\hat{\nu}_1^2}{4n}\sum_{i=1}^n\frac{\s_i^2}{\lp \lambda+\hat{\gamma}\s_i^2\rp^2}
+
\lim_{n\rightarrow \infty}  \sum_{i=1}^n\frac{\hat{\gamma}^2\s_i^4\c_i^2}{\lp \lambda+\hat{\gamma}\s_i^2\rp^2}. \label{eq:corrridgeta18a0b4c17d1e0}
\end{eqnarray}

The above discussion in summarized in the following theorem.
\begin{theorem}(Characterization of Ridge estimator -- Row-correlated $X$) For a given fixed real positive number $\lambda$, assume the setup of Lemma \ref{lemma:corrridgelemma1} and Theorem \ref{thm:corrridgethm1} with  $\bar{\xi}_{rr}(\lambda)$ as in (\ref{eq:corrridgerandlincons1a1}) or (\ref{eq:corrridgerandlincons2}) and $\bar{\xi}_{rd,r}$ as in (\ref{eq:ridgeta16}). Let $\hat{\beta}=\bar{\beta}_{rr}(\lambda)$ where $\bar{\beta}_{rr}(\lambda)$ is as in
(\ref{eq:corrmodel014}). Also, let the Ridge estimator prediction risk, $R(\bar{\beta},\hat{\beta})=R(\bar{\beta},\bar{\beta}_{rr}(\lambda))$, be defined via (\ref{eq:model020}). Assuming a large $n$ linear regime with $\alpha=\lim_{n\rightarrow\infty}\frac{m}{n}$, let unique $\hat{\gamma}$ be such that
\begin{eqnarray}
\hat{\gamma}= \lim_{m\rightarrow\infty}\frac{1}{m}\sum_{j=1}^{m}  \frac{1}{\frac{1}{\os_j^2}+\lim_{n\rightarrow\infty}\frac{1}{\alpha n}\sum_{i=1}^n \frac{\s_i^2}{\lambda+\hat{\gamma}\s_i^2}} .  \label{eq:corrridgethmaaeq1}
\end{eqnarray}
Set
\begin{eqnarray}
\bar{a}_{2,r}
&= &
\lim_{n\rightarrow\infty}  \frac{1}{n}\sum_{i=1}^n \frac{\s_i^4}{\lp \lambda+\hat{\gamma}\s_i^2\rp^2}
\lim_{m\rightarrow\infty}\frac{1}{m}\sum_{j=1}^m \frac{1}{\lp \frac{\alpha}{\os_j^2}+ \frac{1}{  n}\sum_{i=1}^n \frac{\s_i^2}{\lambda +\hat{\gamma}\s_i^2}\rp^2} \nonumber  \\
\bar{a}_{3,r}
&= &
\lim_{n\rightarrow\infty}  \frac{1}{n}\sum_{i=1}^n \frac{\s_i^4}{\lp \lambda+\hat{\gamma}\s_i^2\rp^2}
\lim_{m\rightarrow\infty}  \frac{1}{m}\sum_{j=1}^m \frac{\q_j}{\lp \frac{\alpha}{\os_j^2}+ \frac{1}{  n}\sum_{i=1}^n \frac{\s_i^2}{\lambda +\hat{\gamma}\s_i^2}\rp^2}.  \label{eq:corrridgethmaaeq2}
\end{eqnarray}
One then has \vspace{-.0in}
\begin{eqnarray}
 \lim_{n\rightarrow\infty} \mE_{Z,\v}\bar{\xi}_{rr}(\lambda)
 &= &\bar{\xi}_{rp,r}=  \bar{\xi}_{rd,r}  = \lim_{n\rightarrow\infty} \mE_{\g,\h,\v} \bar{f}_{rd,r}(\g,\h,\v) \nonumber \\
& = &
\lim_{n\rightarrow\infty}  \sum_{i=1}^n \frac{\lambda \hat{\gamma}\s_i^2\c_i^2}{\lambda+\hat{\gamma}\s_i^2}
+
\sigma^2\lim_{m\rightarrow\infty}  \frac{1}{m}\sum_{j=1}^{m} \frac{ \q_j  }{\frac{1}{\os_j^2}+ \lim_{n\rightarrow\infty} \frac{1}{\alpha n}\sum_{i=1}^n \frac{\s_i^2}{\lambda+\hat{\gamma}\s_i^2} }
 \nonumber \\
 \lim_{n\rightarrow\infty} \mE_{Z,\v} R(\bar{\beta},\beta_{rr}(\lambda))
& = &
\lim_{n\rightarrow\infty} \hat{c}_2^2
  =
\frac{\lim_{n\rightarrow\infty}  \sum_{i=1}^n \frac{\lambda^2 \s_i^2\c_i^2}{\lp \lambda+\hat{\gamma}\s_i^2\rp^2} + \alpha \sigma^2\bar{a}_{3,r} }{1- \alpha \bar{a}_{2,r}} \nonumber \\
 & & \lim_{n\rightarrow\infty} \hat{\nu}_1
  =
 \frac{2}{\sqrt{\alpha}}
\sqrt{
\lim_{m\rightarrow\infty}  \frac{1}{m}\sum_{j=1}^{m} \frac{\hat{c}_2^2+\sigma^2\q_j}{\lp \lim_{n\rightarrow\infty} \frac{1}{\alpha n}\sum_{i=1}^n \frac{\s_i^2}{\lambda+\hat{\gamma}\s_i^2}  + \frac{1}{\os_j^2}\rp^2}}
    \nonumber \\
 \lim_{n\rightarrow\infty} \mE_{Z,\v} \|\y-X\bar{\beta}_{rr}(\lambda)\|_2^2
& = &
\lim_{n\rightarrow\infty} \hat{c}_3^2
  = \lim_{m\rightarrow\infty}\frac{1}{m}\sum_{j=1}^{m} \frac{1}{\lp \lim_{n\rightarrow\infty} \frac{1}{\alpha n}\sum_{i=1}^n \frac{\s_i^2}{\lambda+\hat{\gamma}\s_i^2}  + \frac{1}{\os_j^2}\rp^2} \frac{\hat{c}_2^2+\sigma^2\q_j}{\os_j^2} \nonumber \\
 \lim_{n\rightarrow\infty} \mE_{Z,\v} \|\bar{\beta}_{rr}(\lambda)\|_2^2
& = &
\lim_{n\rightarrow\infty} \hat{c}_4^2
 =
 \lim_{n\rightarrow \infty}  \frac{\hat{\nu}_1^2}{4n}\sum_{i=1}^n\frac{\s_i^2}{\lp \lambda+\hat{\gamma}\s_i^2\rp^2}
+
\lim_{n\rightarrow \infty}  \sum_{i=1}^n\frac{\hat{\gamma}^2\s_i^4\c_i^2}{\lp \lambda+\hat{\gamma}\s_i^2\rp^2},
\label{eq:corrridgethmaaeq3}
\end{eqnarray}
and for any fixed $\epsilon>0$
\begin{eqnarray}
 \lim_{n\rightarrow\infty}\mP_{Z,\v} \lp  (1-\epsilon)\mE_{Z,\v}\bar{\xi}_{rr}(\lambda) \leq  \bar{\xi}_{rr}(\lambda) \leq (1+\epsilon)\mE_{Z,\v}\bar{\xi}_{rr}(\lambda)\rp
& \longrightarrow & 1 \nonumber \\
 \lim_{n\rightarrow\infty}\mP_{Z,\v} \lp  (1-\epsilon)\mE_{Z,\v} R(\bar{\beta},\bar{\beta}_{rr}(\lambda))  \leq  R(\bar{\beta},\bar{\beta}_{rr}(\lambda)) \leq (1+\epsilon)\mE_{Z,\v}R(\bar{\beta},\bar{\beta}_{rr}(\lambda))\rp
& \longrightarrow & 1  \nonumber \\
 \lim_{n\rightarrow\infty}\mP_{\g,\h,\v} \lp  (1-\epsilon)\hat{\nu}_1 \leq  \nu_1 \leq (1+\epsilon)\hat{\nu}_1\rp
& \longrightarrow & 1 \nonumber \\
 \lim_{n\rightarrow\infty}\mP_{Z,\v} \lp  (1-\epsilon)\mE_{Z,\v}\|\y-X\bar{\beta}_{rr}(\lambda)\|_2^2 \leq \|\y-X\bar{\beta}_{rr}(\lambda)\|_2^2  \leq (1+\epsilon)\mE_{Z,\v}\|\y-X\bar{\beta}_{rr}(\lambda)\|_2^2 \rp
& \longrightarrow & 1\nonumber \\
 \lim_{n\rightarrow\infty}\mP_{Z,\v} \lp  (1-\epsilon)\mE_{Z,\v}\|\bar{\beta}_{rr}(\lambda)\|_2^2 \leq \|\bar{\beta}_{rr}(\lambda)\|_2^2  \leq (1+\epsilon)\mE_{Z,\v}\|\bar{\beta}_{rr}(\lambda)\|_2^2 \rp
& \longrightarrow & 1. \nonumber \\\label{eq:corrridgethmaaeq4}
\end{eqnarray}
\label{thm:corrridgethm2}
\end{theorem}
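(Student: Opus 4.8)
The plan is to run the same four-step RDT argument already used for Theorems~\ref{thm:thm2} and~\ref{thm:ridgethm2}, now driven by the explicit random-dual evaluation carried out above in (\ref{eq:corrridgeta18a0})--(\ref{eq:corrridgeta18a0b4c17d1e0}). First I would invoke Lemma~\ref{lemma:corrridgelemma1} to identify $\bar{\xi}_{rr}(\lambda)$ with the random primal $\bar{f}_{rp,r}(Z,\v)$, and then Theorem~\ref{thm:corrridgethm1} (Gordon's probabilistic comparison inequality) to obtain the one-sided bound $\bar{\xi}_{rd,r}\leq\bar{\xi}_{rp,r}$ together with the accompanying probability transfer (\ref{eqcor:ridgeta17}). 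The closed forms asserted in (\ref{eq:corrridgethmaaeq3}) are then read off directly: the value $\bar{\xi}_{rd,r}$ from (\ref{eq:corrridgeta18a10}), the risk $\hat{c}_2^2$ from (\ref{eq:corrridgeta18a0b4c16}), the multiplier $\hat{\nu}_1$ from (\ref{eq:corrridgeta18a10b1}), the residual quantity $\hat{c}_3^2$ from (\ref{eq:corrridgeta18a10b0}), and $\hat{c}_4^2=\lim_{n\to\infty}\mE\|\hat{\x}\|_2^2$ from (\ref{eq:corrridgeta18a0b4c17d1e0}), where $\hat{\gamma}$, $\hat{\gamma}_1$, $\bar{a}_{2,r}$, $\bar{a}_{3,r}$ are as in (\ref{eq:corrridgethmaaeq1}), (\ref{eq:corrridgeta18a0b4c10d0e2}), and (\ref{eq:corrridgethmaaeq2}).

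Second, I would upgrade the Gordon bound to an equality. For $\lambda>0$ the ridge objective in (\ref{eq:corrridgerandlincons1a1}) is strongly convex in $\beta$ with an affine constraint, so the random primal is a convex program, and every Lagrangian reformulation used along the way (introducing the scalar variables $c_2$, $c_{3,s}$, $\gamma$, $\gamma_1$ and then $\nu_1$) preserves a concave--convex minimax structure, each inner maximization being over a single scalar multiplier of a convex quadratic and each inner minimization being of a convex quadratic. Hence strong Lagrange duality holds at every step, the random-dual optimum is attained, and the reversal arguments of \cite{StojnicGorEx10,StojnicRegRndDlt10} apply verbatim to give $\bar{\xi}_{rp,r}=\bar{\xi}_{rd,r}$, exactly as in the proof of Theorem~\ref{thm:thm2}.

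Third, I would establish the concentration statements (\ref{eq:corrridgethmaaeq4}). The maps $(Z,\v)\mapsto\bar{f}_{rp,r}(Z,\v)$ and $(\g,\h,\v)\mapsto\bar{f}_{rd,r}(\g,\h,\v)$ are Lipschitz functionals of the underlying Gaussian vectors, so Gaussian concentration places each of $\bar{f}_{rd,r}$, $c_2$, $c_3$, $\nu_1$ in a $(1\pm\epsilon)$ window about its mean; combining this with the tightness established in the previous step and the chain (\ref{eqcor:ridgeta17}) (mirroring (\ref{eq:proofthm2eq2})--(\ref{eq:proofthm2eq4})) transfers the concentration to the primal quantities $\bar{\xi}_{rr}(\lambda)$, the residual norm, and $\|\bar{\beta}_{rr}(\lambda)\|_2^2$. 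For the prediction risk I would use the identity $R(\bar{\beta},\bar{\beta}_{rr}(\lambda))=(\bar{\beta}-\bar{\beta}_{rr}(\lambda))^TV\Sigma\Sigma V^T(\bar{\beta}-\bar{\beta}_{rr}(\lambda))=c_2^2$, which follows from (\ref{eq:model020}), the change of variables (\ref{eq:ta18a0b0}), and (\ref{eq:ta18a0b0c0}) exactly as in (\ref{eq:proofthm2eq1a0}); and the residual-norm and $\|\beta\|_2^2$ characterizations follow by tracking the optimal auxiliary variables $c_{3,s}^2=\|\y-X\beta\|_2^2$ (through the scaling $c_3\to c_{3,s}/\sqrt{m}$) and $\|\beta\|_2^2=\|V^T\beta\|_2^2=\|\x\|_2^2$ through the same stationarity conditions.

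I expect the principal difficulty to be the second step in this new setting: one must verify that, despite the extra multiplier $\gamma_1$ and the more intricate objective $\bar{f}_{0,r}(c_3,c_2,\nu_1,\gamma,\gamma_1)$ generated by the $\overline{\overline{\Sigma}}^{-1}$ terms, the reduced program retains the concave-in-$(\gamma,\gamma_1)$, convex-in-$(c_2,c_3)$ structure required both for strong Lagrange duality at each intermediate step and for the reversal arguments of \cite{StojnicGorEx10,StojnicRegRndDlt10}, and that the fixed-point equation (\ref{eq:corrridgethmaaeq1}) for $\hat{\gamma}$ has a unique solution (via monotonicity of its right-hand side in $\hat{\gamma}$). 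A secondary technical point is the joint handling of the two limits $n\to\infty$ and $m\to\infty$ tied by $m/n\to\alpha$: this needs convergence of the empirical spectral averages $\frac{1}{n}\sum_i \s_i^2/(\lambda+\hat{\gamma}\s_i^2)$ and $\frac{1}{m}\sum_j \q_j/(\hat{\gamma}_1+1/\os_j^2)$, which is precisely what the standing hypothesis that all considered limiting quantities are well defined and bounded supplies.
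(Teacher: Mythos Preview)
Your proposal is correct and follows essentially the same approach as the paper: the paper's proof simply states that the result follows from the preceding discussion (equations (\ref{eq:corrridgeta18a0})--(\ref{eq:corrridgeta18a0b4c17d1e0})), trivial concentrations of $\bar{f}_{rd,r}(\g,\h,\v)$, $c_3$, $c_2$, and $\nu_1$, and literally the same arguments as in the proofs of Theorems~\ref{thm:thm2} and~\ref{thm:ridgethm2}. Your write-up is in fact more explicit than the paper's own proof, in particular in flagging the need to verify the concave--convex structure in the presence of the additional multiplier $\gamma_1$ and the uniqueness of the fixed point (\ref{eq:corrridgethmaaeq1}); the paper does not spell these out and simply subsumes them under ``underlying convexity'' and ``unique $\hat{\gamma}$.''
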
\vspace{-.17in}
\begin{proof}
Follows from the above discussion, trivial concentrations of $\bar{f}_{rd,r}(\g,\h,\v)$, $c_3$, $c_2$, and $\nu_1$, and utilization of literally the asme arguments as in the proof of Theorems \ref{thm:thm2} and \ref{thm:ridgethm2}.
\end{proof}

\vspace{.1in}
\noindent \underline{ 4) \textbf{\emph{Double checking strong random duality:}}} As was the case when we considered the uncorrelated rows of $X$, due to the underlying convexity one has that the reversal arguments of \cite{StojnicGorEx10,StojnicRegRndDlt10} and the strong random duality  are in place as well.

\subsubsection{Additional observations}
\label{sec:addobs}

The corresponding LS estimator results are obtained by taking $\lambda=0$ in the above theorem. The expressions do simplify a bit but not as they do in case of uncorrelated rows of $X$. We therefore skip stating them as a separate corollary. We also recall from the very beginning of this section, that the results for the GLS interpolator when rows of $X$ are correlated are identical to the ones obtained in Theorem \ref{thm:thm2} with $\bar{\sigma}$ as defined in (\ref{eq:corrmodel010a0b0}). Moreover, all results obtained for the GLS interpolator (with rows of $X$ correlated or uncorrelated) remain unaltered even if the rank of $A$ or $\Sigma$ is not full. In, fact, assuming that the $\mbox{rank}(A)=\mbox{rank}(\Sigma)=k$, where
\begin{eqnarray}
\kappa\triangleq \lim_{n\rightarrow\infty} \frac{k}{n}.\label{eq:lrklsridgethmaaeq4}
 \end{eqnarray}
then all GLS related results hold as long as $\kappa>\alpha$. When $\kappa<\alpha$, interpolation is trivially impossible. All results related to the Ridge estimator hold for any rank of $A$ or $\Sigma$.

\subsection{Numerical results}
\label{sec:numrescorr}

As earlier, we supplement the above theoretical results with the ones obtained through numerical simulations. We take for the covariance matrices
\begin{eqnarray}
\overline{\overline{A}} & = & \cA (q_y) \nonumber \\
A & = &\cA (q) \nonumber \\
\overline{A} &= & \cA (q_v).
  \label{eq:numuncorr3}
\end{eqnarray}
In Figure \ref{fig:fig2}, we take $q_y=0.7,q=0.5,q_v=0.4$, and $n=600$. As earlier, the full curves are the theoretical predictions and the dots are the corresponding simulated values. We again observe an excellent agreement between the theoretical predictions and simulations results.

\begin{figure}[h]
\centering
\centerline{\includegraphics[width=1\linewidth]{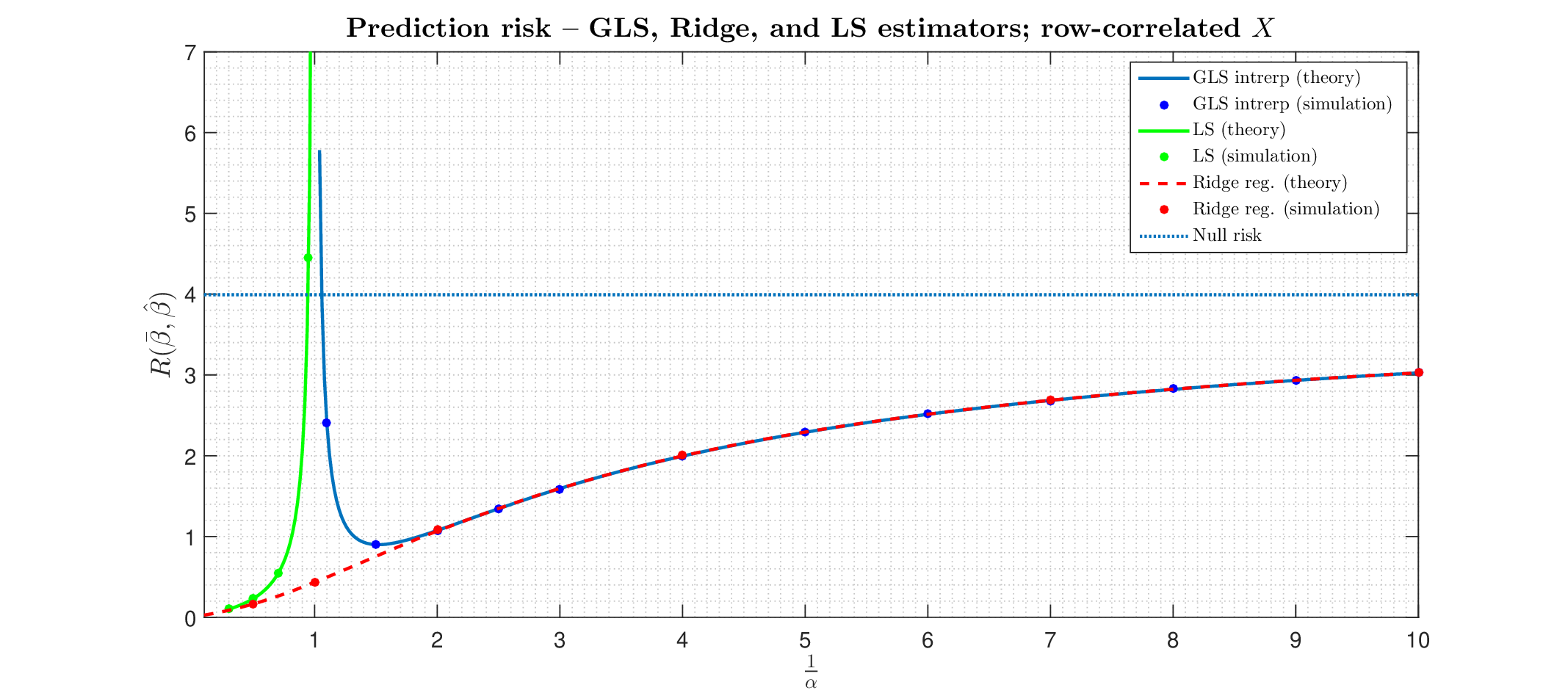}}
\caption{Prediction risk -- all three estimators (GLS, Ridge, and LS); row-correlated features $X$; Covariance matrices are: $\overline{\overline{A}}=\cA(q_y)$, $A=\cA(q)$, and $\overline{A}=\cA(q_v)$; $q_y=0.7,q=0.5,q_v=0.4$.}
\label{fig:fig2}
\end{figure}

In Figure \ref{fig:fig3}, we show what kind of effect intra-sample correlation can have on the prediction risk. All parameters take the same values as in Figure \ref{fig:fig2}  except $\alpha=0.5$ and  $qy$ which is varied between zero and one. Also to adjust that no correlation correspond to unitary covariance matrices, all ${\mathcal A}$'s are scaled by 2.  Keeping in mind the adopted convention that the full curves are the theoretical predictions and the dots are the corresponding simulated values, we again observe an excellent agreement between the theoretical predictions and simulations results. In low intra-sample correlated regime, we also observe that the risk is not always increasing as the correlation increases.

\begin{figure}[h]
\centering
\centerline{\includegraphics[width=1\linewidth]{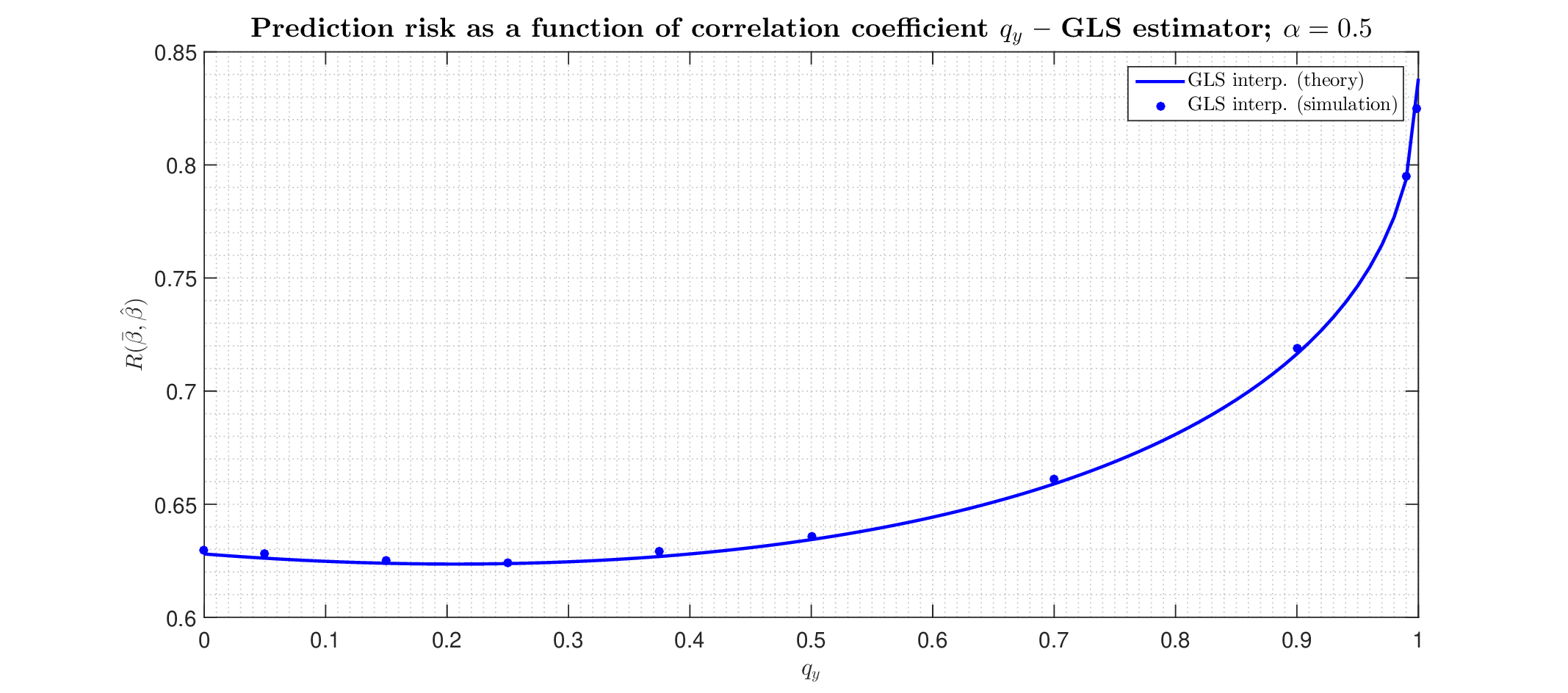}}
\caption{Prediction risk as a function of intra-sample correlation -- GLS;  Covariance matrices are: $\overline{\overline{A}}=\frac{1}{2}\cA(q_y)$, $A=\frac{1}{2}\cA(q)$, and $\overline{A}=\frac{1}{2}\cA(q_v)$; $q=0.5,q_v=0.4$; $q_y\in[0,1]$.}
\label{fig:fig3}
\end{figure}

\section{Conclusion}
\label{sec:conc}

We studied classical linear regression models and three of the most well known estimators associated with them: (i) minimum norm interpolators (generalized least squares (GLS)); (ii) ordinary least squares (LS);  and (ii) ridge estimators. A statistical context with Gaussian feature matrix and noise is considered. In addition to inter-feature (cross-sectional) correlations, the intra-sample ones for both the features and noise are considered as well. The (random) optimization programs that produce the above three estimators are statistically analyzed through a utilization of a powerful mathematical engine called \emph{Random Duality Theory} (RDT). Precise closed form characterizations of all optimizing quantities associated with all three optimization programs are obtained. Among these quantities, we particularly focused on the \emph{prediction risk} (generalization/testing error) and observed that it exhibits the well known non-monotonic (so-called double-descent) behavior as the over-parametrization increases. As our results are relatively simple closed forms, they explicitly show how the risk depends on all key model parameters, including the problem dimensions ratios and covariance matrices. When the intra-sample (or time-series) correlations are absent, our results precisely match the corresponding ones obtained via spectral methods in \cite{HMRT22,Dicker16,DobWag18,BHX20}.

The presented methodology is fairly generic and allows for a variety of extensions and/or generalizations. For example all those mentioned in companion paper \cite{Stojnicridgefrm24} apply here as well. Imperfections in observable features, their potential absence, and adversarial responses are only a few among many popular (yet practically very relevant) scenarios that received a strong attention in recent years in both machine learning and statistics communities. For each of these scenarios,various different types of estimators could be of interest as well. All these extensions can be handled through the program presented here without any further conceptually novel insights. As is usually the case within the context of RDT, the concrete technical realizations of such extensions are problem specific and as such need to be tailored for each of them separately. We will present these adjustments together with concrete results that they produce in separate papers.

As  pointed out in \cite{StojnicRegRndDlt10}, the RDT considerations do not require the standard Gaussianity. We have utilized it, however, as it allowed for the presentation to look neater. The needed conceptual adjustments, as one deviates from the Gaussianity, are minimal (the underlying writing, however, becomes a bit lengthy and cumbersome). An application of the Lindeberg central limit theorem variant (see, e.g.,  \cite{Lindeberg22}) is in our view the most elegant way to extend the RDT results to various different statistics and the approach of \cite{Chatterjee06} is particularly convenient in that regard.

\begin{singlespace}
\bibliographystyle{plain}
\bibliography{nflgscompyxRefs}
\end{singlespace}

\end{document}